\definecolor{darkblue}{rgb}{0.0,0.0,0.65}
\definecolor{darkred}{rgb}{0.68,0.05,0.0}
\definecolor{darkgreen}{rgb}{0.0,0.29,0.29}
\definecolor{darkpurple}{rgb}{0.47,0.09,0.29}
\DeclareMathOperator{\E}{\mathbb{E}}
\newcommand{\N}{\mathcal{N}}
\newcommand{\R}{\mathbb{R}}
\renewcommand{\S}{{\mathcal S}}
\newtheorem{theorem}{Theorem} 
\newtheorem{lemma}{Lemma}
\newtheorem{remark}{Remark}
\DeclareMathOperator{\Dist}{Dist}
\newcommand{\softatt}{ \mathrm{Attn}^{\sf smax}}
\newcommand{\att}{\mathrm{Attn}}
\newcommand{\atth}{\mathrm{Attn}^\sigma} 
\newcommand{\relu}{\sigma}
\renewcommand{\aa}{M}
\newcommand{\bb}{A}
\newcommand{\bbb}{a} 
\newcommand{\cc}{b}
\newcommand{\ttbb}{\widetilde{\bb}}
\newcommand{\ttcc}{\widetilde{\cc}}
\newcommand{\wstar}{w_\star}
\newcommand{\twstar}{\widetilde{w}_\star}
\newcommand{\tz}[1]{{z^{(#1)}}}
\newcommand{\tx}[1]{x^{(#1)}}
\newcommand{\ttx}[1]{\widetilde{x}^{(#1)}}
\newcommand{\ty}[1]{y^{(#1)}}
\newcommand{\NF}{{\sf TF}} 
\newcommand{\ttf}{f_{\sf lower}} 
\newcommand{\MM}{\mathsf{G}}
\newcommand{\emphh}[1]{\textbf{\emph{#1}}}
\DeclareMathOperator{\tr}{Tr} 
\newcommand{\inp}[2]{\left \langle #1,#2\right\rangle}
\newcommand{\inpp}[2]{ \langle #1,#2\rangle}
\newcommand{\wgd}{w^{\sf gd}}
\newcommand*\lin[1]{\left\langle #1 \right\rangle}
\newcommand*\lrb[1]{\left[ #1 \right]}
\newcommand*\lrn[1]{\left\| #1 \right\|}
\newcommand*\lrp[1]{\left( #1 \right)}
\newcommand*\lrbb[1]{\left\{ #1 \right\}}
\newcommand{\norm}[1]{\left\| #1 \right\|}
\newcommand{\e}{\mathbf{e}}
\newcommand{\diag}{\mathrm{diag}}
\newcommand\numberthis{\addtocounter{equation}{1}\tag{\theequation}}
\newcommand*\at[2]{\left.#1\right|_{#2}}
\newcommand{\US}{U_\Sigma}
\newcommand{\ostar}{\mathbin{\mathpalette\make@circled\star}}
\newcommand{\make@circled}[2]{%
\ooalign{$\m@th#1\smallbigcirc{#1}$\cr\hidewidth$\m@th#1#2$\hidewidth\cr}%
}
\newcommand{\smallbigcirc}[1]{%
\vcenter{\hbox{\scalebox{0.77778}{$\m@th#1\bigcirc$}}}%
}
\newcommand{\xc}[1]{{\color{black!2!green} [Xiang: #1]}}
\title{Transformers learn to implement preconditioned gradient descent for in-context learning}
\author{ Kwangjun Ahn\thanks{Equal contribution, alphabetical order.}\\
MIT EECS/LIDS\\  
\texttt{kjahn@mit.edu} 
\And
Xiang Cheng$^*$\\
MIT LIDS\\  
\texttt{chengx@mit.edu} 
\And
Hadi Daneshmand$^*$\\
MIT LIDS/FODSI\\  
\texttt{hdanesh@mit.edu} 
\And
Suvrit Sra\\
TU Munich / MIT\\  
\texttt{suvrit@mit.edu} 
}
\begin{document}

\maketitle

\begin{abstract}   
Several recent works demonstrate that transformers can implement algorithms like gradient descent. By a careful construction of weights, these works show that multiple layers of transformers are expressive enough to simulate iterations of gradient descent. Going beyond the question of expressivity, we ask: \emph{Can transformers learn to implement such algorithms by training over random problem instances?} To our knowledge, we make the first theoretical progress on this question via an analysis of the loss landscape for linear transformers trained over random instances of linear regression. For a single attention layer, we prove the global minimum of the training objective implements a single iteration of preconditioned gradient descent. Notably, the preconditioning matrix not only adapts to the input distribution but also to the variance induced by data inadequacy.  For a transformer with $L$ attention layers, we prove certain critical points of the training objective implement $L$ iterations of preconditioned gradient descent. Our results call for future theoretical studies on learning algorithms by training transformers. 
\end{abstract} 

\section{Introduction} 
In-context learning (ICL) is the striking capability of large language models: Given a prompt containing examples and a query, the transformer produces the correct output based on the context provided by the examples, \emph{without adapting its parameters}~\citep{brown2020language,lieber2021jurassic, rae2021scaling, black2022gpt}. This property has become the focus of body of recent research that aims to shed light on the underlying mechanism of large language models~\citep{garg2022can,akyurek2022learning,von2022transformers,li2016learning,min2021metaicl,xie2021explanation,elhage2021mathematical,olsson2022context}.

A line of research studies ICL via the expressive power of transformers. Transformer architectures are powerful  Turing machines,  capable of implementing various algorithms~\citep{perez2021attention,wei2022statistically}. Given an in-context prompt,    
\cite{edelman2022inductive,olsson2022context} argue that transformers are able to implement algorithms through the recurrence of multi-head attentions to extract coarse information from raw input prompts. \citet{akyurek2022learning,von2022transformers} assert that transformers can implement gradient descent on linear regression encoded in a given input prompt. It is thought provoking that transformers can implement such algorithms.

Although transformers are universal machines to implement algorithms, they need specific parameter configurations for achieving these implementations. In practice, their parameters are adjusted via training using non-convex optimization over random problem instances. Hence, it remains unclear whether this non-convex optimization can be used to learn algorithms. The present paper investigates \emph{the possibility of learning algorithms via training over random problem instances.}

More specifically, we investigate the learning of gradient-based methods.  It is hard to mathematically formulate what it means to learn gradient descent for general functions with transformers. Yet, \citet{garg2022can} elegantly examine it in the specific setting of ICL for learning functions. Empirical evidence suggests that transformers indeed learn to implement gradient descent, after training on random instances of linear regression~\citep{garg2022can,akyurek2022learning,von2022transformers}. % This proclivity for learning gradient descent is somewhat surprising,  as the transformer is over-parameterized and thus expressive enough to implement various complicated optimization methods that may work just as well.  
Motivated by these observations, we theoretically investigate the loss landscape of a simple transformer architecture based on \emphh{ attention without softmax}~\citep{schlag2021linear,von2022transformers} (see \autoref{sec:problem} for details).

\emphh{Summary of our main results.} 
Our main contributions are the following:
\begin{list}{$\blacktriangleright$}{\leftmargin=1.5em}
\vspace*{-6pt}
\setlength{\itemsep}{1pt}
\item We provide a complete characterization of the global optimum of a single-layer linear transformer. In particular, we observe that, with the optimal parameters, the transformer implements a single step of preconditioned gradient descent. Notably, the preconditioning matrix not only adapts to the distribution of input data but also to the variance caused by data inadequacy. We present this result in \autoref{thm:main_single} in \autoref{sec:single}.

\item Next, we focus on a subset of the transformer parameter space, defined by a special sparsity condition \eqref{eq:sparse_attention}. Such a parameter configuration allows us to formulate training transformers as a search over \emph{$k$-step adaptive gradient-based algorithms}. \autoref{t:two_layer} characterizes the global minimizers of the training objective of a two-layer linear transformer over isotropic regression instances, and shows that the optima correspond to gradient descent with adaptive stepsizes. For multilayer transformers, \autoref{t:L_layer_P_0} demonstrates that gradient descent, with a data-dependent preconditioning, can be derived from a critical point of the training objective. 

\item Finally, we study the loss landscape in the absence of the sparsity condition \eqref{eq:sparse_attention}, which goes beyond searching over conventional gradient-based optimization methods. In this case, we prove and interpret the structure of a critical point of the training objective. We show that a certian critical point in parameter space leads to an intriguing gradient-based algorithm that simultaneously takes gradient steps preconditioned by data covariance, and applies a linear transformation to further improve the conditioning. In the specific case when data covariance is isotropic, this algorithm corresponds to the GD++ algorithm of \citet{von2022transformers} which is  experimentally observed to be the outcome of training. 

\end{list}
We empirically validate the critical points analyzed in \autoref{t:L_layer_P_0} and \autoref{t:L_layer_P_identity}.
For a transformer with three layers, our experimental results confirm the structural of critical points. Furthermore, we observed the objective value associated with these critical points is close to $0$, suggesting that the critical points might be global optima. These experiments substantiate our theoretical analysis and suggests that our theory indeed \emph{aligns with practice}.  Code for our experiments is available at \url{https://github.com/chengxiang/LinearTransformer}.

% We acknowledge that the \emph{linear attention} model appears to be a significant departure from the standard (softmax) attention model. However, we present evidence in  \autoref{sec:problem} that linear attention is \emph{more suited} for the linear ICL problem, and may in fact be a good proxy for understanding Transformer learning in broader generality.

\subsection{Related works}

The ability of neural network architectures to implement algorithms has been investigated in various context. The seminal work by \cite{siegelmann1992computational} investigate the Turing completeness of recurrent neural networks. Despite this computational power, training recurrent networks remains a challenge.  \cite{graves2014neural} design an alternative neural architecture known as the \emph{neural Turing machine}, building on \emph{attention layers} introduced by \cite{hochreiter1997long}. Leveraging attention, \cite{vaswani2017attention} propose transformers as powerful neural architectures, capable of solving various tasks in natural language processing~\citep{bert}. This capability inspired a line of research that examines the algorithmic power of transformers \citep{perez2021attention,wei2022statistically,giannou2023looped,akyurek2022learning,olsson2022context}.    What sets transformers apart from conventional neural networks is their impressive performance after training. In this work, we focus on understanding \emph{how transformers learn to implement algorithms} by training over problem instances.

A line of research investigates how deep neural networks process data across their layers. The seminal work by \cite{jastrzebski2018residual} observes that hidden representations across the layers of deep neural  networks approximately implement gradient descent.  
Recent observations provide novel insights into the working mechanism of ICL for large language models, showing they can implement optimization algorithms across their layers~\citep{garg2022can,akyurek2022learning,von2022transformers}. Moreover, \citet{zhao2023transformers,allen2023physics} observe transformer perform dynamic programming to generate text.  
In this work, we theoretically study how transformer learns gradient-based algorithms for ICL.

 We discuss here two related works \citep{zhang2023trained,mahankali2023one} that appeared shortly after publication of our original draft. Both of these studies focus on a single layer attention network (see  \autoref{sec:single}). \cite{zhang2023trained} prove the global convergence of gradient descent to the global optimum whose structure is analyzed independently from this study and it the same as that in~\autoref{thm:main_single}. \cite{mahankali2023one} also characterize the global minimizer of a single layer attention without softmax for a different data distribution. In addition to results for a single-layer attention, we analyze the landscape of two and multi-layer transformers.

\section{Setting: training linear transformers over random linear regression}  

\label{sec:problem}

In order to understand the mechanism of ICL, we consider the setting of training transformers over the random instances of linear regression, following \citep{garg2022can,akyurek2022learning,von2022transformers}.
In particular, the random instances of linear regression are formalized as follows.

\underline{\emphh{Data distribution: random linear regression instances.}}
Let  ${\tx{i}} \in \R^d$  be the covariates drawn i.i.d.\ from a distribution $D_{\mathcal{X}}$, and $\wstar\in \R^d$ be drawn from  $D_{\mathcal{W}}$.
Let $X \in \R^{(n+1)\times d}$ be the matrix of covariates  whose row $i$ contains tokens ${\tx{i}}$.  
Given $\tx{i}$'s and $\wstar$, the responses are defined as $y = [ \langle\tx{1}, \wstar \rangle,\dots,  \langle\tx{n},\wstar \rangle] \in \R^n$. Define the \emphh{input matrix} $Z_0$ as
\begin{align}
\label{d:Z_0}
Z_0 = \begin{bmatrix}
\tz{1} \ \tz{2} \ \cdots \ \tz{n}  \ \tz{n+1}
\end{bmatrix} = \begin{bmatrix}
\tx{1} & \tx{2} & \cdots & \tx{n} &\tx{n+1} \\ 
\ty{1} & \ty{2} & \cdots &\ty{n}& 0
\end{bmatrix} \in \R^{(d+1) \times (n+1)},
\end{align}
where zero in the above matrix is used to replace the unknown response variable corresponding to $\tx{n+1}$.  
Then, our goal is to predict  $\wstar^\top \tx{n+1}$ given $Z_0$. 
In other words, the training data consists of pairs $(Z_0, \wstar^\top \tx{n+1})$ for $\tx{i}\sim D_{\mathcal{X}}$ and 
$\wstar \sim D_{\mathcal{W}}$.
We then consider training transformers over this data distribution.

\underline{\emphh{Self-attention layer without softmax.}}
Following \citep{schlag2021linear,von2022transformers}, we consider the linear self-attention layer.
To motivate, we first briefly review the standard self-attention layer \citep{vaswani2017attention}. Letting  $Z\in \R^{(d+1) \times (n+1)}$ be the input matrix with $n+1$ tokens in $\R^{d+1}$, a single-head self-attention layer denoted by $\softatt$ is a parametric map defined as
\begin{align} \label{eq:softmax}
\softatt_{W_{k,q,v}}(Z) =  W_v Z \aa  \cdot {\sf smax}(Z^\top W_k^\top W_q Z)\,, \quad \aa \coloneqq \begin{bmatrix} I_n & 0 \\0 & 0 \end{bmatrix} \in \R^{(n+1) \times (n+1)},
\end{align}

where $W_v, W_k,W_q \in \R^{(d+1)\times (d+1)}$ are  the  (value, key and query) weight matrices, and $\mathrm{smax}(\cdot)$ is the softmax operator which applies softmax operation to each column of the input matrix. Note that the prompt is asymmetric since the label for $\tx{n+1}$ is excluded from the input. To reflect this asymmetric structure, the mask matrix $M$ is included in the attention.
In our setting, we consider the self-attention layer that omits the softmax operation in \eqref{eq:softmax}. In particular, we reparameterize weights as $P\coloneqq W_v\in \R^{(d+1)\times (d+1)}$ and  $Q  \coloneqq {W_k}^\top W_q \in \R^{(d+1)\times (d+1)}$ and consider  
\begin{align} \label{eq:linear}
\att_{P,Q}(Z) = P Z \aa (Z^\top Q Z) \,.
\end{align}  
At first glance, the omission of the softmax operation \eqref{eq:linear} might seem over-simplified. But, \citep{von2022transformers} proves such attention can implement gradient descent, and we will prove in \autoref{lem:express} that it can also implement various algorithms to solve linear regression in-context.

\underline{\emphh{Architecture for prediction.}}
We now present the neural network architecture that will be used throughout this paper.  For the number of layers $L$, we define an \emphh{$L$-layer transformer} as a stack of $L$ linear self-attention blocks. Formally, denoting by $Z_\ell$ the output of the $\ell^{\text{th}}$ layer attention, we define
\begin{align} \label{eq:recursion}
Z_{\ell+1} = Z_{\ell} +\frac{1}{n}  \att_{P_\ell,Q_\ell}(Z_\ell)\quad \text{for $\ell=0,1,\dots,L-1$},
\end{align}  
The scaling factor $\nicefrac{1}{n}$ is used only for ease of notation and does not influence the expressive power of the transformer.
Given $Z_L$, we define $\NF_L (Z_0; \{P_\ell,Q_\ell\}_{\ell=0,1,\dots L-1})  = -[Z_{L}]_{(d+1),(n+1)}$, i.e., the $(d+1,n+1)$-th entry of $Z_{L}$. 
The reason for the minus sign is to be consistent with \citep{von2022transformers}, and we will  clarify such a choice in \autoref{lem:express}. For training, the parameters are optimized to minimize in-context loss as 
\begin{align} \label{def:ICL linear}
f\left(\{P_\ell, Q_\ell\}^{L}_{\ell=0}\right) = \E_{(Z_0,\wstar)} \Bigl[ \left( \NF_L(Z_0, \{ P_\ell, Q_\ell \}_{\ell=0}^L)+ \wstar^\top \tx{n+1}  \right)^2\Bigr].
\end{align}  

\emphh{\underline{Goal: the landscape analysis of the training objective functions.}}
We are interested in understanding how the optimization of $f$ leads to in-context learning. We investigate this question by analyzing its loss landscape.
Such analysis is challenging due to two major reasons: \emph{(i) $f$ is non-convex in parameters $\{ P_i, Q_i\}$ even for a single layer transformer.
(ii) The cross-product structures in attention makes $f$ a highly nonlinear function in its parameters. }
Hence, we analyze a spectrum of settings from single-layer transformers to multi-layer transformers. For simpler settings such as single-layer transformers, we prove stronger results such as the full characterization of the global minimizers. For networks with more layers, we characterize the structure of critical points. Furthermore, we provide algorithmic interpretations of the critical points. Table~\ref{tab:summary} summarizes our results for various parameteric models. 

\begin{table}[h!]
\centering
\begin{tabular}{c|l l l l}
Results  & $\tx{i}$& $\wstar$ &  Setting & Guarantees \\
\hline
\hline
\autoref{thm:main_single} & $\N(0,\Sigma)$ & $\N(0,I)$ & single-layer & global minimizers \\
\autoref{t:two_layer} & $\N(0,I)$ & $\N(0,I)$ & two-layer +  symmetric \eqref{eq:sparse_attention}   & global minimizers \\
\autoref{t:L_layer_P_0} & $\N(0,\Sigma)$ & $\N(0,\Sigma^{-1})$ & multi-layer + \eqref{eq:sparse_attention}   & critical points \\
\autoref{t:L_layer_P_identity} & $\N(0,\Sigma)$ & $\N(0,\Sigma^{-1})$ & multi-layer +    \eqref{eq:full_attention} & critical points \\
\autoref{thm:nonlinear} & $\N(0,I)$ & $\N(0,I)$ & single-layer +  ReLU activation & global minimizers  \\
\hline
\hline
\end{tabular}
\vspace{5pt}
\caption{Summary of our analyses for various models and input distributions. The additional conditions \eqref{eq:sparse_attention} and  \eqref{eq:full_attention} are about the sparsity structure of parameters. In addition, ``symmetric \eqref{eq:sparse_attention}'' means we additionally impose the weights to be symmetric.}
\label{tab:summary}
\end{table}

\begin{remark}[\emphh{Optimizing \eqref{def:ICL linear} vs. practical transformer optimization}]
Interestingly, a recent work by \cite{ahn2023linear} reports that common optimization algorithms such as SGD/ADAM behave remarkably similarly on the (linear Transformers + linear regression) problem as they do on (practical transformers + real language modeling tasks).
In particular, they reproduce several distinctive features of transformer optimization under a simple shallow linear transformer. This work suggests that (linear transformer + linear regression) may serve as a good proxy for understanding practical transformer optimization. 
\end{remark}

\section{The global optimum for a single-layer transformer}
\label{sec:single}
For the single layer case of $L=1$, the following result characterizes the optimal parameters $P_0$ and $Q_0$ for the in-context loss \eqref{def:ICL linear}.  
 
\begin{theorem}
[\textbf{Single-layer; non-isotropic data}] \label{thm:main_single}
Assume that vector $\tx{i}$ is sampled from $\mathcal{N}(0, \Sigma)$, i.e., a Gaussian with covariance $\Sigma = U\Lambda U^\top$ where $\Lambda = \mathrm{diag}(\lambda_1,\dots, \lambda_d)$. 
Moreover, assume that  $\wstar$ is sampled  from $\mathcal{N}(0, I_d)$.  
Then, the following choice of parameters
\begin{align} \label{eq:single_layer}
P_0 = \begin{bmatrix}
0_{d\times d} & 0 \\ 
0 & 1 
\end{bmatrix} ,\quad Q_0 =  - \begin{bmatrix}
U \mathrm{diag}\left(\left\{\frac{1}{  \frac{n+1}{n}  \lambda_i +   \frac{1}{n} \cdot \left( \sum_k \lambda_k  \right)   } \right\}_{i=1,\dots,d}\right) U^\top & 0\\
0  & 0
\end{bmatrix} .
\end{align} 
is a  global minimizer of $f(P,Q)$ up to re-scaling, i.e., $P_0 \leftarrow \gamma P_0$ and $Q_0 \leftarrow \gamma^{-1} Q_0$ for a scalar $\gamma$.
\end{theorem}  

See \autoref{sec:single_proofs} for the proof of \autoref{thm:main_single}. In the specific case when the Gaussian is isotropic, i.e., $\Sigma = I_d$, the optimal $Q_0$ has the following simple form 
\begin{align} \label{minimum:linear}
Q_0 = - \frac{1}{\left( \frac{n-1}{n} +  (d+2) \frac{1}{n} \right)}  \begin{bmatrix}I_d &0   \\ 0 &0\end{bmatrix}.
\end{align}
Up to scaling, the above parameter configuration is equivalent to the parameters used by \citet{von2022transformers} to perform one step of gradient descent. Thus, in the single-layer setting, the in-context loss is indeed minimized by a transformer that implements the gradient descent algorithm.

More generally, when the in-context samples are non-isotropic, the transformer learns to implement one step of a \emph{preconditioned} gradient descent as we shall detail in \autoref{lem:express}. Here the ``preconditioning matrix''  given in \eqref{eq:single_layer} has interesting properties:
\begin{list}{$\bullet$}{\leftmargin=1.5em}
\setlength{\itemsep}{1pt}
\item When the number of samples $n$ is large, the first $d\times d$ submatrix of $Q_0$ approximates $\Sigma^{-1}$, the inverse of the data covariance matrix, which is also close to the Gram matrix formed from $\tx{1},\ldots,\tx{n}$. Hence the preconditioning can lead to considerably faster convergence rate when $\Sigma$ is ill-conditioned.
\item Moreover, $\frac{1}{n} \sum_k \lambda_k$  in \eqref{eq:single_layer}  acts as a regularizer. It becomes more significant when $n$ is small and variance of the $\tx{i}$'s is high. Such an adjustment resembles structural risk minimization \citep{vapnik1999nature} where the regularization strength is adapted to the sample size.   
\end{list}

\section{Multi-layer transformers with sparse parameters}
\label{s:k_layer_Q}

\autoref{thm:main_single} proves a single layer of linear attention can implement a single step of preconditioned gradient descent. Inspired by this result, we investigate the algorithmic power of the linear transformer architecture. We show that the model can implement various optimization methods even under sparsity constraints. In particular, we impose the following restrictions on the parameters:
\begin{align}\label{eq:sparse_attention}
P_i = \begin{bmatrix}
0_{d\times d} & 0 \\ 
0 & 1 
\end{bmatrix}, \quad Q_i = -  \begin{bmatrix}
A_i & 0 \\ 
0 & 0
\end{bmatrix} \quad \text{where $A_i \in \R^{d\times d}$.}
\end{align}
The next lemma proves that a forward-pass of a $L$-layer transformer, with the parameter configuration \eqref{eq:sparse_attention} is the same as taking $L$ steps of gradient descent, preconditioned by $A_\ell$.

\begin{lemma}[\emphh{Forward pass as a preconditioned gradient descent}] 
\label{lem:express}
Consider the $L$-layer linear transfomer parameterized by $A_0,\dots,A_{L-1}$ as in \eqref{eq:sparse_attention}.  
Let $\ty{n+1}_\ell$ be the $(d+1,n+1)$-th entry of the $\ell$-th layer output, i.e., $\ty{n+1}_\ell = [Z_{\ell}]_{(d+1),(n+1)}$ for $\ell=1,\dots, L$.
Then, it holds that $\ty{n+1}_\ell = - \langle\tx{n+1}, \wgd_\ell \rangle$ where $\{\wgd_\ell\}$ is defined as $\wgd_0=0$ and as follows for $\ell=1,\dots, L-1$:
\begin{align} \label{def:wgd}
\wgd_{\ell+1} = \wgd_{\ell} - A_\ell \nabla R_{\wstar}\lrp{\wgd_{\ell}}\quad \text{where} \quad R_{\wstar}(w) \coloneqq \frac{1}{2n}\sum_{i=1}^{n}(w^\top x_{i}- {\wstar}^\top x_{i})^2.
\end{align} 
\end{lemma}

See \autoref{pf:express} for a proof. The iterative scheme \eqref{def:wgd} includes various optimization methods including  gradient descent with $A_\ell = \gamma_\ell I_d$, and (adaptive) preconditioned gradient descent, where the preconditioner $A_\ell$ depends on the time step. In the upcoming sections, we characterize how the optimal $\{A_\ell\}$ are linked to the input distribution.

\subsection{Warm-up: optimal two-layer transformer with symmetric weights}
\label{s:two_layer}

 For the rest of this section, we will study the optimal parameters for the in-context loss under the constraint of Eq.~\eqref{eq:sparse_attention}. Later in \autoref{s:k_layer_PQ}, we analyze the optimal model for a more general parameters.
For a two-layer transformer, the next Theorem proves the optimal in-context loss obtains the simple gradient descent with adaptive coordinate-wise stepsizes.

\begin{theorem}[Global optimality for the two-layer (symmetric) transformer]
\label{t:two_layer}
Consider the optimization of in-context loss for a two-layer transformer with the parameter configuration in Eq.~\eqref{eq:sparse_attention}, and additionally assume that $A_1,A_2$ are symmetric matrices. 
More formally, consider
\begin{align}
\min_{A_1,A_2 \text{ are symmetric}} f  \left\{  P_\ell = \begin{bmatrix}
0_{d\times d} & 0 \\ 
0 & 1 
\end{bmatrix},~Q_\ell = \begin{bmatrix}
-A_\ell & 0 \\ 
0 & 0
\end{bmatrix}\right\}_{\ell=1,2}\,.
\end{align}
Assume $\tx{i} \stackrel{\text{i.i.d.}}{\sim}N(0,I_d)$  and $\wstar \sim N(0,I_d)$; then, there are diagonal matrices $A_1$ and $A_2$ that are a global minimizer of $f$.
\end{theorem}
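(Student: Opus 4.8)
The plan is to exploit Lemma~\ref{lemma:construction_for_adaptive_GD} to reduce the two-layer optimization to a concrete optimization over two preconditioners $A_0, A_1$, and then to argue that the optimal $A_0, A_1$ can be taken diagonal by a symmetry argument. First I would use the lemma to write $\wgd_0 = 0$ (since $Z_0$ has $y^{(n+1)}=0$), $\wgd_1 = A_0 \nabla R_{\wstar}(0) = -A_0 \hat\Sigma \wstar$ where $\hat\Sigma = \frac1n\sum_i x_i x_i^\top$ is the empirical covariance, and then $\wgd_2 = \wgd_1 + A_1 \hat\Sigma(\wgd_1 - \wstar) = (I + A_1\hat\Sigma)(I - A_0\hat\Sigma)\wstar - \wstar + \ldots$; carefully, $\wgd_2 - \wstar = (I+A_1\hat\Sigma)(\wgd_1 - \wstar)= (I+A_1\hat\Sigma)(I+A_0\hat\Sigma)(\wgd_0-\wstar)$, so with $\wgd_0 = 0$ we get $\wgd_2 - \wstar = -(I+A_1\hat\Sigma)(I+A_0\hat\Sigma)\wstar$. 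The prediction error on the query is $\langle \tx{n+1}, \wgd_2 - \wstar\rangle$, and since $\tx{n+1}\sim N(0,I_d)$ is independent of everything else, the in-context loss becomes $\E \norm{\wgd_2 - \wstar}^2 = \E\norm{(I+A_1\hat\Sigma)(I+A_0\hat\Sigma)\wstar}^2$. Averaging over $\wstar\sim N(0,I_d)$ gives $\E_{\hat\Sigma}\tr\big[(I+A_0\hat\Sigma)^\top (I+A_1\hat\Sigma)^\top (I+A_1\hat\Sigma)(I+A_0\hat\Sigma)\big]$.

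Next I would invoke rotational invariance of the isotropic Gaussian. The distribution of $\hat\Sigma$ is invariant under $\hat\Sigma \mapsto R\hat\Sigma R^\top$ for any orthogonal $R$. Hence if $(A_0, A_1)$ is a minimizer, so is $(R^\top A_0 R, R^\top A_1 R)$ for every orthogonal $R$, and so is the average over the Haar measure in an appropriate sense; more precisely, I would argue that the objective $g(A_0, A_1)$ satisfies $g(A_0,A_1) = g(R^\top A_0 R, R^\top A_1 R)$, and then use convexity of the objective in $(A_0, A_1)$ jointly — the objective is a quadratic-in-$(I+A_1\hat\Sigma)(I+A_0\hat\Sigma)$ expectation, but it is not jointly convex in $(A_0,A_1)$, so instead I would use a direct argument: fix an optimal pair, and show that replacing both $A_0$ and $A_1$ by the diagonal matrices obtained by zeroing out off-diagonal entries (equivalently, averaging $R^\top A_i R$ over the group of signed permutation matrices / the diagonal subgroup of $O(d)$) does not increase the loss. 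The cleanest route is: the group $G$ of $d\times d$ signed permutation matrices acts on $(A_0,A_1)$, the objective is $G$-invariant, and for $R$ ranging over the diagonal sign-flip subgroup the average $\frac{1}{|G_0|}\sum_{R} (R^\top A_0 R, R^\top A_1 R)$ sends both matrices to their diagonal parts simultaneously; then I would show that this averaging operation, combined with the structure of the loss, can only decrease it. For this last decrease step one needs that the loss is convex along the averaging — so I would want the loss to be convex as a function of the pair of \emph{diagonal} restrictions, or use a Jensen-type inequality exploiting that the map from $(A_0, A_1)$ to the loss is convex when restricted to commuting families.

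An alternative and perhaps cleaner plan: diagonalize by working in the eigenbasis of $\hat\Sigma$ is not possible (it is random), so instead reparametrize. One can compute the loss as a symmetric function. Expanding, $g(A_0,A_1) = \E\tr\big[(I + A_0\hat\Sigma)(I+A_1\hat\Sigma)^2(I+A_0\hat\Sigma)\big]$ (using cyclicity and symmetry of $A_i, \hat\Sigma$), and using Wishart moment formulas for $\E[\hat\Sigma], \E[\hat\Sigma A \hat\Sigma], \E[\hat\Sigma A\hat\Sigma B \hat\Sigma], \ldots$ up to fourth order, each of which is expressible via $\tr$, $I$, and the matrices $A_0, A_1$ with coefficients depending on $n, d$. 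Carrying this out gives an explicit polynomial in the entries of $A_0, A_1$; by inspection this polynomial depends on the off-diagonal entries only through terms that are minimized at zero (sums of squares), so the minimizer is diagonal, and then one minimizes a finite-dimensional smooth function over diagonal matrices and checks a minimizer exists (coercivity: the loss $\to\infty$ as $\norm{A_i}\to\infty$ in the relevant directions, while it stays bounded, so a global minimum is attained).

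The main obstacle I expect is the symmetry-reduction-decreases-the-loss step: unlike the single-layer case, the loss is a degree-4 (not degree-2) function of the parameters and is not jointly convex in $(A_0,A_1)$, so one cannot naively apply Jensen. The careful work is to show that the off-diagonal entries of $A_0$ and $A_1$ enter the (Wishart-averaged) objective only through manifestly nonnegative terms that vanish when those entries are zero — this requires actually computing the relevant Wishart moments and organizing the resulting polynomial, which is the one genuinely technical computation. Existence of the minimizer over diagonal matrices is comparatively routine once coercivity in the right directions is established (and one must be slightly careful because of the scaling degeneracy $A_0 \leftrightarrow$ nothing here — actually both $A_i$ are genuinely constrained, so coercivity should hold; if not, one restricts to a compact sublevel set).
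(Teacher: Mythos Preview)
Your setup is correct and you have correctly identified the central difficulty: the loss is not jointly convex in $(A_0,A_1)$, so a single Jensen step averaging both matrices simultaneously over the sign-flip group is not justified. Your proposed workarounds (convexity along commuting families, or brute-force Wishart moment expansion) are not fleshed out, and the second one, while perhaps feasible, is far more laborious than necessary.

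The key idea you are missing is a \emph{two-step} symmetry reduction that sidesteps joint convexity entirely. First, diagonalize $A_0^*$ using its own spectral decomposition: writing $A_0^* = U D_0 U^\top$ with $U$ orthogonal, rotational invariance of the isotropic Gaussian gives $f(UD_0U^\top, A_1^*) = f(D_0, U^\top A_1^* U)$. This is a pure change of coordinates, so no convexity is needed, and you may now assume without loss of generality that $A_0^*$ is already diagonal. Second, with $A_0^* = D_0$ diagonal, take $S$ to be a random diagonal sign matrix; since $S D_0 S = D_0$, invariance gives $f(D_0, A_1^*) = f(D_0, S A_1^* S)$ for every such $S$. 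Now observe that with $A_0$ \emph{fixed}, the loss is \emph{quadratic} in $A_1$ (your own expansion shows this: $(I+A_1\hat\Sigma)$ enters linearly in $\wgd_2 - \wstar$), hence convex in $A_1$ alone. Jensen over the random $S$ then gives
\[
f(D_0, A_1^*) = \E_S\bigl[f(D_0, S A_1^* S)\bigr] \ge f\bigl(D_0, \E_S[S A_1^* S]\bigr) = f\bigl(D_0, \mathrm{diag}(A_1^*)\bigr),
\]
which finishes the argument. The paper's proof is exactly this; it is a few lines long and requires no Wishart moment computations.
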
 
Combining the above result with \autoref{lem:express} concludes that the two iterations of gradient descent with \emph{coordinate-wise adaptive stepsizes} achieve the minimal in-context loss for isotropic Gaussian inputs. Gradient descent with adaptive stepsizes such as Adagrad \citep{duchi2011adaptive} are widely used in machine learning. While Adagrad adjusts its stepsize based on the individual problem instance, the algorithm learned adjusts its stepsize to the underlying data distribution.

\subsection{Multi-layer transformers}
\label{s:L_layer_P_0}
We now turn to the setting of general $L$-layer transformers, for any positive integer $L$. The next theorem proves that certain critical points of the in-context loss effectively implement a specific preconditioned gradient algorithm, where the preconditioning matrix is the inverse covariance of the input distribution. Before stating this result, let us first consider a motivating scenario in which the data-covariance matrix is non-identity:

\emphh{Linear regression with distorted view of the data:} Suppose that $\overline{w}_\star \sim \mathcal{N}(0,I)$ and the \emph{latent} covariates are $\overline{x}^{(1)},\dots,\overline{x}^{(n+1)}$, drawn i.i.d from $\N(0,I)$. We are given $\ty{1},\dots,\ty{n}$, with $\ty{i} = \lin{\overline{x}^{(i)}, \overline{w}_\star}$. However, we \emph{do not observe} the latent covariates $\overline{x}^{(i)}$. Instead, we observe the \emph{distorted} covariates $\tx{i} = W \overline{x}^{(i)}$, where $W\in \R^{d\times d}$ is a distortion matrix. Thus the prompt consists of $(\tx{1},\ty{1}),\dots,(\tx{n},\ty{n})$, as well as $\tx{n+1}$. The goal is still to predict $\ty{n+1}$. Note that this setting is quite common in practice, when covariates are often represented in an arbitrary basis. 

Assume that $\Sigma := W W^\top \succ 0$. We verify from our definitions that for $\wstar := \Sigma^{-1/2} \overline{w}_\star$, $\ty{i} = \lin{\tx{i},\wstar}$. Furthermore, $\tx{i} \sim \N(0, \Sigma)$ and $\wstar \sim \N(0, \Sigma^{-1})$. From \autoref{lem:express}, the transformer with weight matrices $\lrbb{A_0,\dots,A_{L-1}}$ implements preconditioned gradient descent with respect to $R_{\wstar}(w) = \frac{1}{2n} (w - \wstar)^T X X^\top (w-\wstar)$, with $X = \lrb{\tx{1}, \dots, \tx{n}}$. Under this loss, the Hessian matrix $\nabla^2 R_{\wstar}(w) = \frac{1}{2n} X X^\top$ (at least in the case of large $n$). For any fixed prompt, Newton's method corresponds to $A_i \propto \lrp{X X^\top}^{-1}$, which makes the problem well-conditioned even if $\Sigma$ is very degenerate. As we will see in \autoref{t:L_layer_P_0} below, the choice of $A_i \propto \Sigma^{-1} = \E\lrb{X X^\top}^{-1}$ appears to be a \emph{stationary point} of the loss landscape, in expectation over prompts.

Before stating the theorem, we introduce the following simplified notation: let $A := \lrbb{A_i}_{i=0}^{L-1} \in \R^{L \times d \times d}$. We use $f(A)$ to denote the in-context loss of $f\left(\{P_i, Q_i\}^{L-1}_{i=0}\right)$ as defined in \eqref{def:ICL linear}, when $Q_i$ depends on $A_i$, and $P_i$ is a constant matrix, as described in \eqref{eq:sparse_attention}.

\begin{theorem}\label{t:L_layer_P_0}  
Assume that $\tx{i} \overset{iid}{\sim} \mathcal{N}(0,\Sigma)$ and $\wstar \sim \mathcal{N}(0,\Sigma^{-1})$, for $i=1,\dots, n$, and for some $\Sigma \succ 0$. Consider the optimization of in-context loss for a $k$-layer transformer with the the parameter configuration in Eq.~\eqref{eq:sparse_attention} given by:
\begin{align}
\min_{\lrbb{A_i}_{i=0}^{L-1}} f \lrp{A}.
\end{align}
Let $\S \subset \R^{L \times d \times d}$ be defined as follows: $A \in \S$ if and only if for all $i = 0,\dots,L-1$, there exists scalars $a_i\in \R$ such that $A_i = a_i \Sigma^{-1}$. Then
\begin{align*}
\inf_{(A,B) \in \S} \sum_{i=0}^{L-1} \lrn{\nabla_{A_i} f(A,B)}_F^2 = 0,
\numberthis \label{e:T:near-stationarity_P0}
\end{align*}
where $\nabla_{A_i} f$ denotes derivative wrt the Frobenius norm $\lrn{A_i}_F$.

\end{theorem}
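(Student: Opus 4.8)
The plan is to exploit the rotational/diagonalization structure of the problem to reduce everything to the isotropic case, and then either cite or re-derive the fact that restricting to scalar multiples of the identity gives a stationary point of the isotropic in-context loss.

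\textbf{Step 1: Reduction to diagonal covariance via the change of variables $\wstar = \Sigma^{-1/2}\overline{\wstar}$, $\tx{i} = \Sigma^{1/2}\overline{x}^{(i)}$.} First I would make precise the ``distorted view'' observation already sketched before the theorem: writing $\tx{i} = \Sigma^{1/2}\overline{x}^{(i)}$ with $\overline{x}^{(i)} \sim \N(0,I)$ and $\wstar = \Sigma^{-1/2}\overline{\wstar}$ with $\overline{\wstar}\sim\N(0,I)$, the responses are unchanged, $\ty{i} = \lin{\tx{i},\wstar} = \lin{\overline{x}^{(i)},\overline{\wstar}}$. By Lemma~\ref{lemma:construction_for_adaptive_GD}, the $k$-layer transformer with preconditioners $\{A_i\}$ implements $\wgd_{i+1} = \wgd_i + A_i \nabla R_{\wstar}(\wgd_i)$ where $R_{\wstar}(w) = \tfrac1{2n}(w-\wstar)^\top XX^\top(w-\wstar)$ and $X = \Sigma^{1/2}\overline{X}$. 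Conjugating the iteration by $\Sigma^{1/2}$, i.e. setting $\overline{w}^{\sf gd}_i := \Sigma^{1/2}\wgd_i$, one checks that $\overline{w}^{\sf gd}_{i+1} = \overline{w}^{\sf gd}_i + \overline{A}_i \nabla \overline{R}_{\overline{\wstar}}(\overline{w}^{\sf gd}_i)$ where $\overline{A}_i := \Sigma^{1/2} A_i \Sigma^{1/2}$ and $\overline{R}$ is the least-squares loss on the latent data $\overline{X}$. Crucially, the prediction $-\lin{\tx{n+1},\wgd_k} = -\lin{\overline{x}^{(n+1)},\overline{w}^{\sf gd}_k}$ and the target $\ty{n+1}$ are both invariant, so the in-context loss satisfies $f(\{A_i\}) = \overline{f}(\{\overline{A}_i\})$ where $\overline{f}$ is exactly the isotropic in-context loss (the $\Sigma = I$ case). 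The map $A_i \mapsto \overline{A}_i = \Sigma^{1/2}A_i\Sigma^{1/2}$ is a linear isomorphism, and it carries the subspace $\S$ (spanned, coordinate by coordinate, by $\Sigma^{-1}$) onto the subspace $\overline{\S}$ spanned by $I$.

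\textbf{Step 2: Stationarity in the isotropic case.} It therefore suffices to show $\inf_{\overline{A}\in\overline{\S}} \sum_i \lrn{\nabla_{\overline{A}_i}\overline{f}(\overline{A})}_F^2 = 0$, i.e. that the isotropic in-context loss, restricted to the ``scalar'' subspace $\overline{A}_i = a_i I$, has critical points whose full gradient (not just the gradient within the subspace) vanishes. This is the technical heart. By the symmetry of the isotropic distribution under the orthogonal group — the joint law of $(\overline{X},\overline{\wstar})$ is invariant under $\overline{X}\mapsto U\overline{X}$, $\overline{\wstar}\mapsto U\overline{\wstar}$ for any orthogonal $U$ — the function $\overline{f}$ satisfies $\overline{f}(\{U A_i U^\top\}) = \overline{f}(\{A_i\})$. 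Differentiating this equivariance gives that at a point with all $A_i = a_i I$, the gradient $\nabla_{A_i}\overline{f}$ commutes with every orthogonal $U$, hence is itself a scalar multiple of $I$; thus $\nabla_{A_i}\overline{f}(\{a_j I\}) = g_i(\mathbf{a})\, I$ for some scalars $g_i$. Consequently, the full gradient vanishes at $\mathbf{a}^\star$ if and only if its restriction to the $(k+1)$-dimensional subspace $\{(a_0,\dots,a_k)\}$ vanishes, i.e. if and only if $\mathbf{a}^\star$ is a critical point of the smooth function $\phi(\mathbf{a}) := \overline{f}(\{a_i I\})$ on $\R^{k+1}$. So I would reduce to showing $\phi$ has a critical point. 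For this I would argue that $\phi$ is a polynomial (in fact of bounded degree) in $\mathbf{a}$ that is coercive, or at least attains its infimum: $\overline{f}\ge 0$ always, and one can check $\phi(\mathbf{a})\to\infty$ as $\|\mathbf{a}\|\to\infty$ because blowing up any stepsize makes the corresponding GD iterate diverge in expectation (the relevant moments of the isotropic Gaussian prompt are strictly positive), so $\phi$ attains a global minimum, which is a critical point. Taking $\mathbf{a}^\star = \arg\min\phi$ and pulling back through Step~1 yields $A^\star \in \S$ with vanishing full gradient, proving the claimed infimum is $0$.

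\textbf{Main obstacle.} The delicate point is Step~2's coercivity/existence-of-minimizer claim for $\phi$: one must verify carefully that pushing the scalar stepsizes to $\pm\infty$ genuinely drives $\overline{f}$ to $+\infty$ rather than having it plateau or behave non-monotonically, which requires tracking how the expected squared prediction error depends polynomially on $(a_0,\dots,a_k)$ and identifying the leading-order term's sign — this is where the finite-$n$ ``data inadequacy'' variance terms, analogous to the $\tfrac1n\sum_k\lambda_k$ correction in Theorem~\ref{t:single_layer_non_isotropic}, enter and must be controlled. An alternative that sidesteps coercivity is to explicitly exhibit the stationary scalars $\mathbf{a}^\star$ by setting up and solving the $(k+1)$ polynomial equations $g_i(\mathbf{a}) = 0$ recursively layer by layer (each $g_i$ is essentially the derivative of a quadratic in $a_i$ with the previous layers' contributions frozen), which is likely how the appendix proceeds; the orthogonal-equivariance argument above is what guarantees this suffices for full stationarity. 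Verifying the equivariance-implies-scalar-gradient step rigorously — i.e. that $\nabla_{A_i}\overline{f}$ is genuinely orthogonally equivariant as a matrix field, including correctly handling the symmetrization implicit in ``derivative wrt $\lrn{A_i}_F$'' over symmetric matrices — is the other place where care is needed.
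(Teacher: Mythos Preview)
Your reduction to the isotropic case (Step~1) is correct and cleanly reorganizes what the paper does: the paper works directly with non-isotropic data and the transformation $U_\Sigma := \Sigma^{1/2}U\Sigma^{-1/2}$ for random orthogonal $U$, while you first conjugate everything by $\Sigma^{1/2}$ and then invoke ordinary orthogonal invariance. These are equivalent. Your equivariance argument in Step~2 --- that at points $\{a_iI\}$ the gradient $\nabla_{\overline{A}_i}\overline{f}$ must commute with every orthogonal matrix, hence be scalar --- is also correct and is the same structural insight the paper exploits (there phrased as: the directional derivative at $A\in\S$ along any $R$ equals the directional derivative along a certain $\tilde{R}\in\S$).

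The genuine divergence is the final step. You aim to produce an actual critical point of $\phi(\mathbf{a})=\overline{f}(\{a_iI\})$ via coercivity, which you correctly flag as unproven; indeed, coercivity is not obvious (e.g.\ sending one stepsize to $+\infty$ and another to $0$ in a correlated way need not blow up the loss), and the paper does not attempt it. Instead the paper runs a constrained gradient flow inside $\S$ and shows it decreases $f$ at least as fast as the full gradient flow; since $f\geq 0$, this forces $\inf_{\S}\|\nabla f\|^2 = 0$ directly, without ever showing the infimum is attained (hence the paper's footnote that it may not be). Your route would prove a strictly stronger conclusion (attained minimum) if coercivity held, but as written the gap is real. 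A clean fix that stays within your framework: once you have reduced to showing $\inf_{\mathbf{a}\in\R^{k+1}}\|\nabla\phi(\mathbf{a})\|=0$ for a smooth, nonnegative $\phi$, this follows immediately from Ekeland's variational principle (or the same flow-plus-lower-bound argument the paper uses, applied to $\phi$), so you do not need coercivity at all. Your ``alternative'' of solving the $k{+}1$ stationarity equations recursively is also not how the paper proceeds.
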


As discussed in the motivation above, under the setting of $A_i = a_i \Sigma^{-1}$, the linear transformer implements an algorithm that is reminiscent of Newton's method (as well as a number of other adaptive algorithms such as the full-matrix variant of Adagrad); these can converge significantly faster than vanilla gradient descent when the problem is ill-conditioned. The proposed parameters $A_i$ in \autoref{t:L_layer_P_0} are also similar to $\bb_i$'s in \autoref{thm:main_single} when $n$ is large. However, in contrast to \autoref{thm:main_single}, there is no trade-off with statistical robustness; this is because $\wstar$ has covariance matrix $\Sigma^{-1}$ in the \autoref{t:L_layer_P_0}, while \autoref{thm:main_single} has isotropic $\wstar$.

Unlike our prior results, \autoref{t:L_layer_P_0} only guarantees that the set $\S$ of transformer prameters satisfying $\lrbb{A_i \propto \Sigma^{-1}}_{i=0}^{L-1}$ \emph{essentially}\footnote{A subtle issue is that the infimum may not be attained, so it is possible that $\S$ contains points with arbitrarily small gradient, but does not contain a point with exactly $0$ gradient.} contains critical points of the in-context loss. However, in the next section, we show experimentally that this choice of $A_i$'s does indeed seem to be recovered by training.

We defer the proof of \autoref{t:L_layer_P_0} to \autoref{sec:pf:t:L_layer_P_0}. Due to the complexity of the transformer function, even verifying critical points can be challenging. We show that the in-context loss can be equivalently written as (roughly) a matrix polynomial involving the weights at each layer. By exploiting invariances in the underlying distribution of prompts, we construct a flow, contained entirely in $\S$, whose objective value decreases as fast as gradient flow. Since $f$ is lower bounded, we conclude that there must be points in $\S$ whos gradient is arbitrarily small.

\subsection{Experimental validations for \autoref{t:L_layer_P_0}}
\label{s:experiment_pnull}
We present here an empirical verification of our results in \autoref{t:L_layer_P_0}. We consider the ICL loss for linear regression. The dimension is $d=5$, and the number of training samples in the prompt is $n=20$. Both $\tx{i}\sim \mathcal{N}(0,\Sigma)$ and $\wstar \sim \mathcal{N}(0,\Sigma^{-1})$, where $\Sigma = U^T D U$, where $U$ is a uniformly random orthogonal matrix, and $D$ is a fixed diagonal matrix with entries $(1,1,0.25,0.0625,1)$. 

We optimizes $f$ for a three-layer linear transformer using ADAM, where the matrices $A_0,A_1,$ and $A_2$ are initialized by i.i.d. Gaussian matrices. Each gradient step is computed from a minibatch of size 20000, and we resample the minibatch every 100 steps. We clip the gradient of each matrix to 0.01. All plots are averaged over $5$ runs with different $U$ (i.e. $\Sigma$) sampled each time.

\autoref{fig:loss_pnull} plots the average loss. We observe that the training converges to an almost $0$ value, suggesting the convergence to global minimum. The parameters at convergence match the stationary point introduced in \autoref{t:L_layer_P_0}, and indeed appear to be globally optimal. 

To quantify the similarity between $A_0,A_1,A_2$ and $\Sigma^{-1}$ (up to scaling), we use the \emph{normalized Frobenius norm distance}: $\Dist(M,I) := \min_{\alpha}  \frac{\lrn{M - \alpha  \cdot I}}{\lrn{M}_F}$, (equivalent to choosing $\alpha := \frac{1}{d} \sum_{i=1}^d M[i,i]$). This is essentially the projection distance of $\nicefrac{M}{\lrn{M}}_F$ onto the space of scaled identity matrices. 

We plot $\Dist\lrp{A_i, I}$, averaged over $5$ runs, against iteration in Figures \ref{fig:A0_pnull_trend},\ref{fig:A1_pnull_trend},\ref{fig:A2_pnull_trend}. In each plot, the blue line represents $\Dist(\Sigma^{1/2} A_i \Sigma^{1/2},I)$, and we verify that the optimal parameters are converging to the critical point introduced in \autoref{t:L_layer_P_0}, which implements preconditioned gradient descent. The red line  represents $\Dist(A_i,I)$; it remains constant indicating that the trained transformer is not implementing plain gradient descent.  Figures~\ref{fig:A0_imshow_pnull}--\ref{fig:A2_imshow_pnull} visualize each $\Sigma^{1/2} A_i \Sigma^{1/2}$ matrix at the end of training to further validate that the learned parameter is as described in \autoref{t:L_layer_P_0}.

\iffalse
\begin{figure}[h]
\centering
\begin{subfigure}{0.24\textwidth}
\centering
\includegraphics[width=\textwidth]{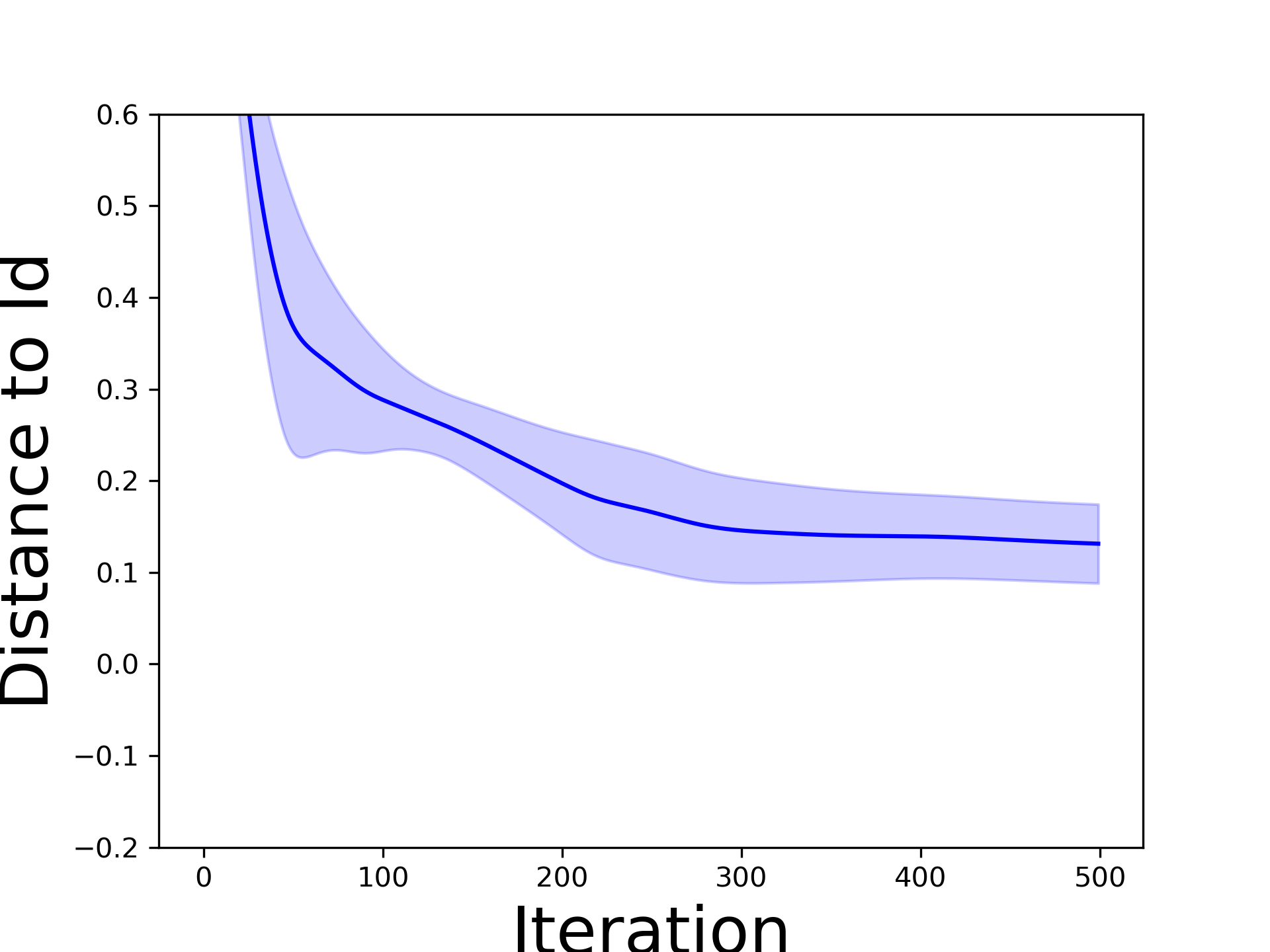} % first figure
\caption{Error for $A_0$}
\label{fig:A0_pnull_trend}
\end{subfigure}\hfill
\begin{subfigure}{0.24\textwidth}
\centering
\includegraphics[width=\textwidth]{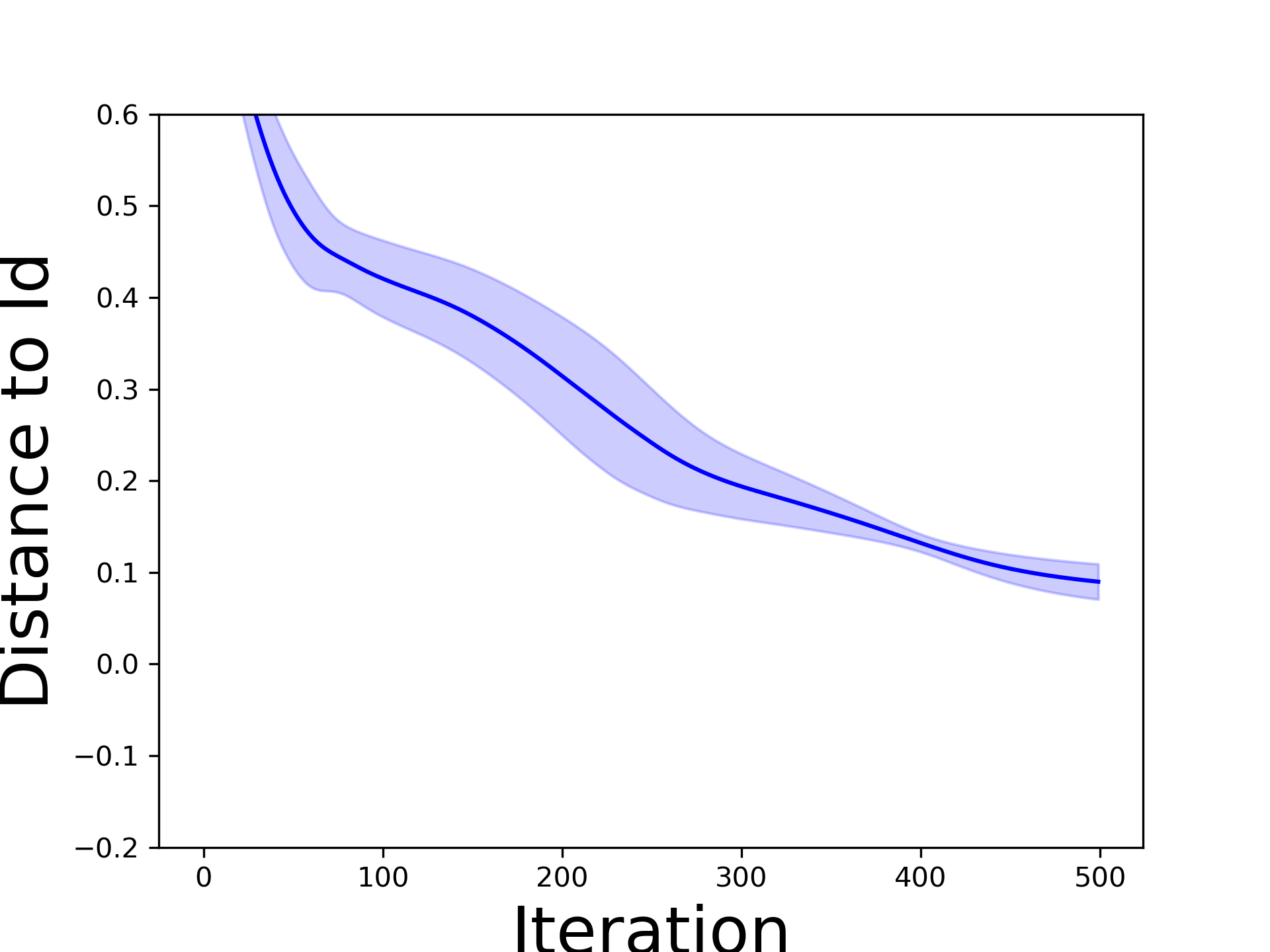} % second figure
\caption{Error for $A_1$}
\label{fig:A1_pnull_trend}
\end{subfigure}
\begin{subfigure}{0.24\textwidth}
\centering
\includegraphics[width=\textwidth]{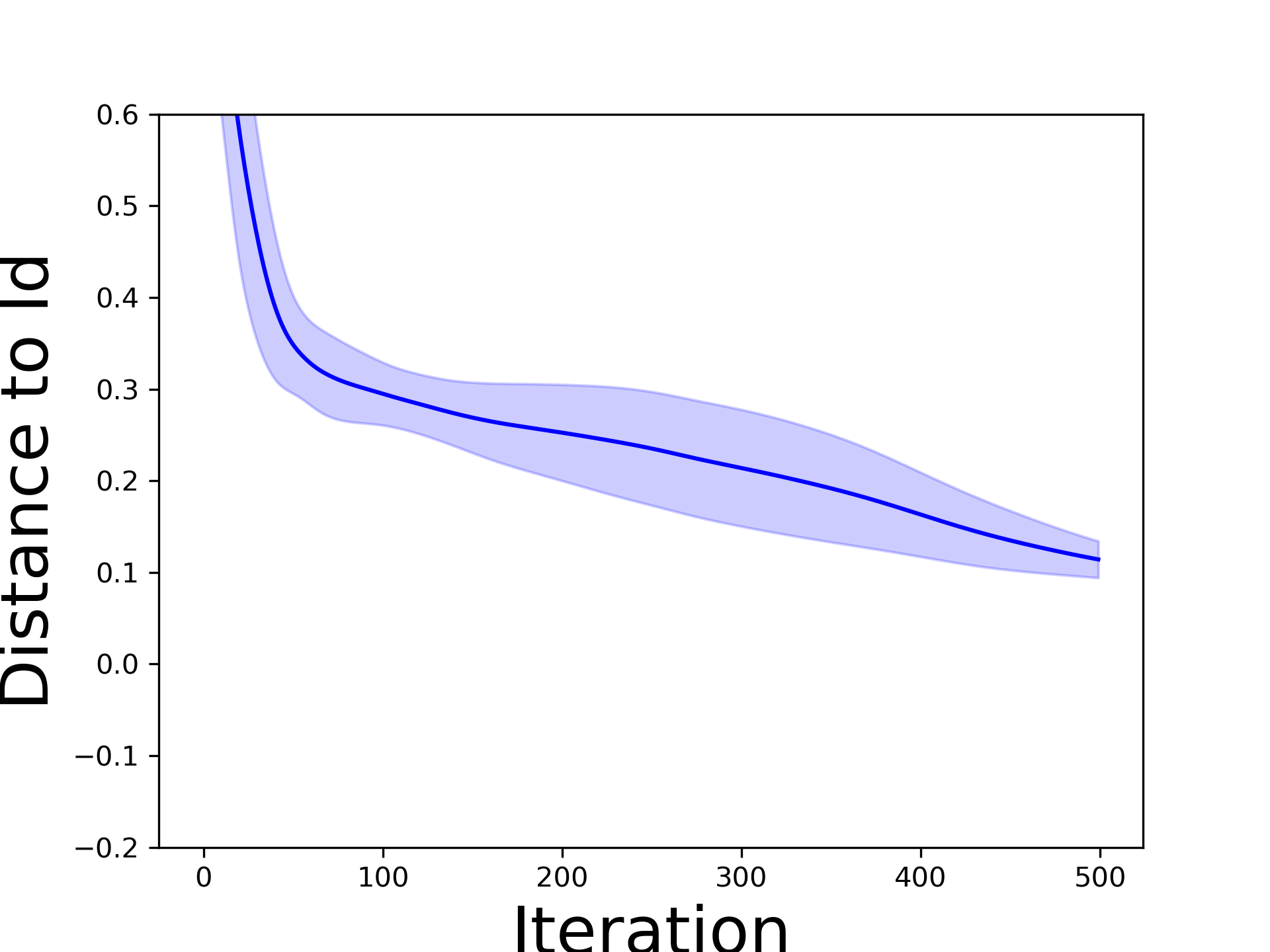} % second figure
\caption{Error for $A_2$}
\label{fig:A2_pnull_trend}
\end{subfigure}
\begin{subfigure}{0.24\textwidth}
\centering
\includegraphics[width=\textwidth]{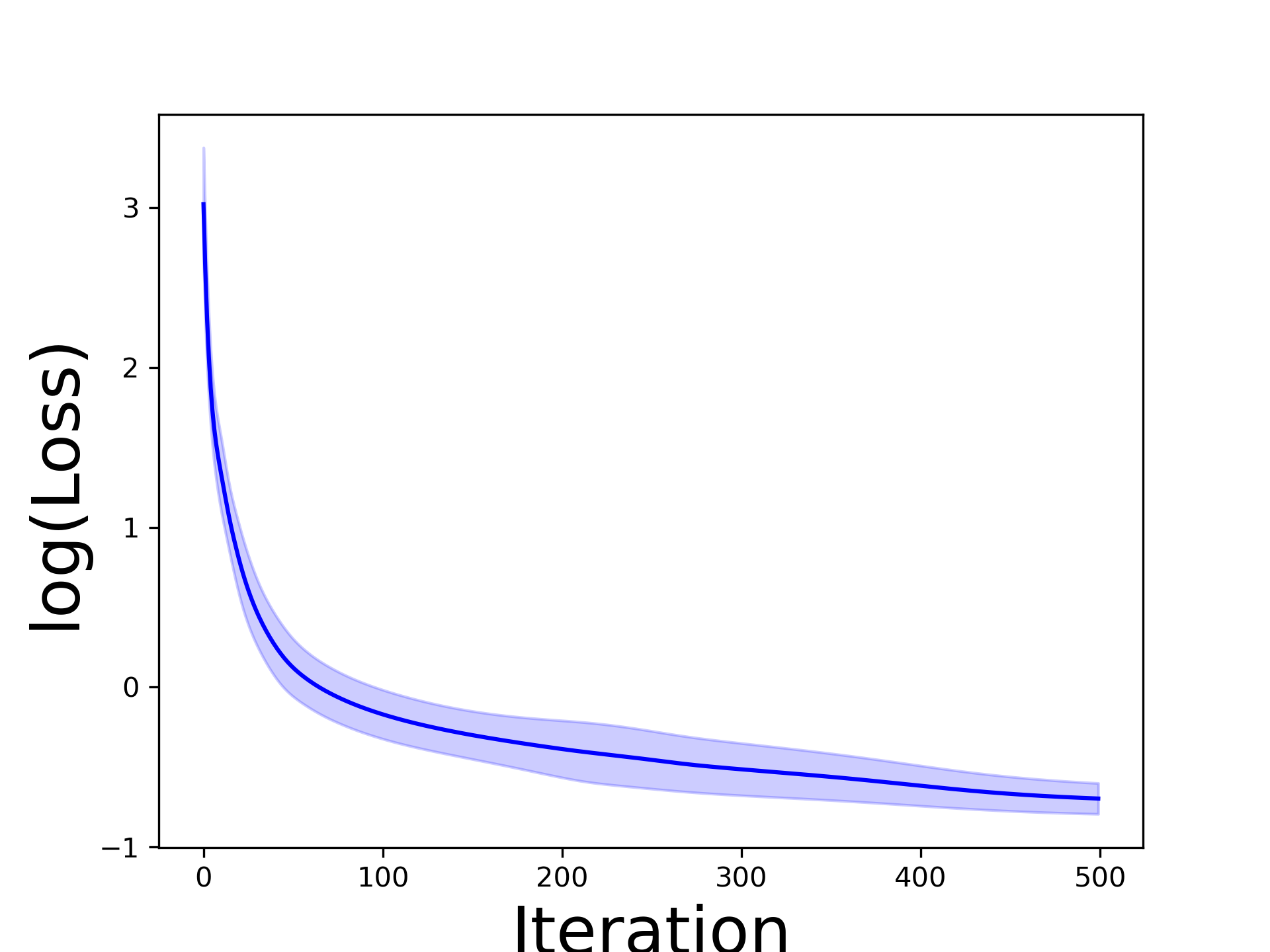} % second figure
\caption{log(Loss)}
\label{fig:loss_pnull}
\end{subfigure}
\begin{subfigure}[b]{0.32\textwidth}
\centering
\includegraphics[width=\textwidth]{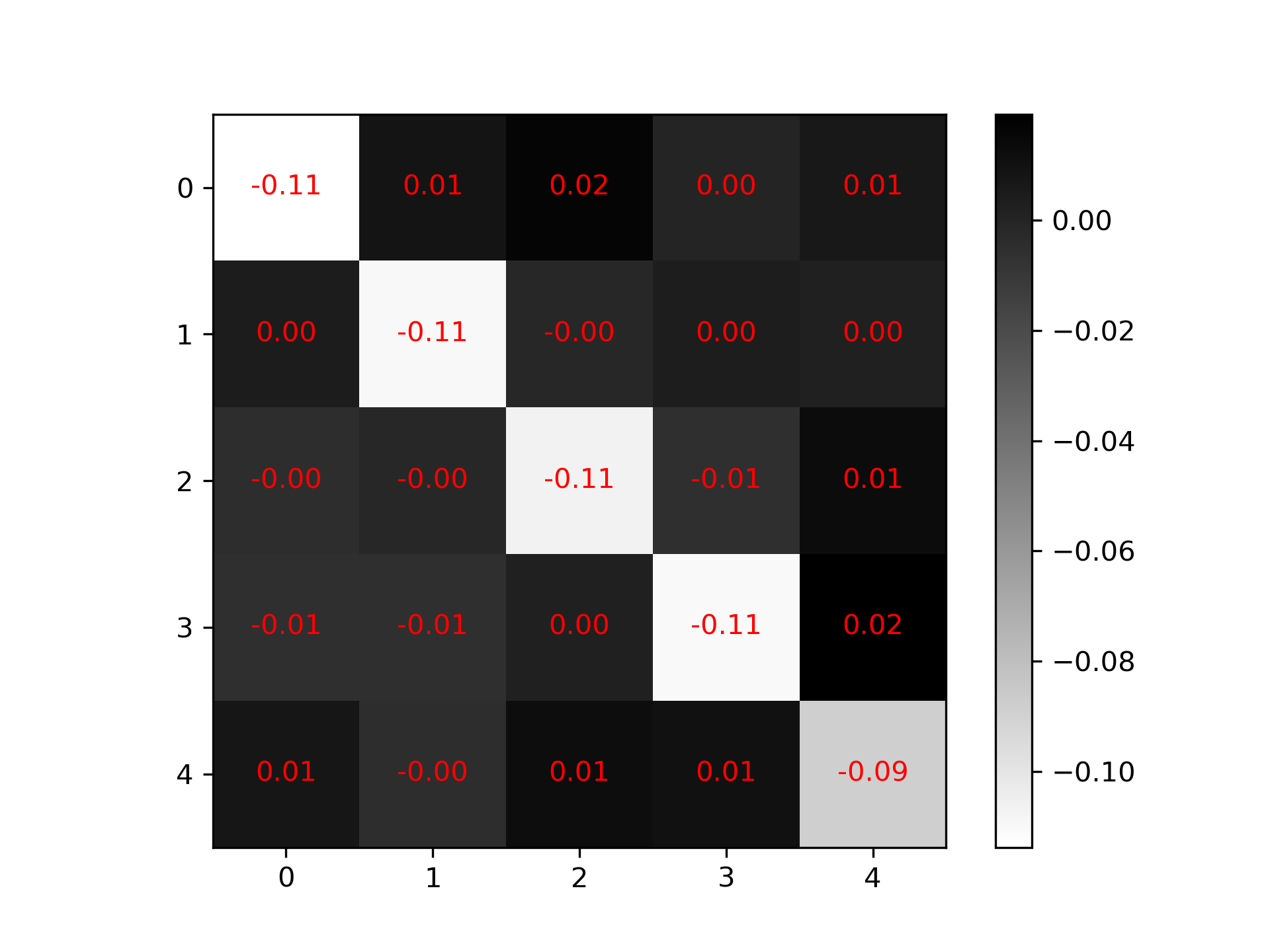}
\caption{Visualization of $A_0$} 
\label{fig:A0_imshow_pnull}
\end{subfigure}
\begin{subfigure}[b]{0.32\textwidth}
\centering
\includegraphics[width=\textwidth]{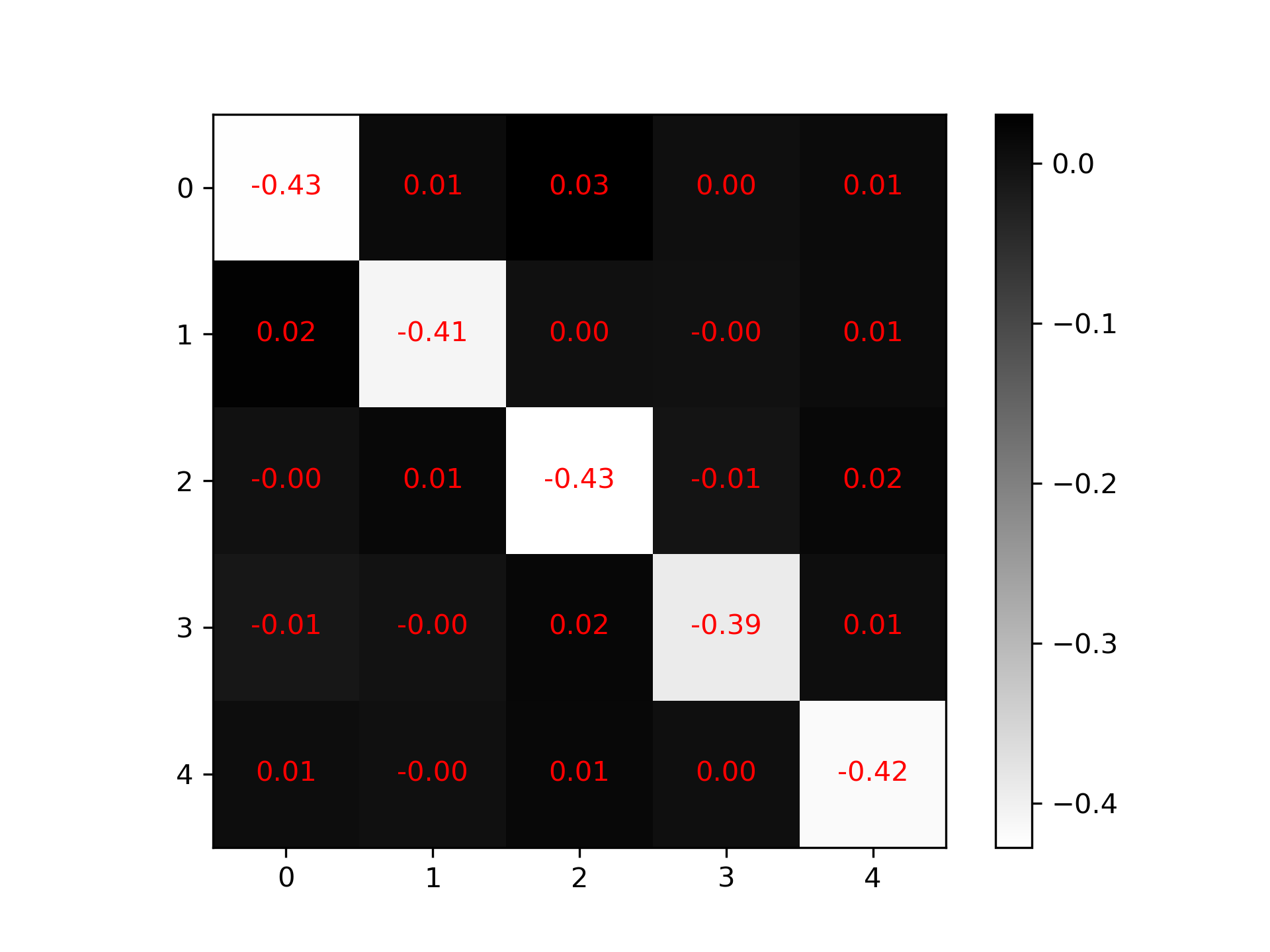}
\caption{Visualization of $A_1$} 
\label{fig:A1_imshow_pnull}
\end{subfigure}
\begin{subfigure}[b]{0.32\textwidth}
\centering
\includegraphics[width=\textwidth]{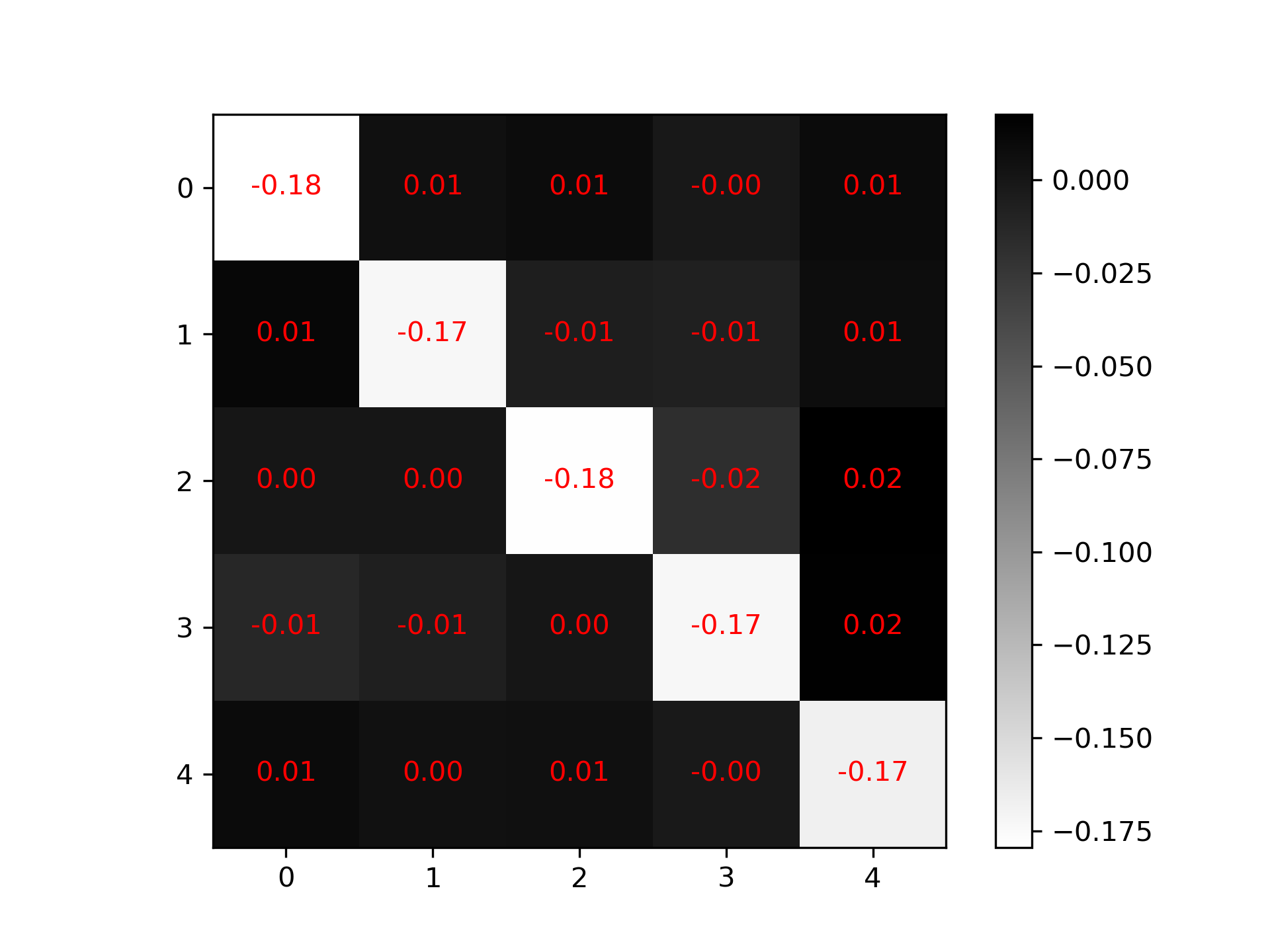}
\caption{Visualization of $A_2$} 
\label{fig:A2_imshow_pnull}
\end{subfigure} 
\end{figure}
\fi

\begin{figure}[h]
\centering
\begin{subfigure}{0.24\textwidth}
\centering
\includegraphics[width=\textwidth]{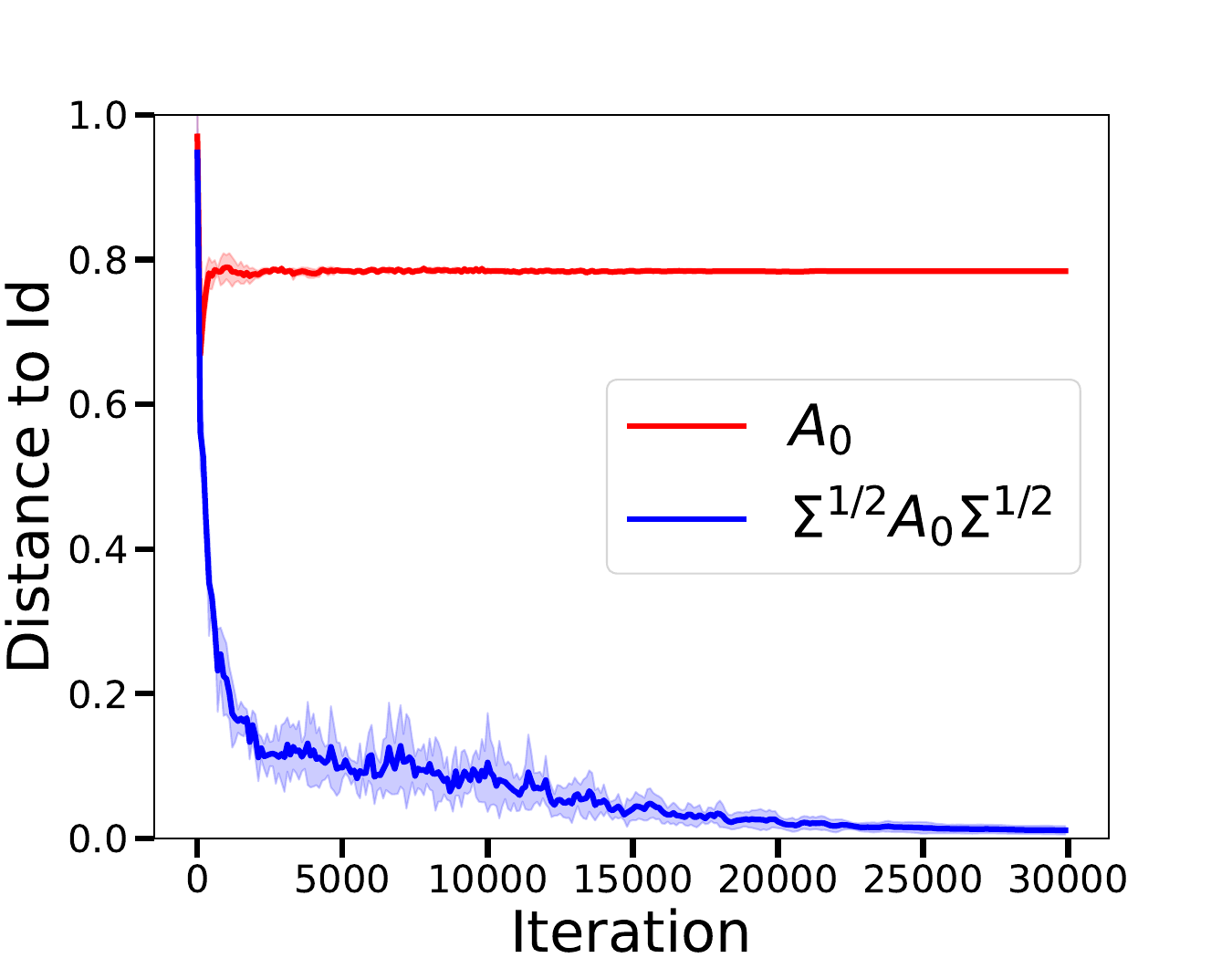} % first figure
\caption{$\Dist(\Sigma^{1/2} A_0 \Sigma^{1/2},I)$}
\label{fig:A0_pnull_trend}
\end{subfigure}\hfill
\begin{subfigure}{0.24\textwidth}
\centering
\includegraphics[width=\textwidth]{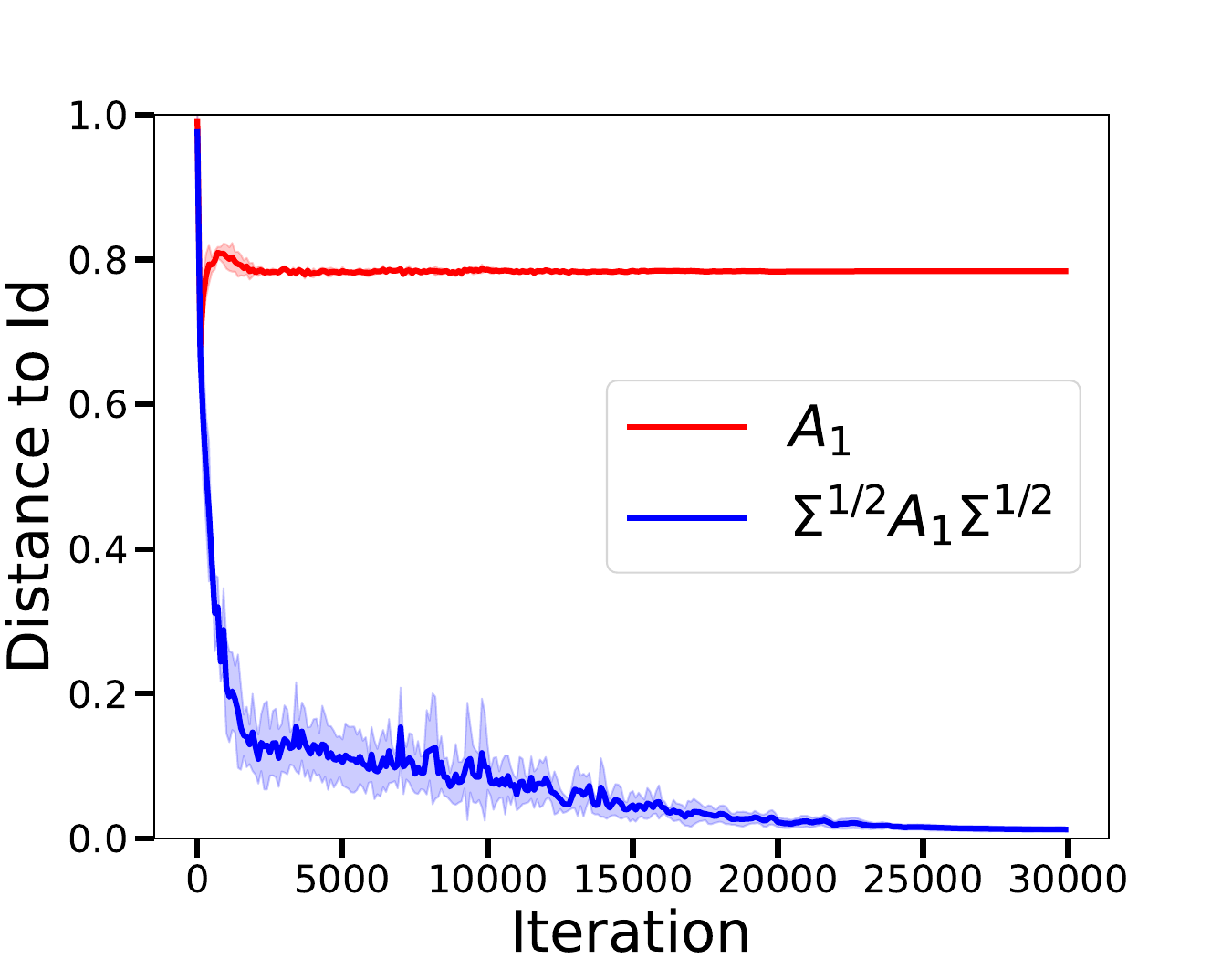}  % second figure
\caption{$\Dist(\Sigma^{1/2} A_1 \Sigma^{1/2},I)$}
\label{fig:A1_pnull_trend}
\end{subfigure}
\begin{subfigure}{0.24\textwidth}
\centering
\includegraphics[width=\textwidth]{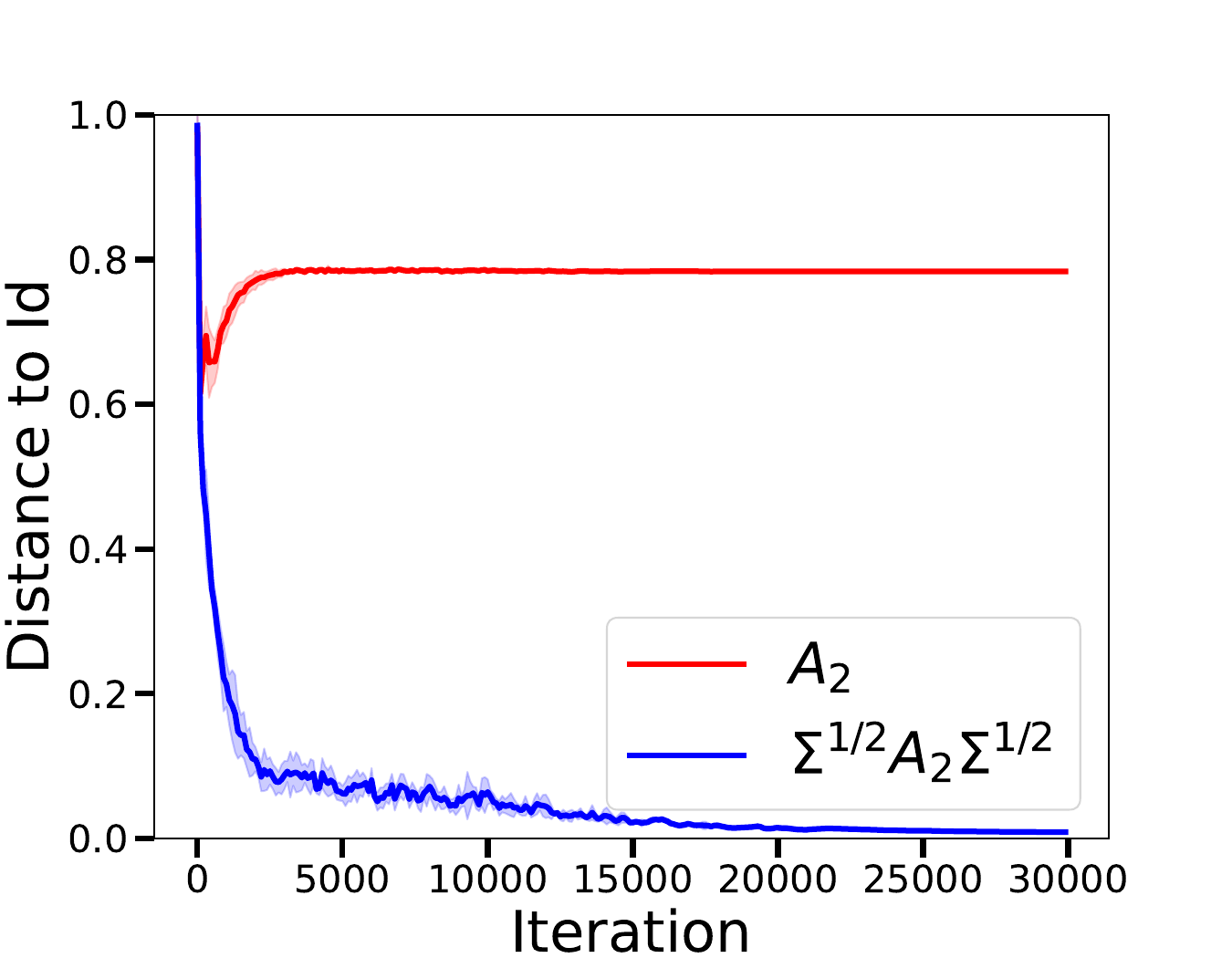}  % second figure
\caption{$\Dist(\Sigma^{1/2} A_2 \Sigma^{1/2},I)$}
\label{fig:A2_pnull_trend}
\end{subfigure}
\begin{subfigure}{0.24\textwidth}
\centering
\includegraphics[height=0.71\textwidth]{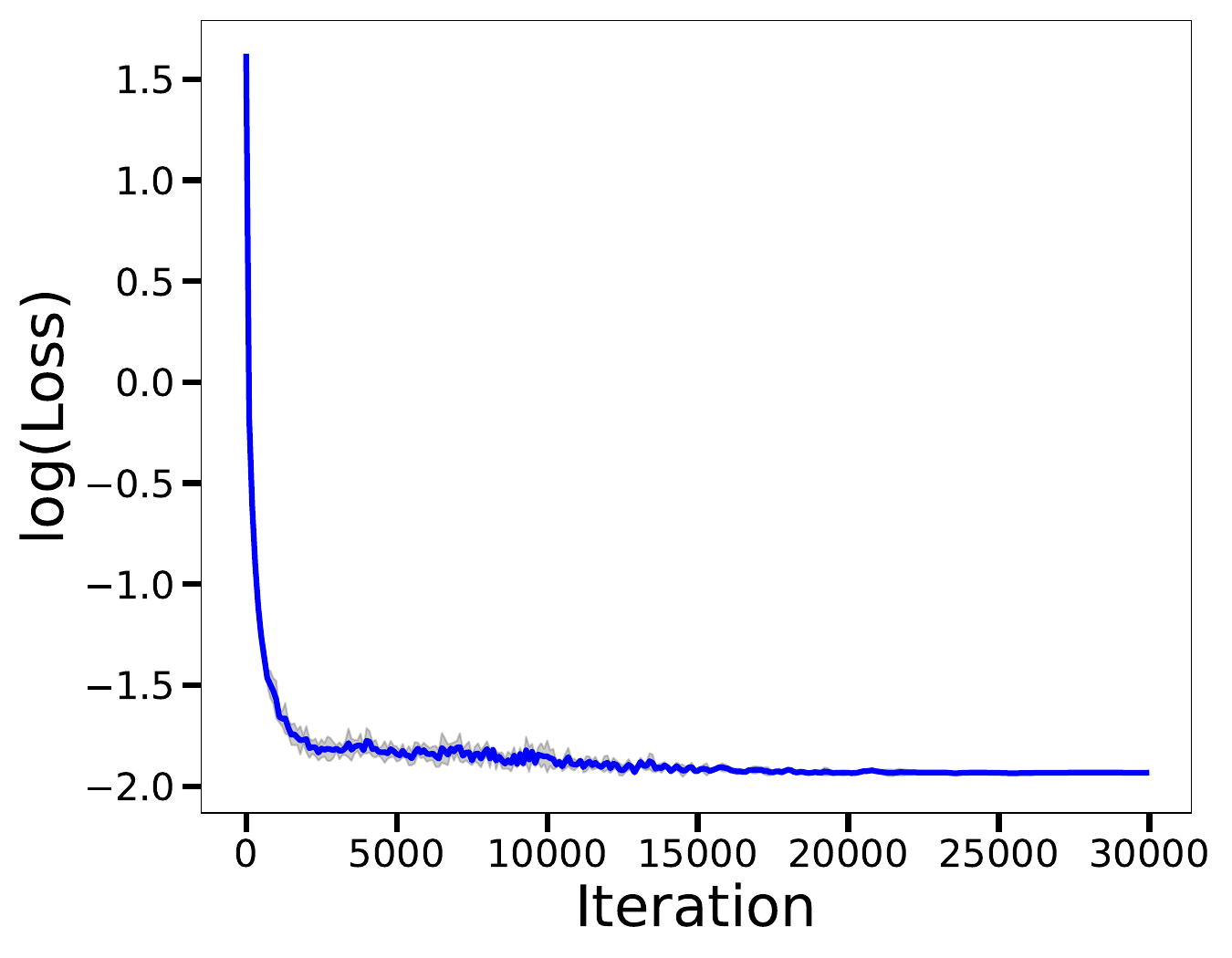}  % second figure
\caption{log(Loss)}
\label{fig:loss_pnull}
\end{subfigure}
\caption{Plots for verifying convergence of general linear transformer, defined in \autoref{t:L_layer_P_0}. Figure (d) shows convergence of loss to $0$. Figures (a),(b),(c) illustrate convergence of $A_i$'s to identity. More specifically,  the blue line represents $\Dist(\Sigma^{1/2} A_i \Sigma^{1/2},I)$,  which measures the convergence to the critical point introduced in \autoref{t:L_layer_P_0} (corresponding to $\Sigma^{-1}$-preconditioned gradient descent). The red line  represents $\Dist(A_i,I)$; it remains constant indicating that the trained transformer is not implementing plain gradient descent. }

\label{fig:pnull_trend}
\end{figure}
\begin{figure}
\begin{subfigure}[b]{0.32\textwidth}
\centering
\includegraphics[width=\textwidth]{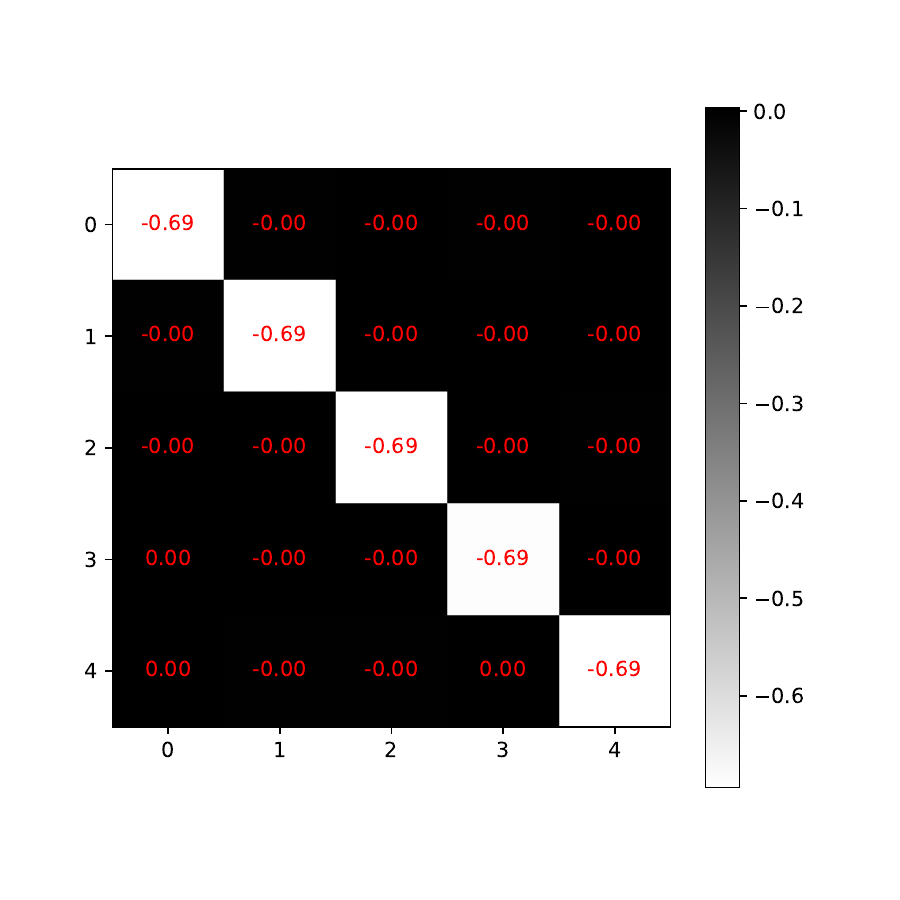}
\caption{Visualization of $\Sigma^{1/2} A_0 \Sigma^{1/2}$} 
\label{fig:A0_imshow_pnull}
\end{subfigure}
\begin{subfigure}[b]{0.32\textwidth}
\centering
\includegraphics[width=\textwidth]{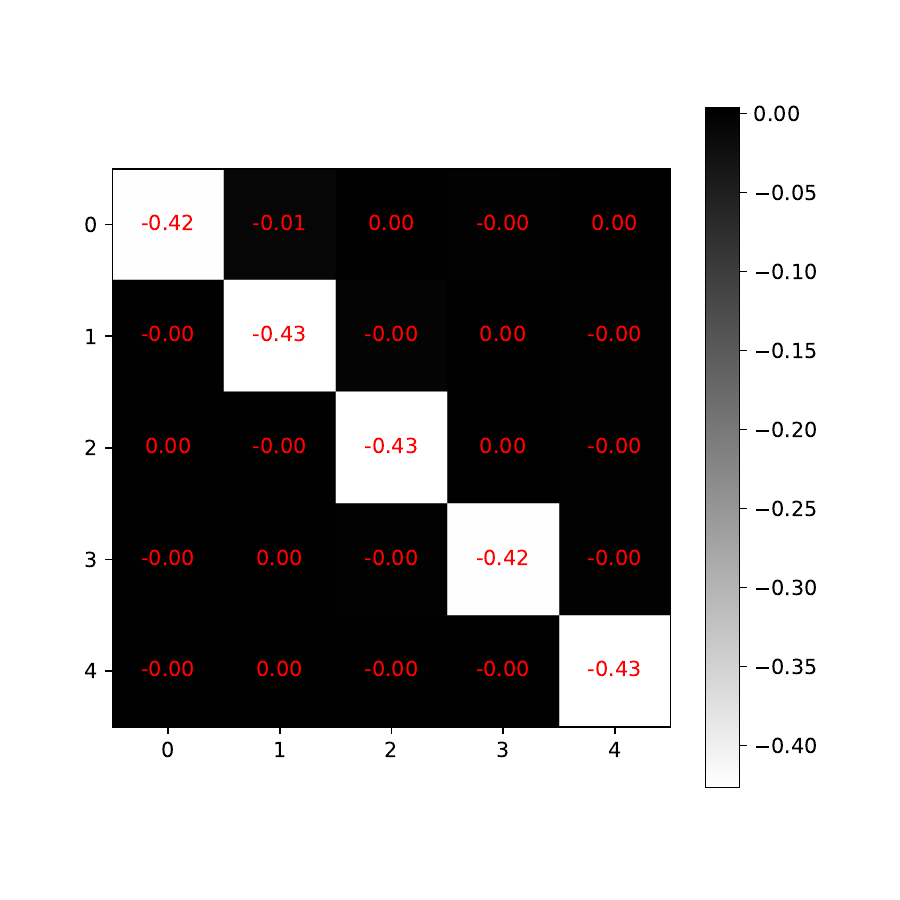}
\caption{Visualization of $\Sigma^{1/2} A_1 \Sigma^{1/2}$} 
\label{fig:A1_imshow_pnull}
\end{subfigure}
\begin{subfigure}[b]{0.32\textwidth}
\centering
\includegraphics[width=\textwidth]{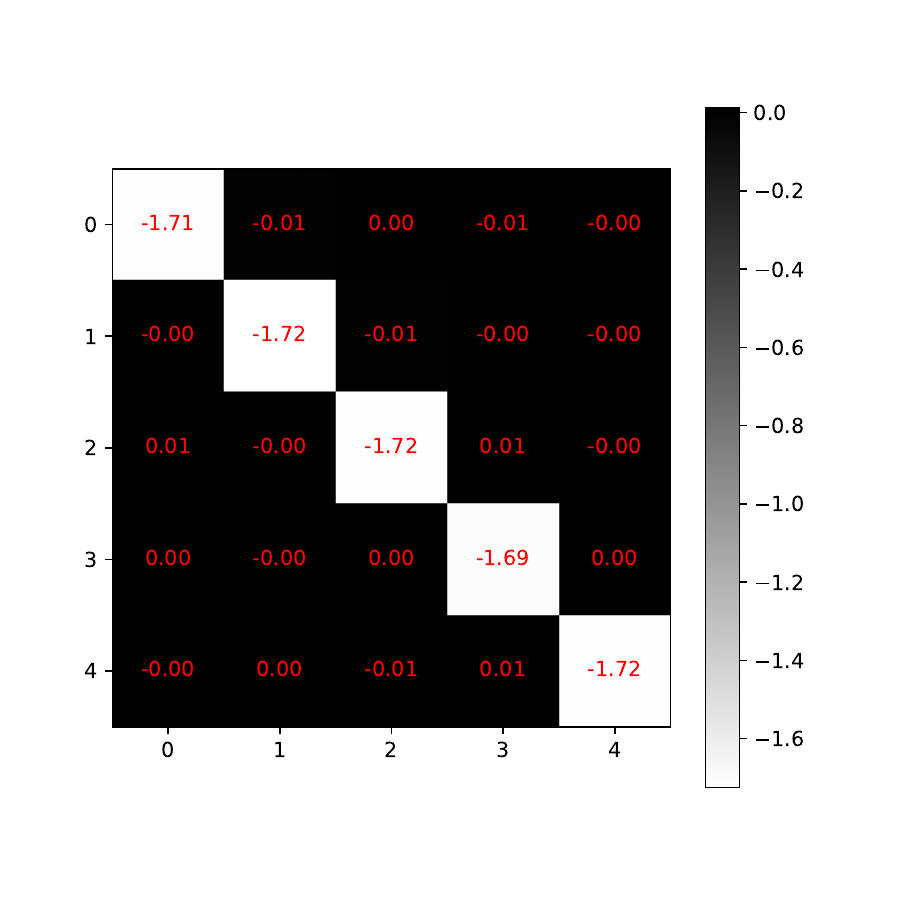}
\caption{Visualization of $\Sigma^{1/2} A_2 \Sigma^{1/2}$} 
\label{fig:A2_imshow_pnull}
\end{subfigure} 
\caption{Visualization of learned weights  for the setting of \autoref{t:L_layer_P_0}. We visualize each $\Sigma^{1/2} A_i \Sigma^{1/2}$ matrix at the end of training.
Note that the optimized weights match the stationary point discussed in \autoref{t:L_layer_P_0}.}
\label{fig:pnull_imshow}
\end{figure}

\section{Multi-layer transformers beyond standard optimization methods}
\label{s:k_layer_PQ}
In this section, we study the more general setting of
\begin{align}\label{eq:full_attention}
P_i = \begin{bmatrix}
B_i & 0 \\ 
0 & 1
\end{bmatrix}, \quad Q_i = \begin{bmatrix}
A_i & 0 \\ 
0 & 0
\end{bmatrix}   \quad \text{where $A_i, B_i \in \R^{d\times d}$.}
\end{align}
Note that $A_i, B_i$ are not constrained to be symmetric.
Similar to \autoref{s:k_layer_Q}, we introduce the following simplified notation: let $A := \lrbb{A_i}_{i=0}^{L-1} \in \R^{L \times d \times d}$ and $B := \lrbb{B_i}_{i=0}^{L-1} \in \R^{L \times d \times d}$. We use $f(A,B)$ to denote the in-context loss of $f\left(\{P_i, Q_i\}^{L-1}_{i=0}\right)$ as defined in \eqref{def:ICL linear}, when $P_i$ and $Q_i$ depend on $B_i$ and $A_i$ as described in \eqref{eq:full_attention}.

With this relaxed parameter configuration, it turns out transformers can learn algorithms beyond the conventional preconditioned gradient descent. The next theorem asserts the possibility of learning a novel preconditioned gradient method. Let $L$ be a fixed but arbitrary number of layers. 

\begin{theorem}\label{t:L_layer_P_identity}
Let $\Sigma$ denote any PSD matrix. Assume that $\tx{i} \overset{iid}{\sim} \mathcal{N}(0,\Sigma)$ and $\wstar \sim \mathcal{N}(0,\Sigma^{-1})$, for $i=1, \dots, n$, and for some $\Sigma \succ 0$. Consider the optimization of in-context loss for a $L$-layer linear transformer with the the parameter configuration in Eq.~\eqref{eq:full_attention} given by:
\begin{align}
\min_{\lrbb{A_i,B_i}_{i=0}^{L-1}} f \lrp{A,B}.
\end{align}

Let $\S \subset \R^{2\times L \times d \times d}$ be defined as follows: $(A,B) \in \S$ if and only if for all $i\in \lrbb{0,\dots,k}$, there exists scalars $a_i,b_i \in \R$ such that $A_i = a_i \Sigma^{-1}$ and $B_i = b_i I$. Then
\begin{align*}
\inf_{(A,B) \in \S} \sum_{i=0}^{L-1} \lrn{\nabla_{A_i} f(A,B)}_F^2 + \lrn{\nabla_{B_i} f(A,B)}_F^2 = 0,
\numberthis \label{e:T:near-stationarity}
\end{align*}
where $\nabla_{A_i} f$ denotes derivative wrt the Frobenius norm $\lrn{A_i}_F$.
\end{theorem}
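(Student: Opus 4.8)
The plan is to mimic the strategy sketched for Theorem \ref{t:L_layer_P_0}, but now carrying the extra family of weights $\{B_i\}$. First I would set up a clean symbolic description of the forward pass. Using the sparsity structure \eqref{eq:full_attention}, the residual update $Z_{\ell+1} = Z_\ell + \frac1n \att_{P_\ell,Q_\ell}(Z_\ell)$ decouples into an $x$-block and a $y$-block: the $y$-row evolves exactly as the preconditioned-gradient iterate of Lemma \ref{lemma:construction_for_adaptive_GD} driven by $A_\ell$ (the $B_\ell$ block sits in the upper-left of $P_\ell$ and hence does not touch the $y$-row directly), while the $x$-block undergoes a linear map $X_{\ell+1} = X_\ell + \frac1n B_\ell X_\ell X_\ell^\top \cdot (\text{Gram-type factor})$. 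Concretely I would show, as in the $P=0$ case, that the in-context loss can be written as the expectation of a fixed polynomial in the matrices $\Sigma^{1/2}A_i\Sigma^{1/2}$, $\Sigma^{1/2}B_i\Sigma^{1/2}$ (or $\Sigma^{1/2}B_i\Sigma^{-1/2}$, depending on which side the distortion acts) and the empirical covariance $\widehat\Sigma := \frac1n XX^\top$, whose law, after the change of variables $\overline{x}^{(i)} = \Sigma^{-1/2}\tx{i}$, is rotationally invariant (a Wishart). This is the key reduction: it converts the question into one about a polynomial objective that is invariant under simultaneous conjugation $A_i \mapsto O A_i O^\top$, $B_i \mapsto O B_i O^\top$ by any orthogonal $O$ commuting with $\Sigma$ — i.e. after the $\Sigma^{\pm 1/2}$ rescaling, invariant under the full orthogonal group $O(d)$.

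Next I would exploit this invariance to build a descent flow that stays inside $\S$. Start gradient flow $\dot A_i = -\nabla_{A_i} f$, $\dot B_i = -\nabla_{B_i} f$ from an initialization in $\S$, say $A_i = a_i^{(0)}\Sigma^{-1}$, $B_i = b_i^{(0)} I$. The crucial claim is that $\S$ is invariant under this flow: because $f$ is $O(d)$-invariant in the rescaled variables and $\S$ is exactly the fixed-point set of the $O(d)$ action (the scalar multiples of $\Sigma^{-1}$, resp. $I$, are precisely the matrices fixed by all orthogonal conjugations commuting with $\Sigma$), the gradient of $f$ at a point of $\S$ must itself be $O(d)$-fixed, hence again of the form $(\alpha_i\Sigma^{-1}, \beta_i I)$. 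A short equivariance argument — differentiate the identity $f(O A O^\top, O B O^\top) = f(A,B)$ in $O$ at the identity, or simply average the gradient over the group — gives this. Therefore the flow restricted to $\S$ is a well-defined finite-dimensional flow in the scalars $(a_i,b_i)$, and along it $\frac{d}{dt} f = -\sum_i \lrn{\nabla_{A_i} f}_F^2 + \lrn{\nabla_{B_i} f}_F^2 \le 0$. Since $f \ge 0$ is bounded below, $\frac{d}{dt}f$ must be arbitrarily close to $0$ along the trajectory, which forces $\sum_i \lrn{\nabla_{A_i}f}_F^2 + \lrn{\nabla_{B_i}f}_F^2$ to get arbitrarily small at points of $\S$; that is exactly \eqref{e:T:near-stationarity}.

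The main obstacle I anticipate is the bookkeeping in the first step: verifying that the $x$-block dynamics driven by $B_i$ really do preserve the polynomial-in-$\widehat\Sigma$ form and that the resulting objective is genuinely invariant under the claimed $O(d)$ action once one conjugates by $\Sigma^{\pm 1/2}$ — in particular making sure the two-sided roles of $\Sigma^{1/2}$ (acting on covariates) and $\Sigma^{-1/2}$ (through $\wstar$) are tracked consistently so that $A_i \propto \Sigma^{-1}$ and $B_i \propto I$ is indeed the fixed-point set and not, say, $B_i \propto \Sigma^{-1}$. A secondary subtlety, already flagged in the footnote after Theorem \ref{t:L_layer_P_0}, is that the infimum need not be attained, so one should only claim near-stationarity; this is automatically handled by the lower-boundedness-of-$f$ argument above and requires no extra work. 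Everything else — writing out the forward pass, the equivariance of the gradient, and the monotonicity of $f$ along gradient flow — is routine given the machinery already developed for Theorem \ref{t:L_layer_P_0}.
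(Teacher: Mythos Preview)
Your overall strategy --- exploit the orthogonal symmetry after $\Sigma$-rescaling to trap a descent flow inside $\S$, then invoke lower-boundedness of $f$ --- is exactly the paper's. But the specific mechanism you propose has a gap. You claim that because $f$ is invariant under the group action and $\S$ is its fixed-point set, the \emph{full} gradient flow $\dot A_i = -\nabla_{A_i} f$, $\dot B_i = -\nabla_{B_i} f$ started in $\S$ stays in $\S$. That inference requires the action to be by isometries of the Frobenius inner product, and here it is not: the relevant action is $A_i \mapsto U_\Sigma^\top A_i U_\Sigma$ and $B_i \mapsto U_\Sigma^{-1} B_i U_\Sigma$ with $U_\Sigma := \Sigma^{1/2} U \Sigma^{-1/2}$. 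Equivariance then forces $\nabla_{A_i} f$ at a point of $\S$ to be fixed by the \emph{adjoint} action, which yields $\nabla_{A_i} f \propto \Sigma$, not $\Sigma^{-1}$. So ordinary gradient flow in the $(A_i,B_i)$-coordinates leaves $\S$ immediately unless $\Sigma$ is a scalar multiple of $I$.

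The paper does not claim $\nabla f \in T\S$. Instead it proves a directional-derivative identity: for $(A,B)\in\S$ and any perturbation $R_j$ of $A_j$ (resp.\ $S_j$ of $B_j$), one has $\left.\tfrac{d}{dt}\right|_{t=0} f(A(tR_j,j),B) = \left.\tfrac{d}{dt}\right|_{t=0} f(A(t\,r_j\Sigma^{-1},j),B)$ with $r_j = \tfrac{1}{d}\Tr(\Sigma^{1/2}R_j\Sigma^{1/2})$, and analogously $s_j = \tfrac{1}{d}\Tr(\Sigma^{-1/2}S_j\Sigma^{1/2})$ for the $B$-block. This is obtained by replacing $X_0$ with $U_\Sigma X_0$, pushing $U_\Sigma$ through every layer (one checks inductively that $X_i(U_\Sigma X_0,\cdot)=U_\Sigma X_i(X_0,\cdot)$ and the analogous relation for $\tfrac{d}{dt}X_i$), and then averaging over $U$. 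Taking $R_j = \nabla_{A_j} f$ shows that the $\S$-constrained flow $\dot A_j = -r_j\Sigma^{-1}$, $\dot B_j = -s_j I$ (which is \emph{not} gradient flow) decreases $f$ at exactly the rate $-\sum_j\lrn{\nabla_{A_j}f}_F^2-\lrn{\nabla_{B_j}f}_F^2$, and lower-boundedness finishes. Your argument can be repaired either by passing to the variables $\tilde A_i := \Sigma^{1/2}A_i\Sigma^{1/2}$, $\tilde B_i := \Sigma^{-1/2}B_i\Sigma^{1/2}$ (there the action is orthogonal, your equivariance claim holds verbatim, and the two gradient-norm infima are equivalent up to the condition number of $\Sigma$), or by replacing ``gradient flow stays in $\S$'' with the paper's directional-derivative averaging and an $\S$-constrained flow.
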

In words, parameter matrices in $\S$ implement the following  algorithm: $\lrbb{A_i = a_i \Sigma^{-1}}_{i=0}^{L-1}$ plays the role of a distribution-dependent preconditioner for the gradient steps. At the same time, $B_i = b_i I$ transforms the covariates themselves to make the Gram matrix have better condition number with each iteration. When the $\Sigma = I$, the algorithm implemented by $A_i \propto I, b_i \propto I$ is exactly the GD++ algorithm proposed in \citep{von2022transformers} (up to stepsize).

The result in \eqref{e:T:near-stationarity} says that the set $\S$ \emph{essentially}\footnote{Once again, similar to the case of \autoref{t:L_layer_P_0}, the infimum may not be attained, so it is possible that $\S$ contains points with arbitrarily small gradient, but does not contain a point with exactly $0$ gradient.} contains critical points of the in-context loss $f(A,B)$. In the next section, we provide empirical evidence that the trained transformer parameters do in fact converge to a point in $\S$.

%\eqref{e:L_layer_P_identity} even for non-isotropic data.

\subsection{Experimental validations for \autoref{t:L_layer_P_identity}}
\label{s:experiment_PQ}
The experimental setup is similar to \autoref{s:experiment_pnull}: we consider ICL for linear regression with $n=10,d=5$, with $\tx{i}\sim \mathcal{N}(0,\Sigma)$ and $\wstar \sim \mathcal{N}(0,\Sigma^{-1})$, where $\Sigma = U^T D U$, where $U$ is a uniformly random orthogonal matrix, and $D$ is a fixed diagonal matrix with entries $(1,1,0.25,0.0625,1)$. We train a three-layer linear transformer, under the constraints in \eqref{eq:full_attention} which is less restrictive than \eqref{eq:sparse_attention} in \autoref{s:experiment_pnull}. We train the matrices $A_0,A_1,A_2, B_0, B_1$~\footnote{Note that the objective function does not depend on $B_2$.} using ADAM with the same setup as in Section \autoref{s:experiment_pnull}. We repeat this experiment 5 times with different random seeds, each time we sample a different $U$ (i.e. $\Sigma$).

In \autoref{fig:log_loss_PQ}, we plot the in-context loss through the iterations of L-BFGS; the loss appears to be converging to 0, suggesting that parameters are converging to the global minimum. 

\begin{figure}[htbp]
\centering 
\centering
\begin{subfigure}[b]{0.32\textwidth}
\centering
\includegraphics[width=\textwidth]{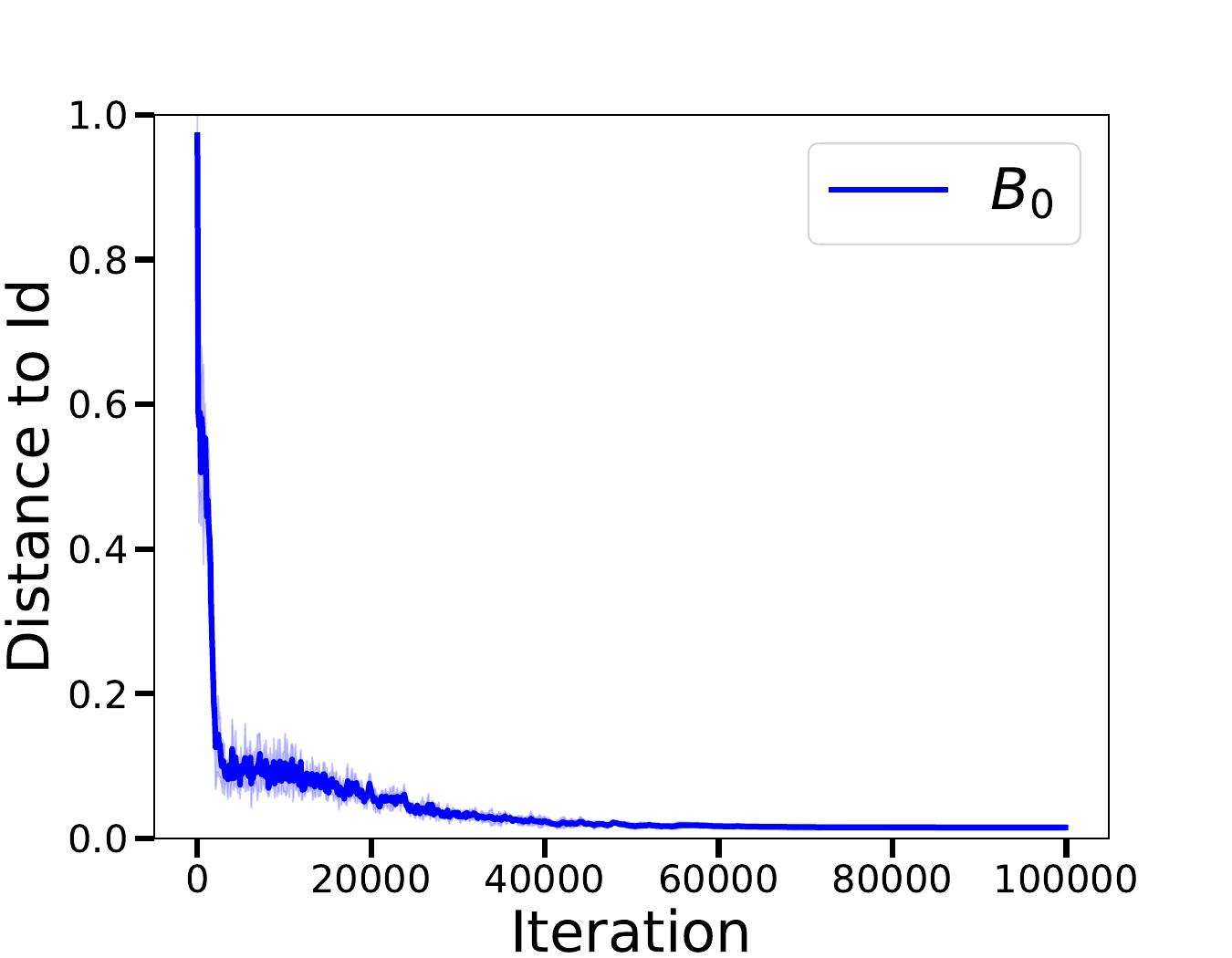}
\caption{$\Dist(B_0,I)$} 
\label{fig:B0_PQ_trend}
\end{subfigure}
\begin{subfigure}[b]{0.32\textwidth}
\centering
\includegraphics[width=\textwidth]{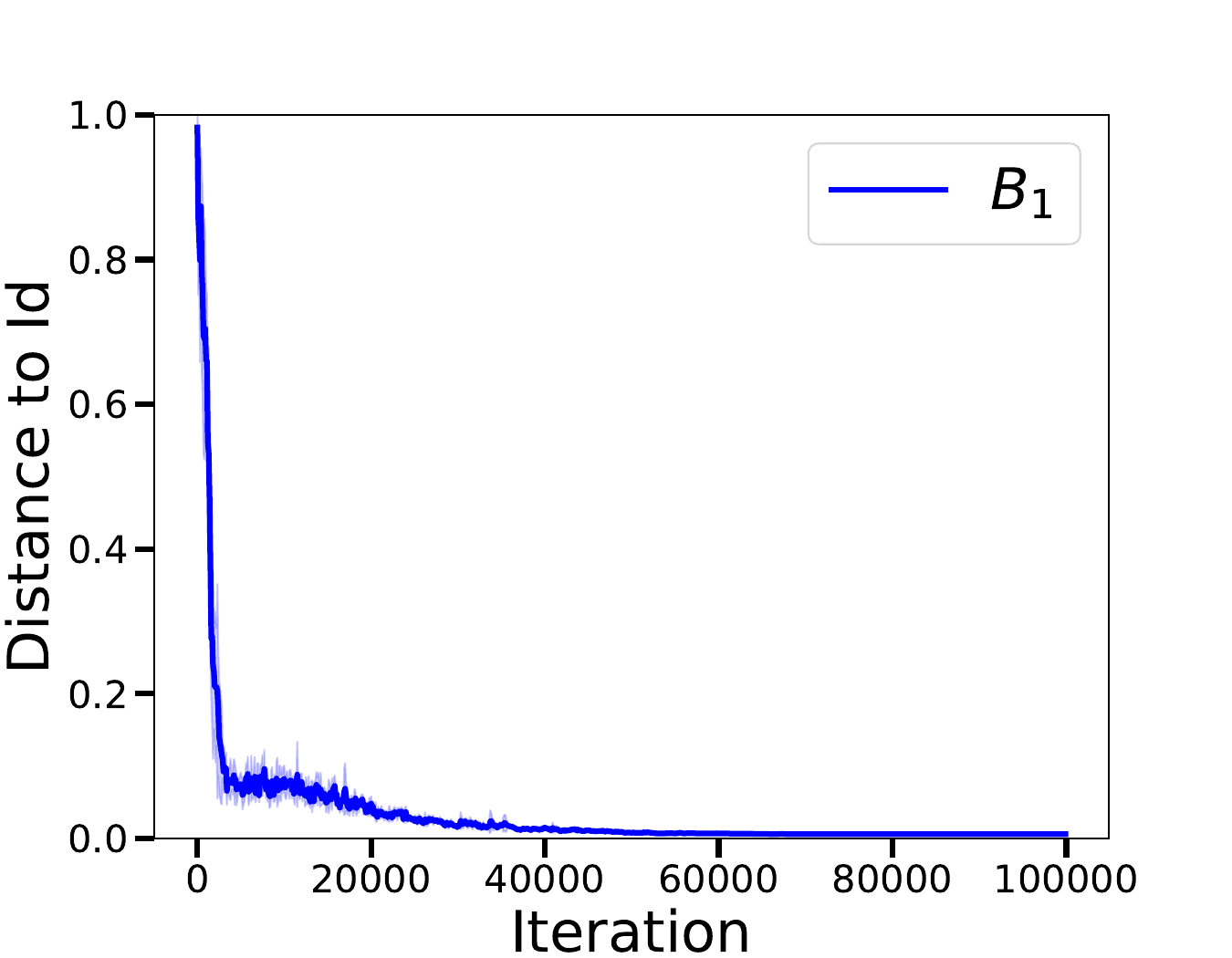}
\caption{$\Dist(B_1,I)$} 
\label{fig:B1_PQ_trend}
\end{subfigure}
\begin{subfigure}[b]{0.32\textwidth}
\centering
\includegraphics[height=0.71\textwidth]{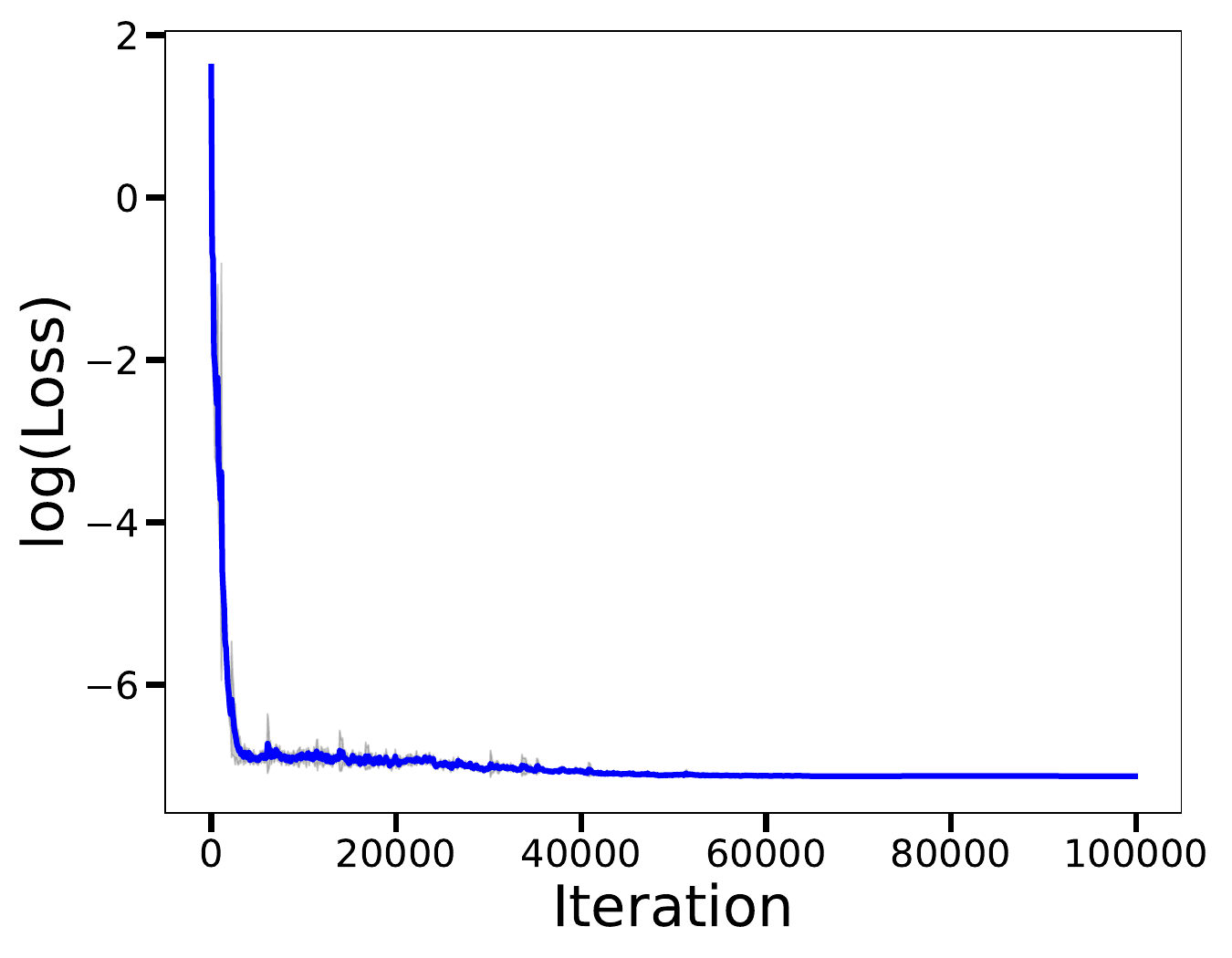}
\caption{log(Loss)} 
\label{fig:log_loss_PQ}
\end{subfigure}
\begin{subfigure}[b]{0.32\textwidth}
\centering
\includegraphics[width=\textwidth]{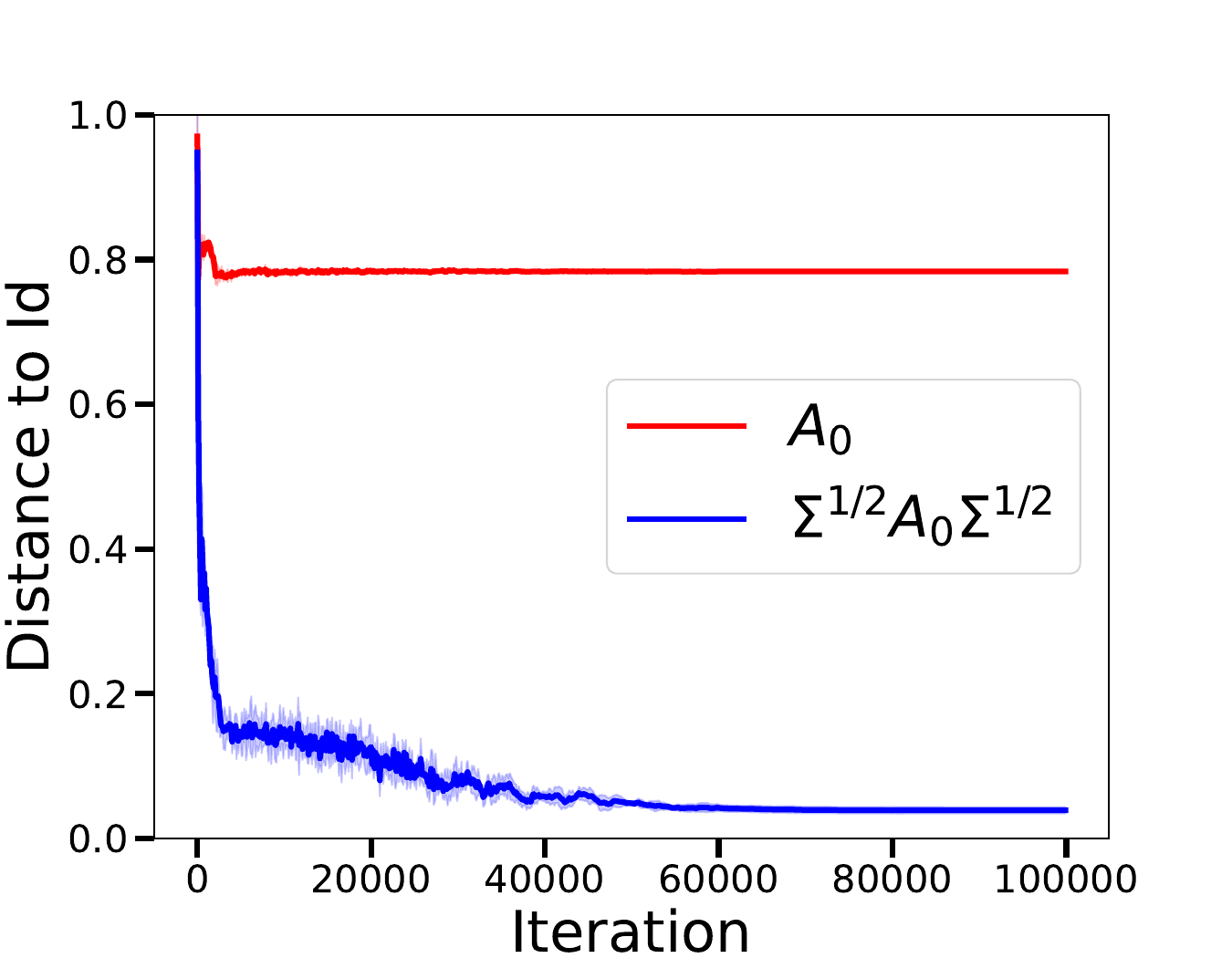}
\caption{Distances for $A_0$ } 
\label{fig:A0_PQ_trend}
\end{subfigure}
\begin{subfigure}[b]{0.32\textwidth}
\centering
\includegraphics[width=\textwidth]{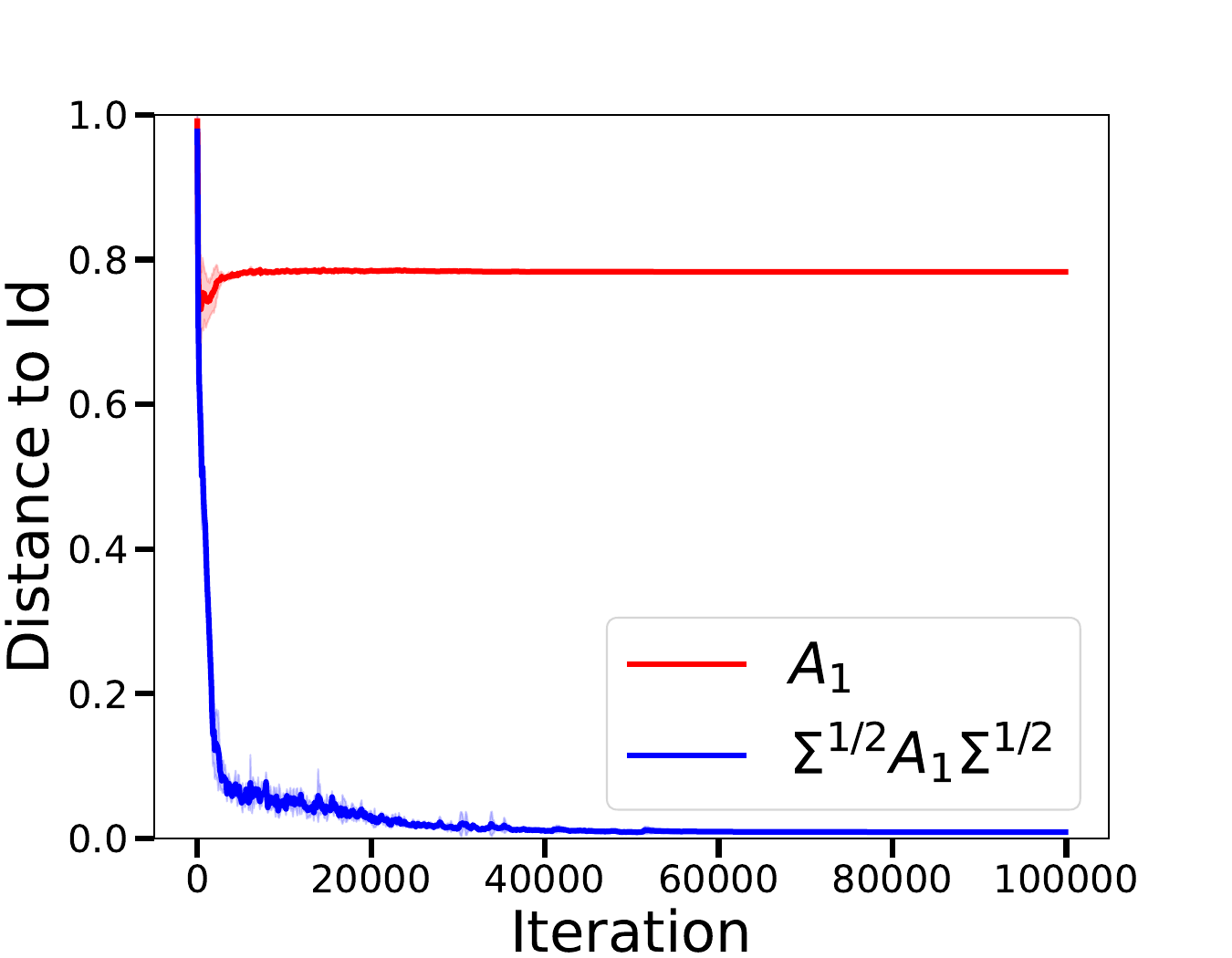}
\caption{Distances for $A_1$ } 
\label{fig:A1_PQ_trend}
\end{subfigure}
\begin{subfigure}[b]{0.32\textwidth}
\centering
\includegraphics[width=\textwidth]{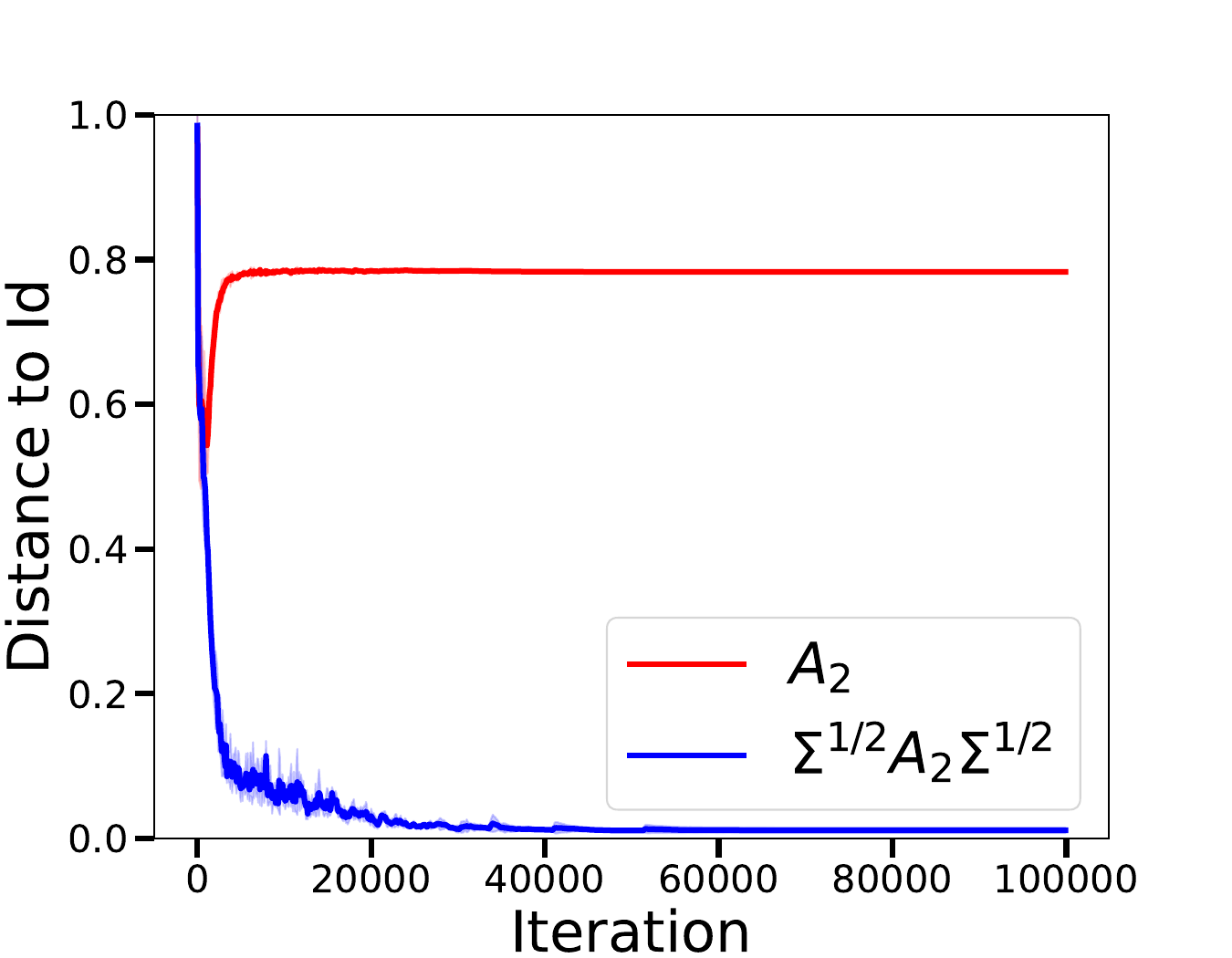}
\caption{Distances for $A_2$ } 
\label{fig:A2_PQ_trend}
\end{subfigure}  
\caption{Plots for verifying convergence of general linear transformer, defined in \autoref{t:L_layer_P_identity}. Figure (c) shows convergence of loss to $0$. Figures (a),(b) illustrate convergence of $B_0,B_1$ to identity. Figures (d),(e),(f) illustrate convergence of $A_i$'s to $\Sigma^{-1}$.}
\end{figure}

We next verify that the parameters at convergence are consistent with \autoref{t:L_layer_P_identity}. We will once again use $\Dist(M,I)$ to measure the distance from $M$ to the identity matrix, up to scaling (see \autoref{s:experiment_pnull} for definition of $Dist$). Figures \ref{fig:B0_PQ_trend} and \ref{fig:B1_PQ_trend} show that $B_0$ and $B_1$ are close to identity, as $\Dist(B_i, I)$ appears to be decreasing to 0. Figures \ref{fig:A0_PQ_trend}, \ref{fig:A1_PQ_trend} and \ref{fig:A2_PQ_trend} plot $\Dist(A_i, I)$ (red line) and $\Dist(\Sigma^{1/2} A_i \Sigma^{1/2}, I)$ (blue line); the results here suggest that $A_i$ is converging to $\Sigma^{-1}$, up to scaling. In Figures \ref{fig:B0_PQ_trend} and \ref{fig:B1_PQ_trend}, we observe that $B_0$ and $B_1$ also converge to the identity matrix (\emph{without} left and right multiplication by $\Sigma^{1/2}$), consistent with \autoref{t:L_layer_P_identity}. 
% Nonetheless, it is somewhat surprising that $A_i$'s adapt to the data covariance, but $B_i$'s remain identity.

%Finally, in Figures \ref{fig:A0_PQ},\ref{fig:A1_PQ},\ref{fig:A2_PQ}, we provide a direct visualization of of each $\Sigma^{1/2} A_i \Sigma^{1/2}$ matrix at the end of training. One can visually verify that these matrices are indeed close to identity, up to scaling.

We visualize each of $B_0,B_1$ in \autoref{fig:B0_B1} and $A_0,A_1,A_2$ in Figure~\ref{fig:A0_PQ_app}-\ref{fig:A2_PQ_app} at the end of training. We highlight two noteworthy observations:
\begin{enumerate}
\item Let $X_k\in \R^{d\times n}$ denote the first $d$ rows of $Z_k$, which are the output at layer $k-1$ defined in \eqref{eq:recursion}. Then the update to $X_k$ is $X_{k+1} = X_k + B_k X_k \aa X_k^T A_k X_k \approx X_{k+1} = X_k \lrp{I - |a_kb_k| \aa X_k^T X_k}$, where $\aa$ is a mask defined in \eqref{eq:softmax}. As noted by \cite{von2022transformers}, this may be motivated by curvature correction. 
\item As seen in Figures \ref{fig:A0_PQ_app}-\ref{fig:A2_PQ_app} in the Appendix, $\| A_0\| \leq \| A_1\| \leq \| A_2\|$ that implies the transformer implements gradient descent with a small stepsize at the beginning and a large stepsize at the end. This makes intuitive sense as $X_2$ is better-conditioned compared to $X_1$, due to the choice of $B_0,B_1$. This can be contrasted with the plots in Figures \eqref{fig:A0_imshow_pnull}-\eqref{fig:A2_imshow_pnull}, where similar trends are not as pronounced because $B_i$'s are constrained to be $0$.
\end{enumerate}

\begin{figure}
\centering  
\includegraphics[width=0.35\textwidth]{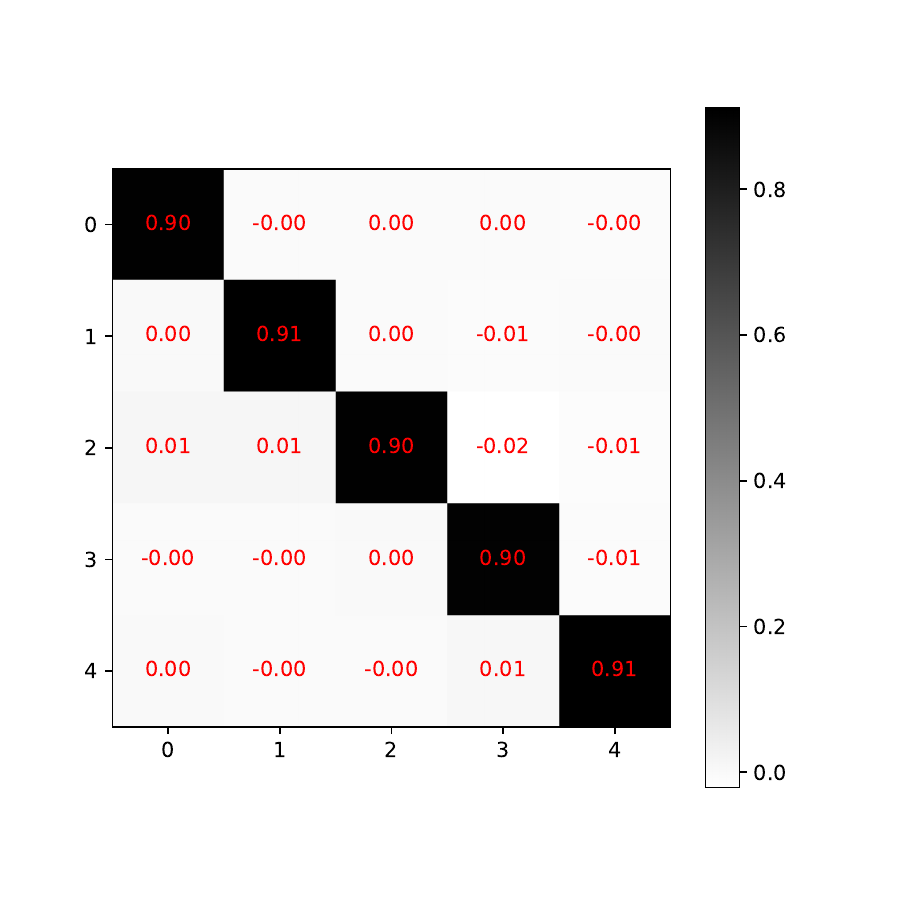}
\includegraphics[width=0.35\textwidth]{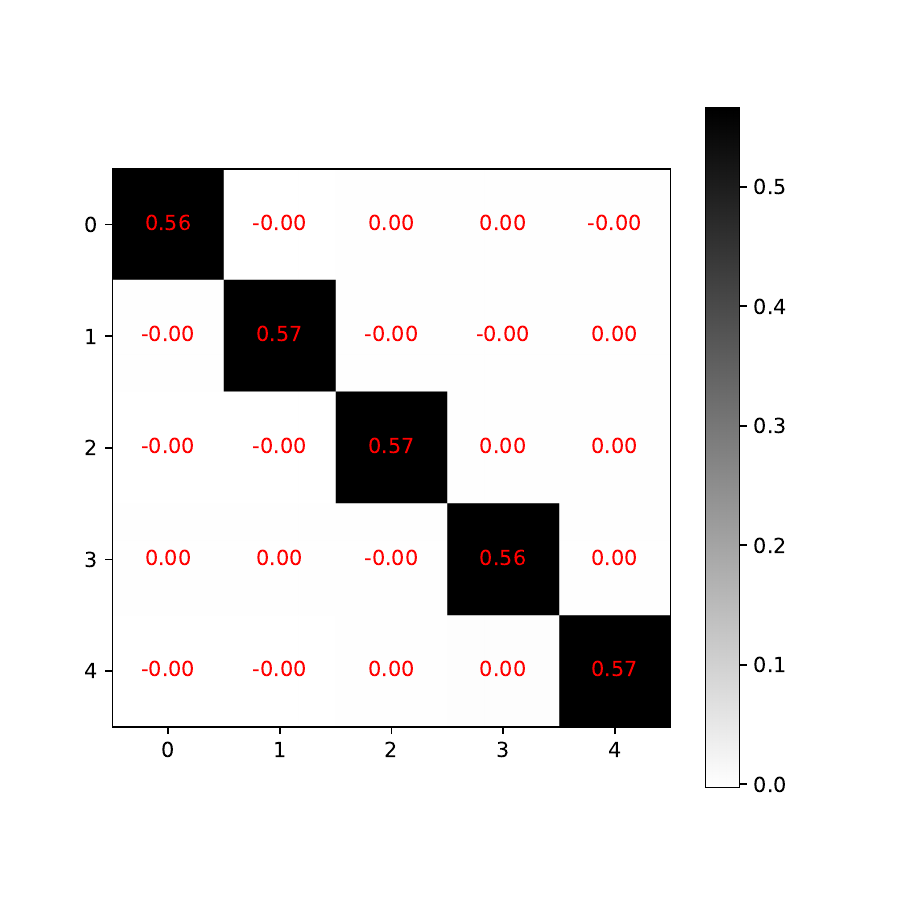} 
\vspace{-10pt}
\caption{Visualization of optimized weight matrices $B_0$ (left) and $B_1$ (right). One can see that the weight pattern matches the stationary point analyzed in \autoref{t:L_layer_P_identity}. Matrices $A_0$, $A_1$ and $A_2$ are similar to \autoref{fig:pnull_imshow}, and are visualized in \autoref{fig:appendix_full} in \autoref{s:additional_plots}.}
\label{fig:B0_B1}
\end{figure}

\section{Discussion} 

We take a first step toward proving that transformers can learn algorithms when trained over a set of random problem instances. Specifically, we investigate the possibility of learning gradient based methods when training on the in-context loss for linear regression. 
For a single layer transformer, we prove that the global minimum corresponds to a single iteration of preconditioned gradient descent.  
For multiple layers, we show that certain parameters that correspond to the critical points of the in-context loss can be interpreted as a broad family of adaptive gradient-based algorithms.

We discuss below two interesting future directions.

\textbf{Beyond linear attention.} The standard transformer architecture comes with nonlinear activations in attention. Hence, the natural question here is to ask the effect of nonlinear activations for our main results. Empirically, \cite{von2022transformers} have observed that for linear regression task, softmax activations  generally degrade the prediction performance, and in particular, softmax transformers typically need more attention heads to match their performance with that of linear transformers. 

As a first step analysis, we consider the nonlinear attention defined as 
\begin{align}
    \atth_{P,Q}(Z) \coloneqq  P ZM \  \sigma (Z^\top Q Z)\quad \text{where }\sigma :\R\to \R \text{ is applied entry-wise.}
\end{align}  
The following result is an analog of \autoref{thm:main_single} for single-layer nonlinear attention. It characterizes a global minimizer for this setting with ReLU activation.
Here, our choice of ReLU activation was motivated by \cite{wortsman2023replacing} who observed that ReLU attention matches the performance of softmax attention for vision transformers. 

\begin{theorem} \label{thm:nonlinear} 
Consider the single layer nonlinear attention setting with  $\sigma = \mathrm{ReLU}$.
Assume that vector $\tx{i}$ is sampled from $\mathcal{N}(0, I_{d})$. 
Moreover, assume that  $\wstar$ is sampled  from $\mathcal{N}(0, I_d)$.  
Consider the parameter configuration $P_0,Q_0$  where we additionally assume that the last row of $Q_0$ is zero. 
Then, the following parameters form a global minimizer of the corresponding in-context loss:
    \begin{align}  
P_0 = \begin{bmatrix}
0_{d\times d} & 0 \\ 
0 & 1 
\end{bmatrix} ,\quad Q_0 = -\frac{1}{\frac{1}{2}\frac{n-1}{n}+(d+2)\frac{1}{n}} \cdot \begin{bmatrix}
 I_d & 0\\
0  & 0
\end{bmatrix} .
\end{align} 
\end{theorem}
The proof of \autoref{thm:nonlinear}  involves an instructive argument and leverages tools from \citep{erdogdu2016scaled}; we defer it to \autoref{pf:nonlinear}. Thus, for isotropic Gaussian data, the structure of global minimum under ReLU attention is similar to the global minimum with linear attention, established in \autoref{thm:main_single} (specifically the minimizer for the isotropic date given in \eqref{minimum:linear}).

\textbf{Refined landscape analysis for multilayer transformer.} \autoref{t:L_layer_P_identity}   proves that a stationary point of the in-context loss corresponds to  implementing a preconditioned gradient method. However, we do not prove that all critical points of the non-convex objective lead to similar optimization methods. In fact, in \autoref{l:bar} in   \autoref{a:foo}, we prove that the in-context loss can have multiple critical points. It will be interesting to analyze the set of all critical points and try to understand their algorithmic interpretations, as well as quantify their (sub)optimality.

\begin{ack}
We thank Ekin Akyürek,  Johannes von Oswald, Alex Gu and Joshua Robinson for helpful discussions.
Kwangjun Ahn was supported by the ONR grant (N00014-20-1-2394) and MIT-IBM Watson as
well as a Vannevar Bush fellowship from Office of the Secretary of Defense. Kwangjun Ahn also
acknowledges support from the Kwanjeong Educational Foundation. Xiang Cheng acknowledges
support from NSF CCF-2112665 (TILOS AI Research Institute). Hadi Daneshmand acknowledges support from NSF TRIPODS program (award DMS-2022448). Suvrit Sra acknowledges support
from an NSF CAREER grant (1846088), and NSF CCF-2112665 (TILOS AI Research Institute).

\end{ack}
\bibliographystyle{plainnat}
\bibliography{ref}

\begin{thebibliography}{33}
\providecommand{\natexlab}[1]{#1}
\providecommand{\url}[1]{\texttt{#1}}
\expandafter\ifx\csname urlstyle\endcsname\relax
  \providecommand{\doi}[1]{doi: #1}\else
  \providecommand{\doi}{doi: \begingroup \urlstyle{rm}\Url}\fi

\bibitem[Ahn et~al.(2023)Ahn, Cheng, Song, Yun, Jadbabaie, and
  Sra]{ahn2023linear}
Kwangjun Ahn, Xiang Cheng, Minhak Song, Chulhee Yun, Ali Jadbabaie, and Suvrit
  Sra.
\newblock Linear attention is (maybe) all you need (to understand transformer
  optimization).
\newblock \emph{arXiv preprint arXiv:2310.01082}, 2023.

\bibitem[Aky{\"u}rek et~al.(2022)Aky{\"u}rek, Schuurmans, Andreas, Ma, and
  Zhou]{akyurek2022learning}
Ekin Aky{\"u}rek, Dale Schuurmans, Jacob Andreas, Tengyu Ma, and Denny Zhou.
\newblock What learning algorithm is in-context learning? investigations with
  linear models.
\newblock \emph{International Conference on Learning Representations}, 2022.

\bibitem[Allen-Zhu and Li(2023)]{allen2023physics}
Zeyuan Allen-Zhu and Yuanzhi Li.
\newblock Physics of language models: Part 1, context-free grammar.
\newblock \emph{arXiv preprint arXiv:2305.13673}, 2023.

\bibitem[Alon and Spencer(2016)]{alon2016probabilistic}
Noga Alon and Joel~H Spencer.
\newblock \emph{The probabilistic method}.
\newblock John Wiley \& Sons, 2016.

\bibitem[Black et~al.(2022)Black, Biderman, Hallahan, Anthony, Gao, Golding,
  He, Leahy, McDonell, Phang, et~al.]{black2022gpt}
Sid Black, Stella Biderman, Eric Hallahan, Quentin Anthony, Leo Gao, Laurence
  Golding, Horace He, Connor Leahy, Kyle McDonell, Jason Phang, et~al.
\newblock Gpt-neox-20b: An open-source autoregressive language model.
\newblock \emph{Proceedings of BigScience -- Workshop on Challenges {\&}
  Perspectives in Creating Large Language Models}, 2022.

\bibitem[Brown et~al.(2020)Brown, Mann, Ryder, Subbiah, Kaplan, Dhariwal,
  Neelakantan, Shyam, Sastry, Askell, et~al.]{brown2020language}
Tom Brown, Benjamin Mann, Nick Ryder, Melanie Subbiah, Jared~D Kaplan, Prafulla
  Dhariwal, Arvind Neelakantan, Pranav Shyam, Girish Sastry, Amanda Askell,
  et~al.
\newblock Language models are few-shot learners.
\newblock \emph{Neural Information Processing Systems}, 2020.

\bibitem[Devlin et~al.(2019)Devlin, Chang, Lee, and Toutanova]{bert}
Jacob Devlin, Ming-Wei Chang, Kenton Lee, and Kristina Toutanova.
\newblock {BERT}: Pre-training of deep bidirectional transformers for language
  understanding.
\newblock In \emph{Proceedings of the 2019 Conference of the North {A}merican
  Chapter of the Association for Computational Linguistics: Human Language
  Technologies, Volume 1}, 2019.

\bibitem[Duchi et~al.(2011)Duchi, Hazan, and Singer]{duchi2011adaptive}
John Duchi, Elad Hazan, and Yoram Singer.
\newblock Adaptive subgradient methods for online learning and stochastic
  optimization.
\newblock \emph{Journal of machine learning research}, 12\penalty0 (7), 2011.

\bibitem[Edelman et~al.(2022)Edelman, Goel, Kakade, and
  Zhang]{edelman2022inductive}
Benjamin~L Edelman, Surbhi Goel, Sham Kakade, and Cyril Zhang.
\newblock Inductive biases and variable creation in self-attention mechanisms.
\newblock In \emph{International Conference on Machine Learning (ICML)}, 2022.

\bibitem[Elhage et~al.(2021)Elhage, Nanda, Olsson, Henighan, Joseph, Mann,
  Askell, Bai, Chen, Conerly, DasSarma, Drain, Ganguli, Hatfield-Dodds,
  Hernandez, Jones, Kernion, Lovitt, Ndousse, Amodei, Brown, Clark, Kaplan,
  McCandlish, and Olah]{elhage2021mathematical}
Nelson Elhage, Neel Nanda, Catherine Olsson, Tom Henighan, Nicholas Joseph, Ben
  Mann, Amanda Askell, Yuntao Bai, Anna Chen, Tom Conerly, Nova DasSarma, Dawn
  Drain, Deep Ganguli, Zac Hatfield-Dodds, Danny Hernandez, Andy Jones, Jackson
  Kernion, Liane Lovitt, Kamal Ndousse, Dario Amodei, Tom Brown, Jack Clark,
  Jared Kaplan, Sam McCandlish, and Chris Olah.
\newblock A mathematical framework for transformer circuits.
\newblock \emph{Transformer Circuits Thread}, 2021.
\newblock https://transformer-circuits.pub/2021/framework/index.html.

\bibitem[Erdogdu et~al.(2016)Erdogdu, Dicker, and Bayati]{erdogdu2016scaled}
Murat~A Erdogdu, Lee~H Dicker, and Mohsen Bayati.
\newblock Scaled least squares estimator for glms in large-scale problems.
\newblock \emph{Advances in Neural Information Processing Systems}, 29, 2016.

\bibitem[Garg et~al.(2022)Garg, Tsipras, Liang, and Valiant]{garg2022can}
Shivam Garg, Dimitris Tsipras, Percy~S Liang, and Gregory Valiant.
\newblock What can transformers learn in-context? a case study of simple
  function classes.
\newblock \emph{Advances in Neural Information Processing Systems},
  35:\penalty0 30583--30598, 2022.

\bibitem[Giannou et~al.(2023)Giannou, Rajput, Sohn, Lee, Lee, and
  Papailiopoulos]{giannou2023looped}
Angeliki Giannou, Shashank Rajput, Jy-yong Sohn, Kangwook Lee, Jason~D Lee, and
  Dimitris Papailiopoulos.
\newblock Looped transformers as programmable computers.
\newblock \emph{arXiv preprint arXiv:2301.13196}, 2023.

\bibitem[Graves et~al.(2014)Graves, Wayne, and Danihelka]{graves2014neural}
Alex Graves, Greg Wayne, and Ivo Danihelka.
\newblock Neural turing machines.
\newblock \emph{arXiv preprint arXiv:1410.5401}, 2014.

\bibitem[Hochreiter and Schmidhuber(1997)]{hochreiter1997long}
Sepp Hochreiter and J{\"u}rgen Schmidhuber.
\newblock Long short-term memory.
\newblock \emph{Neural computation}, 1997.

\bibitem[Jastrzebski et~al.(2018)Jastrzebski, Arpit, Ballas, Verma, Che, and
  Bengio]{jastrzebski2018residual}
Stanisław Jastrzebski, Devansh Arpit, Nicolas Ballas, Vikas Verma, Tong Che,
  and Yoshua Bengio.
\newblock Residual connections encourage iterative inference.
\newblock In \emph{International Conference on Learning Representations}, 2018.
\newblock URL \url{https://openreview.net/forum?id=SJa9iHgAZ}.

\bibitem[Li and Malik(2017)]{li2016learning}
Ke~Li and Jitendra Malik.
\newblock Learning to optimize.
\newblock In \emph{International Conference on Learning Representations}, 2017.

\bibitem[Lieber et~al.(2021)Lieber, Sharir, Lenz, and
  Shoham]{lieber2021jurassic}
Opher Lieber, Or~Sharir, Barak Lenz, and Yoav Shoham.
\newblock Jurassic-1: Technical details and evaluation.
\newblock \emph{White Paper. AI21 Labs}, 2021.

\bibitem[Mahankali et~al.(2023)Mahankali, Hashimoto, and Ma]{mahankali2023one}
Arvind Mahankali, Tatsunori~B Hashimoto, and Tengyu Ma.
\newblock One step of gradient descent is provably the optimal in-context
  learner with one layer of linear self-attention.
\newblock \emph{arXiv preprint arXiv:2307.03576}, 2023.

\bibitem[Min et~al.(2021)Min, Lewis, Zettlemoyer, and
  Hajishirzi]{min2021metaicl}
Sewon Min, Mike Lewis, Luke Zettlemoyer, and Hannaneh Hajishirzi.
\newblock Metaicl: Learning to learn in context.
\newblock \emph{Proceedings of the Conference of the North American Chapter of
  the Association for Computational Linguistics: Human Language Technologies},
  2021.

\bibitem[Olsson et~al.(2022)Olsson, Elhage, Nanda, Joseph, DasSarma, Henighan,
  Mann, Askell, Bai, Chen, Conerly, Drain, Ganguli, Hatfield-Dodds, Hernandez,
  Johnston, Jones, Kernion, Lovitt, Ndousse, Amodei, Brown, Clark, Kaplan,
  McCandlish, and Olah]{olsson2022context}
Catherine Olsson, Nelson Elhage, Neel Nanda, Nicholas Joseph, Nova DasSarma,
  Tom Henighan, Ben Mann, Amanda Askell, Yuntao Bai, Anna Chen, Tom Conerly,
  Dawn Drain, Deep Ganguli, Zac Hatfield-Dodds, Danny Hernandez, Scott
  Johnston, Andy Jones, Jackson Kernion, Liane Lovitt, Kamal Ndousse, Dario
  Amodei, Tom Brown, Jack Clark, Jared Kaplan, Sam McCandlish, and Chris Olah.
\newblock In-context learning and induction heads.
\newblock \emph{Transformer Circuits Thread}, 2022.

\bibitem[P{\'e}rez et~al.(2021)P{\'e}rez, Barcel{\'o}, and
  Marinkovic]{perez2021attention}
Jorge P{\'e}rez, Pablo Barcel{\'o}, and Javier Marinkovic.
\newblock Attention is turing complete.
\newblock \emph{The Journal of Machine Learning Research}, 2021.

\bibitem[Rae et~al.(2021)Rae, Borgeaud, Cai, Millican, Hoffmann, Song,
  Aslanides, Henderson, Ring, Young, et~al.]{rae2021scaling}
Jack~W Rae, Sebastian Borgeaud, Trevor Cai, Katie Millican, Jordan Hoffmann,
  Francis Song, John Aslanides, Sarah Henderson, Roman Ring, Susannah Young,
  et~al.
\newblock Scaling language models: Methods, analysis \& insights from training
  gopher.
\newblock \emph{arXiv preprint arXiv:2112.11446}, 2021.

\bibitem[Schlag et~al.(2021)Schlag, Irie, and Schmidhuber]{schlag2021linear}
Imanol Schlag, Kazuki Irie, and J{\"u}rgen Schmidhuber.
\newblock Linear transformers are secretly fast weight programmers.
\newblock In \emph{International Conference on Machine Learning}, pages
  9355--9366. PMLR, 2021.

\bibitem[Siegelmann and Sontag(1992)]{siegelmann1992computational}
Hava~T Siegelmann and Eduardo~D Sontag.
\newblock On the computational power of neural nets.
\newblock In \emph{Proceedings of Workshop on Computational learning theory},
  1992.

\bibitem[Vapnik(1999)]{vapnik1999nature}
Vladimir Vapnik.
\newblock \emph{The nature of statistical learning theory}.
\newblock Springer science \& business media, 1999.

\bibitem[Vaswani et~al.(2017)Vaswani, Shazeer, Parmar, Uszkoreit, Jones, Gomez,
  Kaiser, and Polosukhin]{vaswani2017attention}
Ashish Vaswani, Noam Shazeer, Niki Parmar, Jakob Uszkoreit, Llion Jones,
  Aidan~N Gomez, Lukasz Kaiser, and Illia Polosukhin.
\newblock Attention is all you need.
\newblock \emph{Advances in neural information processing systems}, 2017.

\bibitem[von Oswald et~al.(2023)von Oswald, Niklasson, Randazzo, Sacramento,
  Mordvintsev, Zhmoginov, and Vladymyrov]{von2022transformers}
Johannes von Oswald, Eyvind Niklasson, Ettore Randazzo, Jo{\~a}o Sacramento,
  Alexander Mordvintsev, Andrey Zhmoginov, and Max Vladymyrov.
\newblock Transformers learn in-context by gradient descent.
\newblock In \emph{International Conference on Machine Learning}, pages
  35151--35174. PMLR, 2023.

\bibitem[Wei et~al.(2022)Wei, Chen, and Ma]{wei2022statistically}
Colin Wei, Yining Chen, and Tengyu Ma.
\newblock Statistically meaningful approximation: a case study on approximating
  turing machines with transformers.
\newblock \emph{Advances in Neural Information Processing Systems},
  35:\penalty0 12071--12083, 2022.

\bibitem[Wortsman et~al.(2023)Wortsman, Lee, Gilmer, and
  Kornblith]{wortsman2023replacing}
Mitchell Wortsman, Jaehoon Lee, Justin Gilmer, and Simon Kornblith.
\newblock Replacing softmax with relu in vision transformers.
\newblock \emph{arXiv preprint arXiv:2309.08586}, 2023.

\bibitem[Xie et~al.(2021)Xie, Raghunathan, Liang, and Ma]{xie2021explanation}
Sang~Michael Xie, Aditi Raghunathan, Percy Liang, and Tengyu Ma.
\newblock An explanation of in-context learning as implicit bayesian inference.
\newblock \emph{International Conference on Learning Representations}, 2021.

\bibitem[Zhang et~al.(2023)Zhang, Frei, and Bartlett]{zhang2023trained}
Ruiqi Zhang, Spencer Frei, and Peter~L Bartlett.
\newblock Trained transformers learn linear models in-context.
\newblock \emph{arXiv preprint arXiv:2306.09927}, 2023.

\bibitem[Zhao et~al.(2023)Zhao, Panigrahi, Ge, and Arora]{zhao2023transformers}
Haoyu Zhao, Abhishek Panigrahi, Rong Ge, and Sanjeev Arora.
\newblock Do transformers parse while predicting the masked word?
\newblock \emph{arXiv preprint arXiv:2303.08117}, 2023.

\end{thebibliography}
\appendix

\newpage

\appendix
\renewcommand{\appendixpagename}{\centering \LARGE Appendix}
\appendixpage

\startcontents[section]
\printcontents[section]{l}{1}{\setcounter{tocdepth}{2}}

\newcommand{\proofstep}[2]{{\large \textbf{Step #1:  #2}}\\}

\section{Proofs for the single layer case}
\label{sec:single_proofs}

In this section, we prove our characterization of global minima for the single layer case (\autoref{thm:main_single}).
We begin by simplifying the loss into a more concrete form.
Throughout the proof, we will  write $P,Q$ instead of $P_0,Q_0$ for brevity.

\subsection{Rewriting the loss function}
\label{sec:rewrite loss}

Recall the in-context loss \eqref{def:ICL linear} for the single layer case $f(P,Q)$  is defined as:
\begin{align} 
f\left(P,Q\right)  
&=\E_{Z_0,\wstar} \left[ \left(Z_{0} +\frac{1}{n} \att_{P,Q}(Z_0) \right)_{(d+1),(n+1)} + \wstar^\top \tx{n+1}\right]^2
\end{align} 
From the definition of attention given in \eqref{eq:linear},
one can further spell out the expression $Z_0 +\frac{1}{n} \att_{P,Q}(Z_0)$ using the notation $Z_0 = [\tz{1} \ \tz{2} \ \cdots \ \tz{n+1}]$ as follows:
\begin{align}
&[\tz{1} \ \cdots \ \tz{n+1}] + \frac{1}{n}  P [\tz{1} \  \cdots \ \tz{n+1}]\aa \left([\tz{1} \ \cdots \ \tz{n+1}]^\top Q [\tz{1} \ \cdots \ 
\tz{n+1}] \right)\\
\quad\quad &= [\tz{1} \ \cdots \ \tz{n+1}] + \frac{1}{n}  P  \left(\sum_{i=1}^n \tz{i} {\tz{i}}^\top\right) Q  [\tz{1} \ \cdots \ 
\tz{n+1}]\,.
\end{align}
Thus, the last column of the above matrix can be expressed as
\begin{align} 
\begin{bmatrix}
\tx{n+1}\\ 0
\end{bmatrix}  + \frac{1}{n}  P\left(\sum_{i=1}^n \tz{i} {\tz{i}}^\top \right) Q  
\begin{bmatrix}
\tx{n+1}\\ 0
\end{bmatrix}\,, 
\end{align}
where note that the summation is for $i=1,2,\dots, n$ due to the mask matrix $\aa$.
Therefore, letting $\cc^\top $ be the last row of $P$, and $\bb\in \R^{d+1,d}$ be the first $d$ columns of $Q$ (as we did in \eqref{exp:simplify}), then  $\left[Z_{0} +\frac{1}{n} \att_{P,Q}(Z_0) \right]_{(d+1),(n+1)}$ can be written as
\begin{align} \label{exp:attention_single}
\frac{1}{n}  \cc^\top \left( \sum_{i=1}^n \tz{i} {\tz{i}}^\top \right) \bb 
\begin{bmatrix}
\tx{n+1}\\ 0
\end{bmatrix}\,,
\end{align}
in other words, $f(P,Q)$ only depends on the parameter $\cc$ and $\bb$.
Henceforth, we will write $f(P,Q)$ as  $f(\cc,\bb)$. Let us summarize our conclusion so far since it's crucial for the analysis to follow.
\begin{mdframed}[linecolor=black!40,linewidth=0.5pt,innertopmargin=3pt,innerleftmargin=3pt,innerrightmargin=3pt,innerbottommargin=3pt]
\textbf{Conclusion so far:} A careful inspection reveals that the in-context loss  only depends on the last row of $P$ and the first $d$ columns of $Q$. 
Thus, consider the following parametrization
\begin{align} \label{exp:simplify}
P = \begin{bmatrix}
0\\
\cc^\top
\end{bmatrix}\quad \text{and} \quad Q=\begin{bmatrix}
\bb & 0 
\end{bmatrix}\,, \text{ where $\cc\in\R^{d+1}$ and $\bb\in \R^{(d+1)\times d}$}.
\end{align}
Now with this parametrization, the in-context loss can be written as $f(\cc,\bb) \coloneqq  f([0 \,\,\ \cc]^\top, [\bb \,\, 0])$.  
\end{mdframed}

Now, let us spell out $f(\cc,\bb)$ based on \eqref{exp:attention_single} as follows:
\begin{align}
f(\cc,\bb) &= \E_{Z_0,\wstar} \left[\cc^\top \underbrace{\frac{1}{n}  \sum_i \tz{i} {\tz{i}}^\top}_{=:\MM} \bb \tx{n+1} + \wstar^\top \tx{n+1} \right]^2 \\
& =: \E_{Z_0,\wstar} \left[\cc^\top \MM \bb \tx{n+1} + \wstar^\top \tx{n+1} \right]^2 = \E_{Z_0,\wstar} \left[(\cc^\top \MM \bb   + \wstar^\top) \tx{n+1} \right]^2 \,, \label{exp:loss_single}
\end{align}
where we used the notation $\MM \coloneqq \frac{1}{n}  \sum_i \tz{i} {\tz{i}}^\top$ to simplify.
We now analyze the global minima of this loss function.
To illustrate the proof idea clearly, we begin with the proof for the simpler case of isotropic data.

\subsection{Warm-up: proof for the isotropic data} \label{e:single_layer_non_isotropic_proof}

As a warm-up, we first prove the result for the special case where  $\tx{i}$ is sampled from $\mathcal{N}(0, I_d)$.

\underline{\emphh{1. Decomposing the loss function into components.}}
Writing $\bb = [\bbb_1\,\, \bbb_1 \,\, \cdots \,\, \bbb_d ]$, and use the fact that $\E[\tx{n+1}[j]\tx{n+1}[j']] =0$ for $j\neq j'$ and $\E[\tx{n+1}[j]^2] =1$, we get
\begin{align}
f\left(\cc, \bb\right) = \sum_{j=1}^d \E_{Z_0,\wstar}\left[ \cc^\top \MM \,\bbb_j + \wstar[j] \right]^2 \E[\tx{n+1}[j]^2]  = \sum_{j=1}^d \E_{Z_0,\wstar}\left[ \cc^\top \MM \,\bbb_j  + \wstar[j] \right]^2 \,. 
\end{align}
The key idea is to characterize the global minima of each component in the summation separately.
Another key idea is to reparametrize the cost function  given the following identity:
\begin{align}
\E_{Z_0,\wstar}\left[ \cc^\top \MM \,\bbb_j  + \wstar[j] \right]^2 = \E_{Z_0,\wstar}\left[ \tr (\MM \,\bbb_j \cc^\top)  + \wstar[j] \right]^2= \E_{Z_0,\wstar}\left[\inp{\MM}{\cc \bbb_j^\top}  + \wstar[j] \right]^2\,,
\end{align}
where we use the notation $\inp{X}{Y} \coloneqq \tr(XY^\top)$ for two matrices $X$ and $Y$ here and below.
Given the above identity, we define each component in the summation as follows.
\begin{align}
\boxed{f_j(X)\coloneqq   \E_{Z_0,\wstar}\Big[\inp{\MM}{X}  + \wstar[j] \Big]^2\quad \text{for }  X\in \R^{(d+1)\times (d+1)}\,.}
\end{align}

\underline{\emphh{2. Characterizing global minima of each component.}} 
To characterize the global minima of each objective, we prove the following result.

% \begin{mdframed}[linecolor=black!40,linewidth=0.5pt,innertopmargin=3pt,innerleftmargin=3pt,innerrightmargin=3pt,innerbottommargin=3pt]
\begin{lemma}[\textbf{Global minima of each component}] \label{lem:component opt}
Suppose that $\tx{i}$ is sampled from $\mathcal{N}(0, I_d)$ and  $\wstar$ is sampled  from $\mathcal{N}(0, I_d)$.      Consider the following objective (here, $\inp{X}{Y} \coloneqq \tr(XY^\top)$ for two matrices $X$ and $Y$)
\begin{align}
f_j(X) =  \E_{Z_0,\wstar}\left[ \inp{\MM}{X}  + \wstar[j] \right]^2\,.
\end{align} 
Then a global minimum is given as 
\begin{align} \label{exp:opt_single}
X_j = - \frac{1}{\left( \frac{n-1}{n} +  (d+2) \frac{1}{n} \right)}E_{d+1,j}\,,
\end{align} where $E_{i_1,i_2}$ is the matrix whose $(i_1,i_2)$-th entry is $1$,  and the other entries are zero.
\end{lemma}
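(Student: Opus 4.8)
The plan is to use the fact that, for each fixed $j$, the map $f_j(X)=\E_{Z_0,\wstar}\bigl[\inp{\MM}{X}+\wstar[j]\bigr]^2$ is a \emph{convex quadratic} in the entries of $X$ (here $\MM\coloneqq\tfrac1n\sum_{i=1}^n\tz{i}{\tz{i}}^\top$ is independent of $\tx{n+1}$, and the Hessian operator $X\mapsto\E[\inp{\MM}{X}\MM]$ is PSD but not PD, e.g. it annihilates antisymmetric $X$). Consequently, to prove that $X_j$ is \emph{a} global minimum it suffices to verify first-order stationarity: since $\nabla_X f_j(X)=2\,\E\bigl[(\inp{\MM}{X}+\wstar[j])\,\MM\bigr]$ and $\inp{\MM}{X_j}=-\beta\,[\MM]_{d+1,j}$ with $\beta:=\bigl(\tfrac{n-1}{n}+(d+2)\tfrac1n\bigr)^{-1}$, stationarity at $X_j$ is equivalent to the single matrix identity
\begin{align*}
\beta\,\E\bigl[[\MM]_{d+1,j}\,\MM\bigr]\;=\;\E\bigl[\wstar[j]\,\MM\bigr].
\end{align*}
So the whole proof comes down to evaluating these two expectations; the possible non-uniqueness of the minimizer is exactly what ``a'' in the statement allows for.

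For the right-hand side, I would split $\MM$ into the three natural blocks coming from the stacking $\tz{i}=(\tx{i};\,\langle\tx{i},\wstar\rangle)$ in \eqref{d:Z_0}. The $d\times d$ block $\tfrac1n\sum_i\tx{i}{\tx{i}}^\top$ is independent of $\wstar$, so multiplying by the mean-zero $\wstar[j]$ kills it; the bottom-right scalar $\tfrac1n\sum_i\langle\tx{i},\wstar\rangle^2$ contributes only a third moment of $\wstar$, hence vanishes; the two cross blocks $\tfrac1n\sum_i\tx{i}\langle\tx{i},\wstar\rangle$ give, entrywise, $\E\bigl[\wstar[j]\cdot\tfrac1n\sum_i\tx{i}[p]\langle\tx{i},\wstar\rangle\bigr]=\delta_{pj}$ using isotropy of $\tx{i}$ and $\wstar$. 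Thus $\E[\wstar[j]\,\MM]=E_{d+1,j}+E_{j,d+1}$.

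For the left-hand side, the same block decomposition plus oddness in $\wstar$ annihilates every entry except $\E\bigl[[\MM]_{d+1,j}\,[\MM]_{p,d+1}\bigr]$ with $p\le d$ (the symmetry $[\MM]_{d+1,j}=[\MM]_{j,d+1}$ covers the transposed positions). Expanding this into a double sum over token indices $(i,i')\in\{1,\dots,n\}^2$ and collapsing the $\wstar$-contraction $\sum_{k,l}(\cdots)\E[\wstar[k]\wstar[l]]$ to $l=k$, the off-diagonal part ($i\ne i'$) factors as $\sum_k\E[\tx{i}[k]\tx{i}[j]]\,\E[\tx{i'}[p]\tx{i'}[k]]=\delta_{pj}$, summing to $\tfrac{n(n-1)}{n^2}\delta_{pj}$; the diagonal part ($i=i'$) is a fourth Gaussian moment, which by Isserlis' formula equals $\E[\tx{i}[k]\tx{i}[j]\tx{i}[p]\tx{i}[k]]=2\delta_{kj}\delta_{kp}+\delta_{jp}$ and hence sums over $k$ to $(d+2)\delta_{jp}$, contributing $\tfrac{n}{n^2}(d+2)\delta_{jp}$. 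Adding the pieces gives $\E\bigl[[\MM]_{d+1,j}\,\MM\bigr]=\bigl(\tfrac{n-1}{n}+\tfrac{d+2}{n}\bigr)\bigl(E_{d+1,j}+E_{j,d+1}\bigr)=\beta^{-1}\bigl(E_{d+1,j}+E_{j,d+1}\bigr)$, which is precisely $\beta^{-1}$ times the right-hand side; the stationarity identity therefore holds, and convexity concludes that $X_j$ is globally optimal.

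The one genuinely laborious step is the fourth-moment accounting on the left-hand side — carefully isolating the diagonal ($i=i'$) contribution from the off-diagonal one and applying Isserlis' theorem to the former — since the right-hand side and the off-diagonal part are essentially immediate. Everything else is just bookkeeping about which blocks of $\MM$ survive the expectation, together with the observation that this argument pins down each column $\bbb_j$ only up to the common choice $\cc=E_{d+1,d+1}\mathbf{1}$-type scaling, which is what lets a single $\cc$ work simultaneously for all $j$ in the subsequent assembly of the theorem.
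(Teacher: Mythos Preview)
Your proposal is correct and follows essentially the same approach as the paper: both reduce to first-order stationarity via convexity, then compute $\E[\wstar[j]\,\MM]=E_{d+1,j}+E_{j,d+1}$ and $\E[\inp{\MM}{E_{d+1,j}}\MM]=\bigl(\tfrac{n-1}{n}+\tfrac{d+2}{n}\bigr)(E_{d+1,j}+E_{j,d+1})$ by the same block decomposition and diagonal/off-diagonal split. The only cosmetic difference is that you invoke Isserlis' theorem for the fourth moment where the paper evaluates $\E[\|\tx{i}\|^2\tx{i}[j]^2]=d+2$ directly.
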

% \end{mdframed}
\begin{proof}[{\bf Proof of \autoref{lem:component opt}}] 
Note first that $f_j$ is convex in $X$.
Hence, in order to show that matrix $X_j$ is  the global optimum of $f_j$, it suffices to show that the gradient vanishes at that point, in other words, $\nabla f_j(X_j)  = 0$. 
To verify this,  let us compute the gradient of $f_j$: for a matrix $X$, 
\begin{align}
\nabla f_j(X)  = 2\E \left[ \inp{\MM}{X} \MM \right] + 2 \E \left[\wstar[j] \ 
\MM \right]\,,
\end{align}
where we recall that $\MM$ is defined as  
\begin{align}
\MM = \frac{1}{n}\sum_i\begin{bmatrix}
\tx{i} {\tx{i}}^\top & \ty{i} \tx{i}\\
\ty{i} {\tx{i}}^\top & {\ty{i}}^2
\end{bmatrix}.
\end{align}
To verify that the gradient is equal to zero, let us first compute $\E \left[ \wstar[j] \,
\MM \right]$.  
For each $i=1,\dots,n$, note that $\E[\wstar[j] \ \tx{i} {\tx{i}}^\top] =O$ because $\E[\wstar]=0$.
Moreover, $\E[\wstar[j] \ (\ty{i})^2] =0$ because $\wstar$ is symmetric, i.e., $\wstar \overset{d}{=} -\wstar$, and $\ty{i} =\langle{\wstar},{\tx{i}} \rangle$. 
Lastly, for $k=1,2,\dots, d$, we have 
\begin{align} \label{exp:wj}
\E[\wstar[j]  \ {\ty{i}} \ {\tx{i}}[k]] =\E[\wstar[j] \ \langle{\wstar},{\tx{i}} \rangle \ {\tx{i}}[k]] =   \E\left[ \wstar[j]^2 \  {\tx{i}}[j] \  {\tx{i}}[k] \right]  = \mathbbm{1}_{[j=k]}     
\end{align}
because $\E[\wstar[i] \ \wstar[j]]=0$ for $i\neq j$. Combining the above calculations, it follows that
\begin{align} \label{exp:second}
\boxed{ \E \left[ \wstar[j] \ 
\MM \right]  =  E_{d+1,j} + E_{j,d+1}\,.} 
\end{align}

In order to compute $\E \left[ \inp{\MM}{X} \MM \right]$, let us compute $\E \left[ \inp{\MM}{E_{i,i'}} \MM \right]$ for $i, i' = 1,\dots, d+1$.
Without loss of generality, $i\geq i'$.
First of all 
We now compute   compute $\E \left[ \inp{\MM}{E_{d+1,j}} \MM \right]$.
Note first that 
\begin{align}
\inp{\MM}{E_{d+1,j}} = \sum_i\langle{\wstar}, \tx{i} \rangle \ {\tx{i}}[j]\,.
\end{align}
Hence, it holds that
\begin{align}
\E\left[  \inp{\MM}{E_{d+1,j}} \left(\sum_i {\tx{i}}{\tx{i}}^\top\right)\right]
= \E\left[\left(\sum_i\langle{\wstar}, \tx{i} \rangle  \ {\tx{i}}[j]\right) \left(\sum_i {\tx{i}}{\tx{i}}^\top\right)\right] =O\,.    
\end{align}
because $\E[\wstar]=0$.
Next, we have 
\begin{align}
\E \left[  \inp{\MM}{E_{d+1,j}}\left(\sum_i{\ty{i}}^2\right) \right]    =  \E\left[ \left(\sum_i \langle{\wstar}, \tx{i} \rangle  \ {\tx{i}}[j]\right) \left(\sum_i{\ty{i}}^2\right)\right] =0
\end{align} because $\wstar \overset{d}{=} -\wstar$.
Lastly, we compute 
\begin{align}
\E\left[  \inp{\MM}{E_{d+1,j}} \left(\sum_i {\ty{i}}{\tx{i}}^\top\right)\right]\,.
\end{align}
To that end, note that for $j\neq j'$,
\begin{align}
\E\left[\langle{\wstar}, \tx{i}  \rangle   \ {\tx{i}}[j] \ \langle{\wstar},{\tx{i'}}\rangle \ \tx{i'} [j'] \right] = \begin{cases}
\E[\langle \tx{i},\tx{i'} \rangle \ {\tx{i}}[j] \  \tx{i'}[j']]= 0 &\text{if }i\neq i',\\
\E[\|\tx{i}\|^2 \ {\tx{i}}[j] \  {\tx{i}}[j']]= 0 &\text{if }i=i',
\end{cases}
\end{align}
and
\begin{align} \label{exp:cross}    \E\left[\langle{\wstar}, \tx{i} \rangle \ {\tx{i}}[j] 
\ \langle \wstar,\tx{i'}\rangle \ \tx{i'} [j] \right] = \begin{cases}
\E[{(\tx{i}}[j])^2 \ (\tx{i'}[j])^2]= 1&\text{if }i\neq i',\\
\E\left[\langle{\wstar}, \tx{i}  \rangle^2  \ ({\tx{i}}[j])^2\right]= d+2&\text{if }i=i',
\end{cases}
\end{align}
where the last case follows from the fact that the fourth moment of Gaussian is $3$ and
\begin{align}
\E\left[\langle{\wstar}, \tx{i} \rangle^2 \ ({\tx{i}}[j])^2\right] = \E\left[ \|{\tx{i}}\|^2 \  ({\tx{i}}[j])^2 \right] = 3 +d-1 = d+2.
\end{align} 
Combining the above calculations together, we arrive at
\begin{align}
\E \left[ \inp{\MM}{E_{d+1,j}}  \MM \right]  &=   \frac{1}{n^2} \cdot \left( n(n-1) +  (d+2) n \right)  (E_{d+1,j} + E_{j,d+1})\\
&=    \left( \frac{n-1}{n} +  (d+2) \frac{1}{n} \right)    (E_{d+1,j} + E_{j,d+1})\,. \label{exp:first}
\end{align}
Therefore, combining \eqref{exp:second} and \eqref{exp:first}, the results follows. 
\end{proof}

\underline{\emphh{3. Combining global minima of each component.}} 
From \autoref{lem:component opt}, it follows that 
\begin{align}  
X_j = - \frac{1}{\left( \frac{n-1}{n} +  (d+2) \frac{1}{n} \right)}E_{d+1,j}\,,
\end{align} 
is the unique global minimum of $f_j$.
Hence, $\cc$ and $\bb = [\bbb_1\ \bbb_1 \ \cdots \ \bbb_d ]$ achieve the global minimum of $f(\cc,\bb)=\sum_{j=1}^d f_j(\cc \bbb_j^\top)$ if they satisfy
\begin{align}
\cc\bbb_j^\top =  - \frac{1}{\left( \frac{n-1}{n} +  (d+2) \frac{1}{n} \right)}E_{d+1,j} \quad \text{for all }i=1,2,\dots,d. 
\end{align}
This can be achieve by the following choice:
\begin{align}  
\cc^\top = \e_{d+1}  ,\quad \bbb_j =     -  \frac{1}{\left( \frac{n-1}{n} +  (d+2) \frac{1}{n} \right)} \e_j \quad \text{for }i=1,2,\dots,d\,,
\end{align} 
where $\e_j$ is the $j$-th coordinate vector. This choice precisely corresponds to 
\begin{align}
\cc = \e_{d+1},\quad \bb = - \frac{1}{\left( \frac{n-1}{n} +  (d+2) \frac{1}{n} \right)}  \begin{bmatrix}I_d  \\ 0 \end{bmatrix}\,.
\end{align}

We next move on to the non-isotropic case.

\subsection{Proof for the non-isotropic case}

\underline{\emphh{1. Diagonal covariance case.}}
We first  consider the case where  $\tx{i}$ is sampled from $\mathcal{N}(0, \Lambda)$ where $\Lambda = \mathrm{diag}(\lambda_1,\dots, \lambda_d)$ and  $\wstar$ is sampled  from $\mathcal{N}(0, I_d)$.  
We prove the following generalization of \autoref{lem:component opt}.
 
\begin{lemma} \label{lem:component opt_noniso}
Suppose that $\tx{i}$ is sampled from $\mathcal{N}(0, \Lambda)$ where $\Lambda = \mathrm{diag}(\lambda_1,\dots, \lambda_d)$ and  $\wstar$ is sampled  from $\mathcal{N}(0, I_d)$. 
Consider the following objective
\begin{align}
f_j(X) =  \E_{Z_0,\wstar}\left[ \inp{\MM}{X}  + \wstar[j] \right]^2\,.
\end{align} 
Then a global minimum is given as 
\begin{align} \label{exp:opt_single_non}
X_j = - \frac{1}{  \frac{n+1}{n}  \lambda_j +   \frac{1}{n} \cdot \left( \sum_k \lambda_k  \right)   } E_{d+1,j}\,,
\end{align} where $E_{i_1,i_2}$ is the matrix whose $(i_1,i_2)$-th entry is $1$,  and the other entries are zero.
\end{lemma}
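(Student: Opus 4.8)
\textbf{Proof plan for Lemma~\ref{lem:component opt_noniso}.}
The plan is to mirror the structure of the isotropic proof (Lemma~\ref{lem:component opt}) exactly, keeping track of the extra factors $\lambda_j$ introduced by the anisotropic covariance. The objective $f_j(X)$ is still convex in $X$, so it suffices to exhibit a point where $\nabla f_j(X_j)=0$. Writing out the gradient, $\nabla f_j(X) = 2\E[\inp{\MM}{X}\MM] + 2\E[\wstar[j]\MM]$ where $\MM = \frac{1}{n}\sum_i \tz{i}{\tz{i}}^\top$ has the usual block structure with blocks $\tx{i}{\tx{i}}^\top$, $\ty{i}\tx{i}$, and ${\ty{i}}^2$. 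First I would compute $\E[\wstar[j]\MM]$: as before the $\tx{i}{\tx{i}}^\top$ block vanishes since $\E[\wstar]=0$, the ${\ty{i}}^2$ block vanishes by the symmetry $\wstar\overset{d}{=}-\wstar$, and the cross block gives $\E[\wstar[j]\ty{i}\tx{i}[k]] = \E[\wstar[j]^2\tx{i}[j]\tx{i}[k]] = \lambda_j\,\mathbbm{1}_{[j=k]}$, so $\E[\wstar[j]\MM] = \lambda_j(E_{d+1,j}+E_{j,d+1})$. This is the one place where a $\lambda_j$ appears that wasn't there in the isotropic case.

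Next I would compute $\E[\inp{\MM}{E_{d+1,j}}\MM]$. Using $\inp{\MM}{E_{d+1,j}} = \frac{1}{n}\sum_i \inp{\wstar}{\tx{i}}\tx{i}[j]$, the $\tx{i}{\tx{i}}^\top$ block again vanishes because it is linear in $\wstar$ (odd moment), and the ${\ty{i}}^2$ block vanishes by $\wstar\overset{d}{=}-\wstar$. For the cross block $\frac{1}{n}\sum_{i'}\ty{i'}\tx{i'}^\top$, the only surviving term is the component along $E_{d+1,j}+E_{j,d+1}$, and I need
\[
\E\!\left[\inp{\wstar}{\tx{i}}\tx{i}[j]\,\inp{\wstar}{\tx{i'}}\tx{i'}[j]\right]
=\begin{cases}
\lambda_j^2 & i\neq i',\\[2pt]
\E\!\left[\inp{\wstar}{\tx{i}}^2 \tx{i}[j]^2\right] & i=i'.
\end{cases}
\]
For the $i=i'$ diagonal term, since $\tx{i}\sim\mathcal{N}(0,\Lambda)$ and $\wstar\sim\mathcal{N}(0,I_d)$, I compute $\E[\inp{\wstar}{\tx{i}}^2\tx{i}[j]^2] = \E[(\sum_k \tx{i}[k]^2)\tx{i}[j]^2]$ (after taking expectation over $\wstar$, which contributes $\E[\wstar[k]\wstar[\ell]]=\mathbbm{1}_{[k=\ell]}$), and then using the Gaussian fourth-moment formula: $\E[\tx{i}[j]^4]=3\lambda_j^2$ and $\E[\tx{i}[k]^2\tx{i}[j]^2]=\lambda_k\lambda_j$ for $k\neq j$, giving $2\lambda_j^2 + \lambda_j\sum_k\lambda_k$. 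Summing over the $n(n-1)$ off-diagonal and $n$ diagonal index pairs and dividing by $n^2$:
\[
\E[\inp{\MM}{E_{d+1,j}}\MM] = \left(\frac{n-1}{n}\lambda_j^2 + \frac{1}{n}\Bigl(2\lambda_j^2 + \lambda_j\sum_k\lambda_k\Bigr)\right)(E_{d+1,j}+E_{j,d+1}) = \lambda_j\left(\frac{n+1}{n}\lambda_j + \frac{1}{n}\sum_k\lambda_k\right)(E_{d+1,j}+E_{j,d+1}).
\]
Setting $X_j = -c\,E_{d+1,j}$ and solving $\nabla f_j = 0$, the common factor $\lambda_j(E_{d+1,j}+E_{j,d+1})$ cancels and I get $c = 1/\bigl(\frac{n+1}{n}\lambda_j + \frac{1}{n}\sum_k\lambda_k\bigr)$, which is exactly \eqref{exp:opt_single_non}. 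The main obstacle is purely bookkeeping: correctly evaluating the fourth-order Gaussian moment $\E[\inp{\wstar}{\tx{i}}^2\tx{i}[j]^2]$ with the diagonal covariance $\Lambda$ (as opposed to $I_d$), and making sure the stray $\lambda_j$ from $\E[\wstar[j]\MM]$ matches the overall $\lambda_j$ factor pulled out of $\E[\inp{\MM}{E_{d+1,j}}\MM]$ so that the optimal scaling is independent of which off-diagonal/diagonal structure one picks. After this lemma, the general (non-diagonal) covariance case will follow by rotating coordinates via $\Sigma = U\Lambda U^\top$ and combining the per-component minimizers as in Step 3 of the isotropic proof.
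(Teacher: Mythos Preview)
Your proposal is correct and follows essentially the same approach as the paper's own proof: both argue via convexity that it suffices to check the first-order condition, compute $\E[\wstar[j]\MM]=\lambda_j(E_{d+1,j}+E_{j,d+1})$ and $\E[\inp{\MM}{E_{d+1,j}}\MM]$ via the same case analysis on $i=i'$ versus $i\neq i'$, obtain the identical fourth-moment value $2\lambda_j^2+\lambda_j\sum_k\lambda_k$ for the diagonal term, and then cancel the common $\lambda_j$ factor to read off the minimizer.
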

\begin{proof}[{\bf Proof of \autoref{lem:component opt_noniso}}] 
Similarly to the proof of \autoref{lem:component opt}, it suffices to check that  
\begin{align}
2\E \left[ \inp{\MM}{X_0} \MM \right] + 2 \E \left[\wstar[j] \ 
\MM \right]=0\,,
\end{align}
where we recall that $\MM$ is defined as  
\begin{align}
\MM = \frac{1}{n}\sum_i\begin{bmatrix}
\tx{i} {\tx{i}}^\top & \ty{i} \tx{i}\\
\ty{i} {\tx{i}}^\top & {\ty{i}}^2
\end{bmatrix}\,.
\end{align}
A similar calculation as the proof of \autoref{lem:component opt} yields  
\begin{align} \label{exp:second_non}
\E\left[\wstar[j] \ \MM\right] &=  \lambda_j (E_{d+1,j} + E_{j,d+1}).
\end{align}
Here the factor of $\lambda_j$ comes from the following generalization of \eqref{exp:wj}:
\begin{align}
\E[\wstar[j] \ {\ty{i}} \ {\tx{i}}[k]] =\E[\wstar[j]  \ \langle{\wstar}, \tx{i} \rangle  \ {\tx{i}}[k]] =   \E\left[ \wstar[j]^2 \ {\tx{i}}[j] \ {\tx{i}}[k] \right]  = \lambda_j \mathbbm{1}_{[j=k]}\,.
\end{align}

Next, we compute $\E \left[ \inp{\MM}{E_{d+1,j}} \MM \right]$.
Again, we follow a similar calculation to the proof of \autoref{lem:component opt} except that this time we use the following generalization of 
\eqref{exp:cross}:
\begin{align}
\E\left[\langle{\wstar}, \tx{i} \rangle \ {\tx{i}}[j] \ \langle \wstar,{\tx{i'}} \rangle  \ \tx{i'} [j] \right] = \begin{cases}
\E[{\tx{i}}[j]^2 \ \tx{i'}[j]^2]= \lambda_j^2 &\text{if }i\neq i',\\
\E\left[\langle{\wstar}, \tx{i} \rangle^2 \ {\tx{i}}[j]^2\right]= \lambda_j \sum_k \lambda_k + 2 \lambda_j^2&\text{if }i=i',
\end{cases}
\end{align}
where the last line follows since
\begin{align}
\E\left[\langle{\wstar}, \tx{i} \rangle^2 \ {\tx{i}}[j]^2\right] &= \E\left[ \|{\tx{i}}\|^2 \  {\tx{i}}[j]^2 \right] =\E\left[ {\tx{i}}[j]^2 \ \sum_k {\tx{i}}[k]^2 \right]  = \lambda_j \sum_k \lambda_k + 2 \lambda_j^2\,.
\end{align} 
Therefore, we have
\begin{align}
\E \left[ \inp{\MM}{E_{d+1,j}}  \MM \right]  &=   \frac{1}{n^2} \cdot \left( n(n-1) \lambda_j^2 +   n  \lambda_j \sum_k \lambda_k + 2 n\lambda_j^2\right)  (E_{d+1,j} + E_{j,d+1})\\
&=    \left( \frac{n+1}{n} \lambda_j^2 +    \frac{1}{n} (\lambda_j \sum_k \lambda_k  )  \right)    (E_{d+1,j} + E_{j,d+1})\,. \label{exp:first_non}
\end{align}
Therefore, combining \eqref{exp:second_non} and \eqref{exp:first_non},  
the results follows. 
\end{proof}

Now we finish the proof. 
From \autoref{lem:component opt}, it follows that 
\begin{align}  
X_j = - \frac{1}{  \frac{n+1}{n}  \lambda_j +   \frac{1}{n} \cdot \left( \sum_k \lambda_k  \right)   } E_{d+1,j}
\end{align} 
is the unique global minimum of $f_j$.
Hence, $\cc$ and $\bb = [\bbb_1\ \bbb_1 \ \cdots \ \bbb_d ]$ achieve the global minimum of $f(\cc,\bb)=\sum_{j=1}^d f_j(\cc,\bb_j)$ if they satisfy
\begin{align}
\cc\bbb_j^\top =   X_j = - \frac{1}{  \frac{n+1}{n}  \lambda_j +   \frac{1}{n} \cdot \left( \sum_k \lambda_k  \right)   } E_{d+1,j}\quad \text{for all }i=1,2,\dots,d. 
\end{align}
This can be achieve by the following choice:
\begin{align}  
\cc^\top = \e_{d+1}  ,\quad \bbb_j =     -  \frac{1}{  \frac{n+1}{n}  \lambda_j +   \frac{1}{n} \cdot \left( \sum_k \lambda_k  \right)   } \e_j \quad \text{for }i=1,2,\dots,d\,,
\end{align} 
where $\e_j$ is the $j$-th coordinate vector. This choice precisely corresponds to 
\begin{align}
\cc = \e_{d+1},\quad \bb =  - \begin{bmatrix}
\mathrm{diag}\left(\left\{\frac{1}{  \frac{n+1}{n}  \lambda_j +   \frac{1}{n} \cdot \left( \sum_k \lambda_k  \right)   } \right\}_j\right)   \\
0  
\end{bmatrix} .
\end{align}

\underline{\emphh{2. Non-diagonal covariance case (the setting of \autoref{thm:main_single}).}}
We finally prove the general result of \autoref{thm:main_single}, namely  $\tx{i}$ is sampled from a Gaussian with covariance $\Sigma = U\Lambda U^\top$ where $\Lambda = \mathrm{diag}(\lambda_1,\dots, \lambda_d)$ and $\wstar$ is sampled  from $\mathcal{N}(0, I_d)$.  
The proof works by reducing this case to the previous case.
For each $i$, define $\ttx{i} :=  U^T\tx{i}$. Then $\E[\ttx{i}(\ttx{i})^\top ] = \E[U^\top (U\Lambda U^\top) U] = \Lambda$.
Now let us write the loss function \eqref{exp:loss_single} with this new coordinate system: since $\tx{i} = U \ttx{i}$, we have  
\begin{align}
f(\cc,\bb) &=  \E_{Z_0,\wstar} \left[(\cc^\top \MM \bb   + \wstar^\top) U\ttx{n+1} \right]^2 =\sum_{j=1}^d   \lambda_j \E_{Z_0,\wstar} \left[ \left((\cc^\top \MM \bb   + \wstar^\top) U\right) [j]  \right]^2 \,.
\end{align}
Hence, let us consider 
the vector $(\cc^\top \MM \bb   + \wstar^\top) U$. By definition of $\MM$, we have
\begin{align}
(\cc^\top \MM \bb   + \wstar^\top) U &= \frac{1}{n}\sum_i  \cc^\top \begin{bmatrix} \tx{i} \\
\inp{\tx{i}}{\wstar} \end{bmatrix}^{\otimes 2}  \bb U + \wstar^\top U\\
&=  \frac{1}{n}\sum_i  \cc^\top \begin{bmatrix} U\tilde{x}_i \\
\inp{U\tx{i}}{\wstar} \end{bmatrix}^{\otimes 2} \bb U+ \wstar^\top U\\
&=  \frac{1}{n}\sum_i  \cc^\top \begin{bmatrix}U & 0 \\0 & 1\end{bmatrix}\begin{bmatrix} \tilde{x}_i \\
\inp{U\tx{i}}{\wstar} \end{bmatrix}^{\otimes 2} \begin{bmatrix}U^\top & 0 \\0 & 1\end{bmatrix}\bb U+ \wstar^\top U \\
&= \frac{1}{n}\sum_i  \ttcc^\top  \begin{bmatrix} \tilde{x}_i \\
\inp{\tx{i}}{ \twstar} \end{bmatrix}^{\otimes 2} \ttbb + \twstar^\top 
\end{align}
where we define $\ttcc^\top  \coloneqq \cc^\top \begin{bmatrix}U & 0 \\0 & 1\end{bmatrix}$, $\ttbb\coloneqq \begin{bmatrix}U^\top & 0 \\0 & 1\end{bmatrix}\bb U$, and $\twstar \coloneqq U^\top \wstar$.
By the rotational symmetry, $\twstar$ is also distributed as $\mathcal{N}(0, I_d)$.
Hence, this reduces to the previous case, and a global minimum is given as  
\begin{align}
\ttcc = \e_{d+1},\quad \ttbb =  - \begin{bmatrix}
\mathrm{diag}\left(\left\{\frac{1}{  \frac{n+1}{n}  \lambda_j +   \frac{1}{n} \cdot \left( \sum_k \lambda_k  \right)   } \right\}_j\right)   \\
0  
\end{bmatrix} .
\end{align}
From the definition of $\ttcc,\ttbb$, it thus follows that a global minimum is given by
\begin{align}      \cc^\top = \e_{d+1} ,\quad \bb =  - \begin{bmatrix}
U \mathrm{diag}\left(\left\{\frac{1}{  \frac{n+1}{n}  \lambda_i +   \frac{1}{n} \cdot \left( \sum_k \lambda_k  \right)   } \right\}_i\right) U^\top \\
0  
\end{bmatrix}\,,
\end{align}
as desired.

\subsection{Proof for non-linear attentions (\autoref{thm:nonlinear})}
\label{pf:nonlinear}

As mentioned in  \autoref{thm:nonlinear}, we focus on the setting where the last row of $Q$ is zero, i.e., let
\begin{align}
Q = \begin{bmatrix} A & a  \\
0^\top & 0 \end{bmatrix}\quad \text{for }A\in \R^{d\times d}~\text{and}~a \in \R^d.
\end{align}
We first rewrite the loss function and simplify it 
following \autoref{sec:rewrite loss}.
Moreover, for simple notation we will often write $z,x$ instead of  $\tz{n+1},\tx{n+1}$.

\underline{\emphh{1. Rewriting loss function.}}

Following \autoref{sec:rewrite loss}, let us write down the in-context loss  \eqref{def:ICL linear}.  for the single-layer nonlinear attention denoted by $f(P,Q)$:  
\begin{align} 
f\left(P,Q\right)  
&=\E_{Z_0,\wstar} \left( \left[Z_{0} +\frac{1}{n} \atth_{P,Q}(Z_0) \right]_{d+1,n+1} + \wstar^\top x\right)^2
\end{align} 
Recalling the definition of the ReLU attention $\atth_{P,Q}(Z) \coloneqq  P ZM \  \relu (
Z^\top Q Z)$, the data matrix $Z \coloneqq [\tz{1} \ \cdots \ \tz{n+1}]$, and the mask matrix $\aa$, the term $ \atth_{P,Q}(Z_0)$ can be written as:
\begin{align}
\atth_{P,Q}(Z_0) = \underbrace{PZ_0}_{\R^{(d+1)\times (n+1)} } \cdot  \begin{bmatrix} I_{n\times n} & 0 \\0 & 0 \end{bmatrix}  \cdot   \underbrace{\relu\left(Z_0^\top Q Z_0 \right)\,.}_{\R^{(n+1)\times (n+1)}}
\end{align}
Hence, it follows that the $(d+1, n+1)$-th entry of $\atth_{P,Q}(Z_0)$ is equal to the product of the $(d+1)$-th row of $PZ_0$, the mask matrix $\aa$, and the $(n+1)$-th column of $\relu\left(Z_0^\top Q Z_0 \right)$. Hence, let us write them down explicitly:
\begin{list}{$\bullet$}{\leftmargin=1.5em}
\setlength{\itemsep}{1pt}
\item Letting $\cc^\top$ be the last row of the matrix $P$, it holds that the $(d+1)$-th row of $PZ_0$ is equal to $[\inpp{\cc}{\tz{i}}]_{i=1,\dots,n+1}$.
\item The $(n+1)$-th column of $\relu\left(Z_0^\top Q Z_0 \right)$ is equal to $\left[\relu\left((\tz{i})^\top Q \tz{n+1} \right)  \right]_{i=1,\dots, n+1}$. Letting  $\bb$ the first $d$ columns of $Q$,  this vector is equal to $\left[\relu\left((\tx{i})^\top \bb \tx{n+1} \right) \right]_{i=1,\dots, n+1}$ because the last row of $Q$ is zero and the last row of $\tz{n+1}$ is zero (since $(\tz{n+1})^\top = [(\tx{n+1})^\top \ 0]$). 
\end{list}
Thus, the product of   $[\inpp{\cc}{\tz{i}}]_{i=1,\dots,n+1}$, the mask matrix $\aa$, and $\left[\relu\left((\tx{i})^\top \bb \tx{n+1} \right) \right]_{i=1,\dots, n+1}$ results in the following expression of the attention (writing $z,x$ instead of  $\tz{n+1},\tx{n+1}$):
\begin{align}
    \left[ \atth_{P,Q}(Z_0) \right]_{d+1,n+1}    =  \sum_{i=1}^n \left[ \inpp{\cc}{\tz{i}} \cdot \relu((\tx{i})^\top \bb x ) \right] \,.
\end{align} 
Since $[Z_0]_{d+1,n+1}=0$, we therefore have 
\begin{align}
    \left[Z_{0} +\frac{1}{n} \atth_{P,Q}(Z_0) \right]_{d+1,n+1} = \frac{1}{n}\sum_{i=1}^n \left[ \inpp{\cc}{\tz{i}}\cdot \relu((\tx{i})^\top \bb x ) \right]\,.
\end{align}
Therefore, it follows that the in-context loss $f(P,Q)$ only depends on $\cc$ and $\bb$.
Henceforth, let us write $f(\cc,\bb)$ instead of $f(P,Q)$ following \autoref{sec:rewrite loss}. 
In particular, writing $\cc^\top = [\cc_0^\top ,\cc_1]$ for $\cc_0\in \R^d$ and $\cc_1\in \R$, the loss function can be expressed as 
\begin{align} \label{exp:nonlinear_loss}
    f(\cc,\bb) \coloneqq \E \left( \frac{1}{n}\sum_{i=1}^n \left[(\inpp{\cc_0}{\tx{i}}+ \cc_1 \ty{i})\cdot \relu((\tx{i})^\top \bb x ) \right] + \inpp{\wstar}{x}  \right)^2\,.
\end{align}

 \underline{\emphh{2. Simplifying the loss function with symmetry.}}

Now, we use the fact that both $\tx{i}$'s and $\wstar$ are sampled from the isotropic Gaussian, i.e., $\mathcal{N}(0, I_{d})$ in order to further simplify the loss function in \eqref{exp:nonlinear_loss}.
In particular, we use the following facts:
\begin{list}{$\bullet$}{\leftmargin=1.5em}
\setlength{\itemsep}{1pt}
    \item[($a$)] For orthonormal matrices $U,V \in \R^{d\times d}$, it holds that $U\tx{i}$, $Vx$ and $U\wstar$ have the same distributions as $\mathcal{N}(0, I_{d})$. 
    \item[($b$)] Moreover, for a diagonal matrix $\Xi = \diag(\xi_i) \in \R^{d\times d}$ with the diagonal entries being  random signs $\xi_i \sim \{\pm 1\}$, it holds that  $\Xi\tx{i}$, $\Xi x$ and $\Xi\wstar$ have the same distributions as $\mathcal{N}(0, I_{d})$.
\end{list}
Now let us fix a matrix $\bb\in\R^{d\times d}$ and $\cc^\top = [\cc_0^\top ,\cc_1]$ for $\cc_0\in \R^d$ and $\cc_1\in \R$. Letting $\bb= U\Sigma V^\top$ be the SVD of the matrix $A$, it follows that 
\begin{align}  
    f(\cc,\bb) &= \E \left[ \frac{1}{n}\sum_{i=1}^n \left[(\cc_0^\top\tx{i}+ \cc_1 \wstar^\top\tx{i} )\cdot \relu((\tx{i})^\top \bb x ) \right] + \wstar^\top x  \right]^2\\
    &\overset{(a)}{=}  \E \left[ \frac{1}{n}\sum_{i=1}^n \left[(\cc_0^\top U\tx{i}+ \cc_1 \wstar^\top U^\top U\tx{i})\cdot \relu((\tx{i})^\top U^\top \bb V x ) \right] + \wstar^\top U^\top Vx  \right]^2\\
    &\overset{(b)}{=}  \E \left[ \frac{1}{n}\sum_{i=1}^n \left[(\cc_0^\top U\Xi\tx{i}+ \cc_1 \wstar^\top\tx{i})\cdot \relu((\tx{i})^\top \Sigma x ) \right] +  \wstar^\top \Xi U^\top V\Xi x   \right]^2\\
    &\geq  \E \left[ \E_{\Xi}\left\{\frac{1}{n}\sum_{i=1}^n \left[(\cc_0^\top U\Xi\tx{i}+ \cc_1 \wstar^\top\tx{i})\cdot \relu((\tx{i})^\top \Sigma x ) \right] +  \wstar^\top \Xi U^\top V\Xi x  \right\} \right]^2\\
     &=  \E \left[  \frac{1}{n}\sum_{i=1}^n \left[\cc_1 \wstar^\top\tx{i}\cdot \relu((\tx{i})^\top \Sigma x ) \right] +  \wstar^\top \diag( U^\top V) x   \right]^2\\
     &=: \ttf(\cc_1,\Sigma, D \coloneqq \diag( U^\top V))\,. 
\end{align}
where in the third line we use the fact that $\Xi^
\top\Sigma \Xi= \Xi \Sigma \Xi = \Sigma$; and the fourth line follows from the Jensen's inequality.
Hence for the remainder of the proof, we will  characterize the global minimizer of the lower bound, i.e., $\ttf$ and then we will connect it back to the original objective.

 \underline{\emphh{3. Computation of the lower bound $\ttf$.}}

Let us now explicitly compute
$\ttf$. Let us rewrite the definition of $\ttf$.
In fact since, $\relu = \mathrm{ReLU}$ is homogenous, one can further simplify the lower bound by pushing the constant $\cc_1$ inside and write $\cc_1 \Sigma$  as $\Sigma$.
Hence, for two diagonal matrices $\Sigma, D\in \R^{d\times d}$, $\ttf$ is defined as:
\begin{align}
    \ttf(\Sigma, D) \coloneqq  \E \left[  \frac{1}{n}\sum_{i=1}^n \left[ \inpp{\wstar}{\tx{i}}\cdot \relu (x^\top \Sigma \tx{i}  )\right] +  \wstar^\top D x   \right]^2\,.
\end{align}
In particular, $D$ is constrained to be the diagonal part of an orthogonal matrix (since $D=  \diag( U^\top V)$ in the above derivation).
Now we focus on characterizing the global minimizers of $\ttf$.

The main part of the argument is inspired by the elegant observation of \cite{erdogdu2016scaled}, which says that the solution of least squares and generalized linear models are collinear for Gaussian inputs. 
We leverage the same proof technique (\emph{\`{a} la} Stein's Lemma) to prove that the presence of ReLU only changes the scaling of global optimum. 

First, since $\wstar$ is isotropic Gaussian, we can take the expectation over $\wstar$ to obtain
\begin{align}
     \ttf(\Sigma, D) =  \E \left[  \frac{1}{n}\sum_{i=1}^n \left[  \relu (x^\top \Sigma \tx{i}  ) \ \tx{i}\right] +   D x   \right]^2\,,
\end{align} 
which after a careful expansion becomes
\begin{align} \label{exp:nonlinear_expansion}
    \E \left[  \frac{1}{n^2}\sum_{i,j}   \relu (x^\top \Sigma \tx{i}  )  \relu (x^\top \Sigma \tx{j}  )  \inpp{\tx{i}}{\tx{j}} +  \frac{2}{n}\sum_{i}    \relu (x^\top \Sigma \tx{i}  )  \inpp{\tx{i}}{Dx}   \right]  + \text{const.} \quad\quad
\end{align} 
In order to compute \eqref{exp:nonlinear_expansion}, we will rely on the aforementioned argument of \cite{erdogdu2016scaled}.
In particular, from integration by parts, or Stein's lemma~\citep{erdogdu2016scaled} (since $x\sim \mathcal{N}(0,I_d)$), we have 
\begin{align}
\mathbb{E}_{x} [  \sigma(x^\top v) x] = \mathbb{E}_{x} [\sigma'(x^\top v) ] v \quad \text{for a fixed $v\in \R^d$.}    
\end{align} 
We use this to compute all the terms in \eqref{exp:nonlinear_expansion} as follows:
\begin{list}{$\bullet$}{\leftmargin=1.5em}
\setlength{\itemsep}{1pt}
\item 
We first apply Stein's lemma  to the first term of \eqref{exp:nonlinear_expansion} for $i\neq j$. This results in 
\begin{align}
&\E_{\tx{i},\tx{j},x}\relu (x^\top \Sigma \tx{i}  )  \relu (x^\top \Sigma \tx{j}  )  \inpp{\tx{i}}{\tx{j}}  \\
&\quad =   \E_{\tx{j},x} [\E_{\tx{i}} [\relu'(x^\top \Sigma \tx{i})] \ \relu(x^\top \Sigma \tx{j}) \ x^\top \Sigma x^{(j)}] 
\end{align} 
Using  the fact that $\tx{i}$ is a symmetric random variable, one can compute the expectation above as follows: one the one hand, we know  $\E_{\tx{i}} [\relu'(x^\top \Sigma \tx{i})] = \mathbb{E}_{\tx{i}} [\relu'(-x^\top \Sigma \tx{i})]$.  On the other hand, we also know that for any scalar $\alpha$, $\relu'(-\alpha)+ \relu'(\alpha) =1$. Therefore, we conclude that  $\E_{\tx{i}} [\relu'(x^\top \Sigma \tx{i})] = 1/2$. Thus, applying this technique twice, we obtain the following
\begin{align}
\E_{\tx{i},\tx{j},x}\relu (x^\top \Sigma \tx{i}  )  \relu (x^\top \Sigma \tx{j}  )  \inpp{\tx{i}}{\tx{j}}  =  \frac{1}{4}\mathbb{E}_{ x} [ x^\top \Sigma^2 x  ] = \frac{1}{4} \tr(\Sigma^2)\,.
\end{align}

\item 
Similarly, we can use Stein's lemma to the second term of \eqref{exp:nonlinear_expansion} to conclude
\begin{align}
    \E \relu (x^\top \Sigma \tx{i}  )  \inpp{\tx{i}}{Dx} = \frac{1}{2}\E_x x^\top \Sigma D x =  \frac{1}{2}\tr(\Sigma D)\,.
\end{align} 

\item Lastly, the computation of the first term of \eqref{exp:nonlinear_expansion} for $i=j$ is straightforward.
Using the fact that $\forall \alpha\in\R$, $\relu^2(\alpha)+\relu^2(-\alpha) = a^2$, we get 
\begin{align}
    &\E \left[ \relu^2(x^\top \Sigma \tx{i}) \ \|\tx{i} \|^2 \right] = \frac{1}{2} \mathbb{E} [(x^\top \Sigma \tx{i})^2 \ \| \tx{i}\|^2 ]\\
    &\quad = \frac{1}{2}\E \left[ (\tx{i})^\top \Sigma^2 \tx{i} \ \| \tx{i} \|^2 \right] = \frac{d+2}{2} \tr(\Sigma^2)\,.
\end{align}
\end{list}
Putting things all together (and ignoring the constant part in \eqref{exp:nonlinear_expansion}), we have 
\begin{align}\label{exp:lower}
     \ttf(\Sigma, D) = \frac{2(d+2) + (n-1)}{4n} \tr(\Sigma^2) + \tr(\Sigma D)\,.
\end{align}

 \underline{\emphh{4. Connecting back to the original loss function.}}
 
One can in fact write \eqref{exp:lower} solely in terms of $\cc_1$ and $\bb$ as follows:
\begin{align}
    \frac{2(d+2) + (n-1)}{4n} \tr(\Sigma^2) + \tr(\Sigma D) = \frac{2(d+2) + (n-1)}{4n} \norm{\cc_1\bb}_F^2 + \tr(\cc_1\bb)\,.
\end{align}
Since the latter is a convex function in the matrix $\cc_1\bb$, it follows that the minimizer corresponds to 
\begin{align}
    \cc_1\bb = -\frac{2n}{2(d+2)+(n-1)} \cdot I_d
\end{align}
In fact the choice $\cc_1 = 1$, $\cc_0 =0$, and $\bb =  -\frac{2n}{2(d+2)+(n-1)} \cdot I_d$ achieves this, and more crucially, satisfies the property that $\ttf = f$ for the corresponding parameters. Therefore, this shows that such choice is a global minimizer.

\section{Proofs for the multi-layer case}
\label{a:foo}

\subsection{Proof of \autoref{t:two_layer}}
The proof is based on probabilistic methods \citep{alon2016probabilistic}. 
According to \autoref{l:icl_trace_form}, the objective function can be written as (for more details check the derivations in ~\eqref{e:t:dynamic_Y_only}) 
\begin{align}
f(A_1,A_2) & = \E \tr\left( \E \left[\prod_{i=1}^2 (I -  X_0^\top A_i X_0 M) X_0^\top \wstar \wstar^\top X_0 \prod_{i=1}^2 (I - M X_0^T A_i X_0) \right]\right) \\ 
& = \E \tr\left( \E \left[\prod_{i=2}^1 (I -  X_0^\top A_i X_0 M) X_0^\top X_0 \prod_{j=1}^2 (I - M X_0^T A_j X_0) \right]\right), 
\end{align}
where we use the isotropy of $\wstar$ and the linearity of trace to get the last equation.
Suppose that $A_0^*$ and $A_1^*$ denote the global minimizer of $f$ over symmetric matrices. Since $A_1^*$ is a symmetric matrix, it admits the spectral  decomposition $A_1 = U D_1 U^\top$ where $D_1$ is a diagonal matrix and $U$ is an orthogonal matrix. Remarkably,  the distribution of $X_0$ is invariant to a linear transformation by an orthogonal matrix, i.e, $X_0$ has the same distribution as $X_0 U^\top$. This invariance yields 
\begin{align}
f(U D_1 U^\top, A_2^*) = f(D_1, U^\top A_2^* U).
\end{align}
Thus, we can assume $A_1^*$ is diagonal without loss of generality. To prove $A_2^*$ is also diagonal, we leverage a probabilistic proof technique. Consider the random diagonal matrix $S$ whose diagonal elements are either $1$ or $-1$ with probability $\frac{1}{2}$. Since the input distribution is invariant to orthogonal transformations, we have 
\begin{align}
f( D_1, A_2^*) = f(S D_1 S, S A_2^* S) = f(D_1, S A_2^* S).
\end{align}
Note that we use $S D_1 S = D_1$ in the last equation, which holds due to $D_1$ and $S$ are diagonal matrices and $S$ has diagonal elements in $\{+1, -1 \}$. Since $f$ is convex in $A_2$, a straightforward application of Jensen's inequality yields 
\begin{align}
f(D_1, A_2^*) = \E \left[ f(D_1, S A_2^* S) \right] \geq f(D_1, \E \left[ S A_2^* S \right]) = f(D_1, \diag(A_2^*)).
\end{align}
Thus, there are diagonal $D_1$ and $\diag(A_2^*)$ for which $f(D_1, \diag(A_2^*)) \leq f(A_1^*, A_2^*)$ holds for an optimal $A_1^*$ and $A_2^*$. This concludes the proof.
 
\subsection{Proof of \autoref{t:L_layer_P_0}}
\label{sec:pf:t:L_layer_P_0}

Let us drop the factor of $\nicefrac{1}{n}$ which was present in the original update \eqref{e:dynamics_Z}. This is because the constant $1/n$ can be absorbed into $A_i$'s. Doing so does not change the theorem statement, but reduces notational clutter.

Let us consider the reformulation of the in-context loss $f$ presented in \autoref{l:icl_trace_form}. Specifically, let $\overline{Z}_0$ be defined as 
\begin{align*}
\overline{Z}_0 = \begin{bmatrix}
\tx{1} & \tx{2} & \cdots & \tx{n} &\tx{n+1} \\ 
\ty{1} & \ty{2} & \cdots &\ty{n}& \ty{n+1}
\end{bmatrix} \in \R^{(d+1) \times (n+1)},
\end{align*}
where $\ty{n+1} = \lin{\wstar, \tx{n+1}}$. Let $\overline{Z}_i$ denote the output of the $(i-1)^{th}$ layer of the linear transformer (as defined in \eqref{e:dynamics_Z}, initialized at $\overline{Z}_0$). For the rest of this proof, we will drop the bar, and simply denote $\overline{Z}_i$ by $Z_i$.\footnote{This use of $Z_i$ differs the original definition in \eqref{d:Z_0}. But we will not refer to the original definition anywhere in this proof.} Let $X_i\in \R^{d\times (n+1)}$ denote the first $d$ rows of $Z_i$ and let $Y_i\in \R^{1\times (n+1)}$ denote the $(d+1)^{th}$ row of $Z_k$. Under the sparsity pattern enforced in \eqref{eq:sparse_attention}, we verify that, for any $i \in \lrbb{0,\dots,k}$,
\begin{align*}
& X_i = X_0,\\
& Y_{i+1} =  Y_{i}  + Y_{i} M X_{i}^\top A_i X_{i} = Y_0 \prod_{\ell=0}^{i} \lrp{I + M X_0^\top A_\ell X_0}.
\numberthis \label{e:t:dynamic_Y_only}
\end{align*}
where $M = \begin{bmatrix}I_{n\times n} & 0 \\0 & 0\end{bmatrix}$. We adopt the shorthand $A = \lrbb{A_i}_{i=0}^k$.

We adopt the shorthand $A = \lrbb{A_i}_{i=0}^k$. Let $\S \subset \R^{(k+1) \times d \times d}$, and $A \in \S$ if and only if for all $i\in \lrbb{0,\dots,k}$, there exists scalars $a_i \in \R$ such that $A_i = a_i \Sigma^{-1}$ and $B_i = b_i I$. We use $f(A)$ to refer to the in-context loss of \autoref{t:L_layer_P_0}, that is,
\begin{align*}
f(A) := f \lrp{ \left\{ Q_i = \begin{bmatrix}
A_i & 0 \\ 
0 & 0
\end{bmatrix}, P_i = \begin{bmatrix}
0_{d\times d} & 0 \\ 
0 & 1 
\end{bmatrix}\right\}_{i=0}^k}.
\end{align*}

Throughout this proof, we will work with the following formulation of the \emph{in-context loss} from \autoref{l:icl_trace_form}:
\begin{align*}
f(A)
=& \E_{(X_0,\wstar)} \lrb{\tr\lrp{\lrp{I-M}Y_{k+1}^\top Y_{k+1}\lrp{I-M}}}.
\numberthis \label{e:t:oiemgrkwfdlw:0}
\end{align*}
The theorem statement is equivalent to the following:
\begin{align*}
\inf_{A \in \S} \sum_{i=0}^k \lrn{\nabla_{A_i} f(A)}_F^2 = 0,
\numberthis \label{e:t:unkqdwon:0}
\end{align*}
where $\nabla_{A_i} f$ denotes derivative wrt the Frobenius norm $\lrn{A_i}_F$. Towards this end, we establish the following intermediate result: if $A \in \S$, then for any $R\in \R^{(k+1) \times d \times d}$, there exists $\tilde{R} \in \S$, such that, at $t=0$,
\begin{align*}
\frac{d}{dt} f(A + t\tilde{R}) \leq \frac{d}{dt} f(A + tR).
\numberthis \label{e:t:unkqdwon:1}
\end{align*}
In fact, we show that $\tilde{R}_i := r_i I$, for $r_i = \frac{1}{d} \tr\lrp{\Sigma^{1/2} R_i \Sigma^{1/2}}$. This implies \eqref{e:t:unkqdwon:0} via the following simple argument: Consider the "$\S$-constrained gradient flow": let $A(t): \R^+ \to \R^{(k+1)\times d \times d}$ be defined as 
\begin{align*}
& \frac{d}{dt} A_i(t) = - r_i(t) \Sigma^{-1}, \quad r_i(t) := \tr(\Sigma^{1/2} \nabla_{A_i} f (A(t)) \Sigma^{1/2})
\end{align*}
for $i=0,\dots,k$. By \eqref{e:t:unkqdwon:1}, we verify that
\begin{align*}
\frac{d}{dt} f(A(t)) \leq - \sum_{i=0}^k \lrn{\nabla_{A_i} f(A(t))}_F^2.
\numberthis \label{e:t:unkqdwon:2}
\end{align*}
We verify from its definition that $f(A) \geq 0$; if the infimum in \eqref{e:t:unkqdwon:0} fails to be zero, then inequality \eqref{e:t:unkqdwon:2} will ensure unbounded descent as $t\to \infty$, contradicting the fact that $f(A)$ is lower-bounded. This concludes the proof.

\underline{\emphh{Proof outline.}}
The remainder of the proof will be devoted to showing \eqref{e:t:unkqdwon:1}, which we outline as follows:

\begin{itemize}[leftmargin=*]
\item  In Step 1, we reduce the condition in \eqref{e:t:unkqdwon:2} to a more easily verified \emph{layer-wise} condition. Specifically, we only need to verify \eqref{e:t:unkqdwon:2} when $R_i$ are all zero except for $R_j$ for some fixed $j$ (see \eqref{e:t:unkqdwon:3})

At the end of Step 1, we set up some additional notation, and introduce an important matrix $G$, which is roughly "a product of attention layer matrices". In \eqref{e:t:unkqdwon:4}, we study the evolution of $f(A(t))$ when $A(t)$ moves in the direction of $R$, as $X_0$ is (roughly speaking) randomly transformed. 

\item In Step 2, we use the results of Step 2 to to study $G$ (see \eqref{e:t:UG_pnull}) and $\frac{d}{dt} G(A(t))$ (see \eqref{e:t:unkqdwon:5}) under random transformation of $X_0$. The idea in \eqref{e:t:unkqdwon:5} is that "randomly transforming $X_0$" has the same effect as "randomly transforming $\textcolor{red}{S}$" (recall $\textcolor{red}{S}$ is the perturbation to $\textcolor{red}{B}$). 

\item In Step 3, we apply the result from Step 2 to the expression of $\frac{d}{dt} f(A(t))$ in \eqref{e:t:unkqdwon:4}. We verify that $\tilde{R}$ in \eqref{e:t:unkqdwon:1} is exactly the expected matrix after "randomly transforming $\textcolor{red}{S}$". This concludes our proof.
\end{itemize}

\underline{\emphh{1. Reduction to layer-wise condition.}}
To prove \eqref{e:t:unkqdwon:1}, it suffices to show the following simpler condition: Let $j\in \lrbb{0,\dots,k}$. Let $R_j\in \R^{d\times d}$ be arbitrary matrices. For $C\in \R^{d\times d}$, let $A(t C, j)$ denote the collection of matrices, where $\lrb{A(t C, j)}_j = A_j + t C$, and for $i\neq j$, $A(t C, j)_i = A_i$. We show that for all $j\in \lrbb{0,\dots,k},R_j\in \R^{d\times d}$, there exists $\tilde{R}_j = r_j \Sigma^{-1}$, such that, at $t=0$,
\begin{align*}
& \frac{d}{dt} f(A(t\tilde{R}_j, j)) \leq \frac{d}{dt} f(A(tR_j,j))
\numberthis \label{e:t:unkqdwon:3}
\end{align*}
We can verify that \eqref{e:t:unkqdwon:1} is equivalent to \eqref{e:t:unkqdwon:3} by noticing that for any $R$, at $t=0$, $\frac{d}{dt} f(A + tR) = \sum_{j=0}^k \frac{d}{dt} f(A(tR_j,j))$. We will now work towards proving \eqref{e:t:unkqdwon:3} for some index $j$ that is arbitrarily chosen but fixed throughout.

Let us define, for any $C\in \R^{d\times d}$, $G(X,A_j + C) := X \prod_{i=0}^{k} \lrp{I - M X^\top \lrb{A(C, j)}_i X}$. By \eqref{e:t:dynamic_Y_only} and \eqref{e:t:oiemgrkwfdlw:0},
\begin{align*}
& f({A(tR_j,j)}) \\
=& \E \lrb{\tr\lrp{\lrp{I-M} Y_{k+1}^{\top} Y_{k+1} \lrp{I-M}}}\\
=& \E \lrb{\tr\lrp{\lrp{I-M} G(X_0,A_j + t{R}_j)^\top \wstar^\top \wstar G(X_0,A_j + t{R}_j) \lrp{I-M}}}\\
=& \E \lrb{\tr\lrp{\lrp{I-M} G(X_0,A_j + t{R}_j)^\top \Sigma^{-1} G(X_0,A_j + t{R}_j) \lrp{I-M}}}
\end{align*}
The second equality follows from plugging in \eqref{e:t:dynamic_Y_only}. For the rest of this proof, let $U$ denote a uniformly randomly sampled orthogonal matrix. Let $\US:= \Sigma^{1/2} U \Sigma^{-1/2}$. Using the fact that $X_0 \overset{d}{=} \US X_0$, we can verify
\begin{align*}
& \at{\frac{d}{dt} f(A(tR_j,j))}{t=0}\\
=& \at{\frac{d}{dt}\E \lrb{\tr\lrp{\lrp{I-M} G(X_0,A_j + t{R}_j)^\top \Sigma^{-1} G(X_0,A_j + t{R}_j) \lrp{I-M}}}}{t=0}\\
=& \at{\frac{d}{dt}\E_{X_0, U} \lrb{\tr\lrp{\lrp{I-M} G(\US X_0, A_j + t R_j)^\top \Sigma^{-1} G(\US X_0, A_j + t R_j) \lrp{I-M}}}}{t=0}\\
=& 2\E_{X_0, U} \lrb{\tr\lrp{\lrp{I-M} G(\US X_0, A_j )^\top \Sigma^{-1} \at{\frac{d}{dt} G(\US X_0, A_j + t R_j)}{t=0} \lrp{I-M}}}.
\numberthis \label{e:t:unkqdwon:4}
\end{align*}

\underline{\emphh{2. $G$ and $\frac{d}{dt} G$ under random transformation of $X_0$.}}
We will now verify that $G(\US X_0, A_j) = \US G(X_0, A_j)$:
\begin{align*}
& G(\US X_0, A_j)\\
=& \US X_0 \prod_{i=0}^{k} \lrp{I + M X_0^T \US^{\top} A_i \US X_0}\\
=& \US G(X_0, A_j),
\numberthis \label{e:t:UG_pnull}
\end{align*}
where we use the fact that $\US^\top A_i \US = \US^\top (a_i \Sigma^{-1}) \US = A_i$. Next, we verify that
\begin{align*}
& \at{\frac{d}{dt} G(\US X_0, A + tR_j)}{t=0}\\
=& \US X_0 \lrp{\prod_{i=0}^{j-1} (I + M X_0^T A_i X_0)} M X_0^T \US^{\top} R_j \US X_0 \prod_{i=j+1}^k (I + M X_0^T A_i X_0)\\
=& \US \frac{d}{dt} G(X_0, A_j + t\US^{\top} R_j \US)
\numberthis \label{e:t:unkqdwon:5}
\end{align*}
where the first equality again uses the fact that $\US^{\top} A_i \US = A_i$.

\underline{\emphh{3. Putting everything together.}} Let us continue from \eqref{e:t:unkqdwon:4}. Plugging \eqref{e:t:UG_pnull} and \eqref{e:t:unkqdwon:5} into \eqref{e:t:unkqdwon:4}, 
{\allowdisplaybreaks
\begin{align*}
& \at{\frac{d}{dt} f({A(tR_j,j)})}{t=0}\\
=& 2\E_{X_0, U} \lrb{\tr\lrp{\lrp{I-M} G(\US X_0, A_j )^\top \Sigma^{-1} \at{\frac{d}{dt} G(\US X_0, A_j + t R_j)}{t=0} \lrp{I-M}}}\\
\overset{(i)}{=}& 2\E_{X_0, U} \lrb{\tr\lrp{\lrp{I-M} G( X_0, A_j )^\top \Sigma^{-1} \at{\frac{d}{dt} G(X_0, A_j + t \US^{\top} R_j \US)}{t=0} \lrp{I-M}}}\\
=& 2\E_{X_0} \lrb{\tr\lrp{\lrp{I-M} G( X_0, A_j )^\top \Sigma^{-1} \E_{U}\lrb{\at{\frac{d}{dt} G(X_0, A_j + t \US^{\top} R_j \US)}{t=0}} \lrp{I-M}}}\\
\overset{(ii)}{=}& 2\E_{X_0} \lrb{\tr\lrp{\lrp{I-M} G( X_0, A_j )^\top \Sigma^{-1} \at{\frac{d}{dt} G(X_0, A_j + t \E_U\lrb{\US^{\top} R_j \US} )}{t=0} \lrp{I-M}}}\\
=& 2\E_{X_0} \lrb{\tr\lrp{\lrp{I-M} G( X_0, A_j )^\top \Sigma^{-1} \at{\frac{d}{dt} G(X_0, A_j + t \cdot r_j \Sigma^{-1})}{t=0} \lrp{I-M}}}\\
=& \at{\frac{d}{dt} f(A(t\cdot r_j \Sigma^{-1}, j))}{t=0},
\end{align*}
}
where $r_j := \frac{1}{d} \tr\lrp{\Sigma^{1/2} R_j \Sigma^{1/2}}$. In the above, $(i)$ uses 1. \eqref{e:t:UG_pnull} and \eqref{e:t:unkqdwon:5}, as well as the fact that $\US^\top \Sigma^{-1} \US = \Sigma^{-1}$. $(ii)$ uses the fact that $\at{\frac{d}{dt} G(X_0, A_j + t C)}{t=0}$ is affine in $C$. To see this, one can verify from the definition of $G$, e.g. using similar algebra as \eqref{e:t:unkqdwon:5}, that $\frac{d}{dt} G(X_0, A_j + C)$ is affine in $C$. Thus $\E_U\lrb{G(X_0, A_j + t \US^{\top} R_j \US)} = G(X_0, A_j + t \E_U\lrb{\US^{\top} R_j \US)}$.

\subsection{Proof of \autoref{t:L_layer_P_identity}}
The proof of \autoref{t:L_layer_P_identity} is similar to that of \autoref{t:L_layer_P_0}, and with a similar setup. However to keep the proof self-contained, we will restate the setup. Once again, we drop the factor of $\frac{1}{n}$ which was present in the original update \eqref{e:dynamics_Z}. This is because the constant $1/n$ can be absorbed into $A_i$'s. Doing so does not change the theorem statement, but reduces notational clutter.

Let us consider the reformulation of the in-context loss $f$ presented in \autoref{l:icl_trace_form}. Specifically, let $\overline{Z}_0$ be defined as 
\begin{align*}
\overline{Z}_0 = \begin{bmatrix}
\tx{1} & \tx{2} & \cdots & \tx{n} &\tx{n+1} \\ 
\ty{1} & \ty{2} & \cdots &\ty{n}& \ty{n+1}
\end{bmatrix} \in \R^{(d+1) \times (n+1)},
\end{align*}
where $\ty{n+1} = \lin{\wstar, \tx{n+1}}$. Let $\overline{Z}_i$ denote the output of the $(i-1)^{th}$ layer of the linear transformer (as defined in \eqref{e:dynamics_Z}, initialized at $\overline{Z}_0$). For the rest of this proof, we will drop the bar, and simply denote $\overline{Z}_i$ by $Z_i$.\footnote{This use of $Z_i$ differs the original definition in \eqref{d:Z_0}. But we will not refer to the original definition anywhere in this proof.} Let $X_i\in \R^{d\times n+1}$ denote the first $d$ rows of $Z_i$ and let $Y_i\in \R^{1\times n+1}$ denote the $(d+1)^{th}$ row of $Z_k$. Under the sparsity pattern enforced in \eqref{eq:full_attention}, we verify that, for any $i \in \lrbb{0,\dots,k}$,
\begin{align*}
& X_{i+1} =  X_{i}  + B_i X_i M X_{i}^\top A_i X_{i} \\
& Y_{i+1} =  Y_{i}  + Y_{i} M X_{i}^\top A_i X_{i} = Y_0 \prod_{\ell=0}^{i} \lrp{I + M X_\ell^T A_\ell X_\ell}.
\numberthis \label{e:t:XY_dynamics}
\end{align*}
We adopt the shorthand $A = \lrbb{A_i}_{i=0}^k$ and $B = \lrbb{B_i}_{i=0}^k$. Let $\S \subset \R^{2\times (k+1) \times d \times d}$, and $(A,B) \in \S$ if and only if for all $i\in \lrbb{0,\dots,k}$, there exists scalars $a_i,b_i \in \R$ such that $A_i = a_i \Sigma^{-1}$ and $B_i = b_i I$. Throughout this proof, we will work with the following formulation of the \emph{in-context loss} from \autoref{l:icl_trace_form}:
\begin{align*}
f(A,B) := \E_{(X_0,\wstar)} \lrb{\tr\lrp{\lrp{I-M}Y_{k+1}^\top Y_{k+1}\lrp{I-M}}}.
\numberthis \label{e:t:simpler_f}
\end{align*}
(note that the only randomness in $Z_0$ comes from $X_0$ as $Y_0$ is a deterministic function of $X_0$). The theorem statement is equivalent to the following:

\begin{align*}
\inf_{(A,B) \in \S} \sum_{i=0}^k \lrn{\nabla_{A_i} f(A,B)}_F^2 + \lrn{\nabla_{B_i} f(A,B)}_F^2 = 0
\numberthis \label{e:main_theorem_goal_PQ}
\end{align*}
where $\nabla_{A_i} f$ denotes derivative wrt the Frobenius norm $\lrn{A_i}_F$.

Our goal is to show that, if $(A,B) \in \S$, then for any $(R,S)\in \R^{2\times (k+1) \times d \times d}$, there exists $(\tilde{R},\tilde{S}) \in \S$, such that, at $t=0$,
\begin{align*}
\frac{d}{dt} f(A + t\tilde{R}, B + t \tilde{S}) \leq \frac{d}{dt} f(A + tR, B + tS).
\numberthis \label{e:t:wfoqmdksa:1}
\end{align*}
In fact, we show that $\tilde{R}_i := r_i I$, for $r_i = \frac{1}{d} \tr\lrp{\Sigma^{1/2} R_i \Sigma^{1/2}}$ and $\tilde{S}_i = s_i I$, for $s_i = \frac{1}{d} \tr \lrp{\Sigma^{-1/2} S_i \Sigma^{1/2}}$. This implies \eqref{e:main_theorem_goal_PQ} via the following simple argument: Consider the "$\S$-constrained gradient flow": let $A(t): \R^+ \to \R^{(k+1)\times d \times d}$ and $B(t): \R^+ \to \R^{(k+1)\times d \times d}$ be defined as 
\begin{align*}
& \frac{d}{dt} A_i(t) = - r_i(t) \Sigma^{-1}, \quad r_i(t) := \tr(\Sigma^{1/2} \nabla_{A_i} f (A(t),B(t)) \Sigma^{1/2})\\
& \frac{d}{dt} B_i(t) = - s_i(t) \Sigma^{-1}, \quad s_i(t) := \tr(\Sigma^{-1/2} \nabla_{B_i} f(A(t),B(t)) \Sigma^{1/2}),
\end{align*}
for $i=0,\dots,k$. By \eqref{e:t:wfoqmdksa:1}, we verify that
\begin{align*}
\frac{d}{dt} f(A(t),B(t)) \leq - \lrp{\sum_{i=0}^k \lrn{\nabla_{A_i} f(A(t),B(t))}_F^2 + \lrn{\nabla_{B_i}f(A(t),B(t))}_F^2}.
\numberthis \label{e:t:wfoqmdksa:4}
\end{align*}
We verify from its definition that $f(A,B) \geq 0$; if \eqref{e:main_theorem_goal_PQ} does not hold then \eqref{e:t:wfoqmdksa:4} will ensure unbounded descent as $t\to \infty$, contradicting the fact that $f(A,B)$ is lower-bounded. This concludes the proof.

\underline{\emphh{Proof outline.}}
The remainder of the proof will be devoted to showing \eqref{e:t:wfoqmdksa:1}, which we outline as follows:
\begin{itemize}[leftmargin=*]
\item In Step 1, we reduce the condition in \eqref{e:t:wfoqmdksa:1} to a more easily verified \emph{layer-wise} condition. Specifically, we only need to verify \eqref{e:t:wfoqmdksa:1} in one of the two cases: (I) when $R_i,S_i$ are all zero except for $R_j$ for some fixed $j$ (see \eqref{e:t:wfoqmdksa:3}), or (II) when $R_i,S_i$ are all zero except for $S_j$ for some fixed $j$ (see \eqref{e:t:wfoqmdksa:2}).

We focus on the proof of (II), as the proof of (I) is almost identical. At the end of Step 1, we set up some additional notation, and introduce an important matrix $G$, which is roughly "a product of attention layer matrices". In \eqref{e:t:rmlsdm:0}, we study the evolution of $f(A,B(t))$ when $B(t)$ moves in the direction of $S$, as $X_0$ is (roughly speaking) randomly transformed. This motivates the subsequent analysis in Steps 2 and 3 below.

\item In Step 2, we study how outputs of each layer \eqref{e:t:XY_dynamics} changes when $X_0$ is randomly transformed. There are two main results here: First we provide the expression for $X_i$ in \eqref{e:t:UX}. Second, we provide the expression for $\frac{d}{dt} X_i(B(t))$ in \eqref{e:t:key_induction}.

\item In Step 3, we use the results of Step 2 to to study $G$ (see \eqref{e:t:UG}) and $\frac{d}{dt} G(B(t))$ (see \eqref{e:t:rmlsdm:3}) under random transformation of $X_0$.

The idea in \eqref{e:t:rmlsdm:3} is that "randomly transforming $X_0$" has the same effect as "randomly transforming $S$" (recall $S$ is the perturbation to $B$). 

\item In Step 4, we use the results from Steps 2 and 3 to the expression of $\frac{d}{dt} f(A, B(t))$ in \eqref{e:t:rmlsdm:0}. We verify that $\tilde{S}$ in \eqref{e:t:wfoqmdksa:1} is exactly the expected matrix after "randomly transforming $S$". This concludes our proof of (II).

\item In Step 5, we sketch the proof of (I), which is almost identical to Steps 2-4. 
\end{itemize}

\underline{\emphh{1. Reduction to layer-wise condition.}}
To prove \eqref{e:t:wfoqmdksa:1}, it suffices to show the following simpler condition: Let $j\in \lrbb{0,\dots,k}$. Let $R_j, S_j \in \R^{d\times d}$ be arbitrary matrices. For $C\in \R^{d \times d}$, let $A(t C, j)$ denote the collection of matrices, where $A(t C, j)_j = A_j + t C$, and for $i\neq j$, $A(t C, j)_i = A_i$. Define $B(t C,j)$ analogously. We show that for all $j\in \lrbb{0,\dots,k}$ and all $R_j,S_j\in \R^{d\times d}$, there exists $\tilde{R}_j = r_j \Sigma^{-1}$ and $\tilde{S}_j = s_j \Sigma^{-1}$, such that, at $t=0$,
\begin{align*}
& \frac{d}{dt} f(A(t\tilde{R}_j, j), B) \leq \frac{d}{dt} f(A(tR_j,j), B)
\numberthis \label{e:t:wfoqmdksa:2}\\
\text{and} \qquad 
& \frac{d}{dt} f(A, B(t\tilde{S}_j,j)) \leq \frac{d}{dt} f(A, B(tS_j,j)).
\numberthis \label{e:t:wfoqmdksa:3}
\end{align*}
We can verify that \eqref{e:t:wfoqmdksa:1} is equivalent to \eqref{e:t:wfoqmdksa:2}+\eqref{e:t:wfoqmdksa:3} by noticing that for any $(R,S) \in \R^{2 \times (k+1) \times d\times d}$, at $t=0$, $\frac{d}{dt} f(A + tR, B + tS) = \sum_{j=0}^k \lrp{\frac{d}{dt} f(A(tR_j,j), B) + \frac{d}{dt} f(A, B(tS_j,j))}$. 

We will first focus on proving \eqref{e:t:wfoqmdksa:3} (the proof of \eqref{e:t:wfoqmdksa:2} is similar, and we present it in Step 5 at the end), for some index $j$ that is arbitrarily chosen but fixed throughout. Notice that $X_i$ and $Y_i$ in \eqref{e:t:XY_dynamics} are in fact functions of $A,B$ and $X_0$. For most of our subsequent discussion, $A_i$ (for all $i$) and $B_i$ (for all $i\neq j$) can be treated as constant matrices. We will however make the dependence on $X_0$ and $B_j$ explicit (as we consider the curve $B_j + t S$), i.e. we use $X_i(X, C)$ (resp $Y_i(X,C)$) to denote the value of $X_i$ (resp $Y_i$) from \eqref{e:t:XY_dynamics}, with $X_0 = X$, and $B_j = C$.

By \eqref{e:t:simpler_f} and \eqref{e:t:XY_dynamics},
\begin{align*}
& f(A, B(tS_j,j)) \\
=& \E \lrb{\tr\lrp{\lrp{I-M} Y_{k+1}(X_0, B_j+tS)^{\top} Y_{k+1}(X_0, B_j+tS_j) \lrp{I-M}}}\\
=& \E \lrb{\tr\lrp{\lrp{I-M} G(X_0, B_j + t S_j)^\top \wstar^\top \wstar G(X_0, B_j+tS_j) \lrp{I-M}}}\\
=& \E \lrb{\tr\lrp{\lrp{I-M} G(X_0, B_j + t S_j)^\top \Sigma^{-1} G(X_0, B_j + t S_j) \lrp{I-M}}}
\end{align*}
where $G(X, C) := X \prod_{i=0}^{k} \lrp{I - M X_i(X,C)^T A_i X_i(X,C)}$. The second equality follows from plugging in \eqref{e:t:XY_dynamics}. 

For the rest of this proof, let $U$ denote a uniformly randomly sampled orthogonal matrix. Let $\US:= \Sigma^{1/2} U \Sigma^{-1/2}$. Using the fact that $X_0 \overset{d}{=} \US X_0$, we can verify
\begin{align*}
& \at{\frac{d}{dt} f(A, B(tS_j,j))}{t=0}\\
=& \at{\frac{d}{dt}\E_{X_0} \lrb{\tr\lrp{\lrp{I-M} G(X_0, B_j + t S_j)^\top \Sigma^{-1} G( X_0, B_j + t S_j) \lrp{I-M}}}}{t=0}\\
=& \at{\frac{d}{dt}\E_{X_0, U} \lrb{\tr\lrp{\lrp{I-M} G(\US X_0, B_j + t S_j)^\top \Sigma^{-1} G(\US X_0, B_j + t S_j) \lrp{I-M}}}}{t=0}\\
=& 2\E_{X_0, U} \lrb{\tr\lrp{\lrp{I-M} G(\US X_0, B_j )^\top \Sigma^{-1} \at{\frac{d}{dt} G(\US X_0, B_j + t S_j)}{t=0} \lrp{I-M}}}.
\numberthis \label{e:t:rmlsdm:0}
\end{align*}

\underline{\emphh{2. $X_i$ and $\frac{d}{dt} X_i$ under random transformation of $X_0$.}} In this step, we prove that when $X_0$ is transformed by $\US$, $X_i$ for $i\geq 1$ are likewise transformed in a simple manner. The first goal of this step is to show
\begin{align*}
X_i(\US X_0, B_j) = \US X_i(X_0, B_j).
\numberthis \label{e:t:UX}
\end{align*}
We will prove this by induction. When $i=0$, this clearly holds by definition. Suppose that \eqref{e:t:UX} holds for some $i$. Then
\begin{align*}
& X_{i+1}(\US X_0, B_j)\\
=& X_i(\US X_0, B_j) + B_i X_i(\US X_0, B_j) M X_i(\US X_0, B_j)^T A_i X_i(\US X_0, B_j) \\
=& \US X_i(X_0, B_j) + \US B_i X_i(X_0, B_j) M X_i(X_0, B_j)^T A_i X_i(X_0, B_j)\\
=& \US X_{i+1} (X_0, B_j)
\end{align*}
where the second equality uses the inductive hypothesis, and the fact that $A_i = a_i \Sigma^{-1}$, so that $\US^T A_i \US = A_i$, and the fact that $B_i = b_i I$, from the definition of $\S$ and our assumption that $(A,B)\in \S$. This concludes the proof of \eqref{e:t:UX}.

We now present the second main result of this step. Let $\US^{-1} := \Sigma^{1/2} U^T \Sigma^{-1/2}$, so that it satisfies $\US \US^{-1}= \US^{-1} \US = I$. For all $i$,
\begin{align*}
\at{\US^{-1} \frac{d}{dt} X_i(\US X_0, B_j + t S_j)}{t=0} = \at{\frac{d}{dt} X_i(X_0, B_j + t \US^{-1} S_j \US)}{t=0}.
\numberthis \label{e:t:key_induction}
\end{align*}
To reduce notation, we will not write $\at{\cdot}{t=0}$ explicitly in the subsequent proof. We first write down the dynamics for the right-hand-side term of \eqref{e:t:key_induction}:
From \eqref{e:t:XY_dynamics}, for any $\ell \leq j$, and for any $i \geq j+1$, and for any $C\in \R^{d\times d}$,
\begin{align*}
& \frac{d}{dt} X_{\ell} \lrp{X_0, B_j + t C} = 0\\
& \frac{d}{dt} X_{j+1} \lrp{X_0, B_j + t C} = C {X_{j}\lrp{X_0, B_j}} M {X_{j}\lrp{X_0, B_j}}^{\top} A_j  {X_{j}\lrp{X_0, B_j}}\\
& \frac{d}{dt} X_{i+1} \lrp{X_0, B_j + t C}
= \frac{d}{dt} X_{i} \lrp{X_0, B_j + t C}\\
&\qquad\qquad\qquad\quad + B_i \lrp{\frac{d}{dt} X_{i}\lrp{X_0, B_j + t C}} M X_{i}\lrp{X_0, B_j}^{\top} A_i  X_{i}\lrp{X_0, B_j}\\
&\qquad\qquad\qquad\quad + B_i {X_{i}\lrp{X_0, B_j}} M \lrp{\frac{d}{dt} X_{i}\lrp{X_0, B_j + tC}}^{\top} A_i  X_{i}\lrp{X_0, B_j}\\
&\qquad\qquad\qquad\quad + B_i {X_{i}\lrp{X_0, B_j}} M {X_{i}\lrp{X_0, B_j}}^{\top} A_i  \lrp{\frac{d}{dt}X_{i}\lrp{X_0, B_j+tC}}
\numberthis \label{e:t:rmlsdm:2}
\end{align*}

We are now ready to prove \eqref{e:t:key_induction} using induction. For the base case, we verify that for $\ell \leq j$, $\US^{-1} \frac{d}{dt} X_{\ell} \lrp{\US X_0, B_k + tS_j} = 0 = \frac{d}{dt} X_{\ell} \lrp{X_0, B_j + t\US^{-1} S_j \US}$ (see first equation in \eqref{e:t:rmlsdm:2}). For index $j+1$, we verify that
\begin{align*}
& \US^{-1} \frac{d}{dt} X_{j+1} \lrp{\US X_0, B_j + t S_j} \\
=& \US^{-1} S_j \US X_j( X_0, B_j) M X_j(\US X_0, B_j)^\top A_j X_j(\US X_0, B_j)\\
=& \frac{d}{dt} X_{j+1} \lrp{X_0, B_j + t \US^{-1} S_j \US}
\numberthis \label{e:t:rmlsdm:4}
\end{align*}
where we use two facts:  1. $X_i(\US X_0, B_j) = \US X_i(X_0, B_j)$ from \eqref{e:t:UX}, 2. $A_i = a_i \Sigma^{-1}$, so that $\US^{\top} A_i \US = A_i$. We verify by comparison to the second equation in \eqref{e:t:rmlsdm:2} that $\US^{-1} \frac{d}{dt} X_{j} \lrp{\US X_0, B_j + tS_j} = 0 = \frac{d}{dt} X_{j} \lrp{X_0, B_j + t\US^{-1} S_j \US}$. These conclude the proof of the base case.

Now suppose that \eqref{e:t:key_induction} holds for some $i$. We will now prove \eqref{e:t:key_induction} holds for $i+1$. From \eqref{e:t:XY_dynamics},
{\allowdisplaybreaks
\begin{align*}
& \US^{-1} \frac{d}{dt} X_{i+1} \lrp{\US X_0, B_j + t S_j}\\
=& \US^{-1}\frac{d}{dt} \lrp{X_i\lrp{\US X_0, B_j + t S_j}}\\
&\qquad + \US^{-1} \frac{d}{dt}\lrp{B_i X_i\lrp{\US X_0, B_j + t S_j} M X_i\lrp{\US X_0, B_j + t S_j}^\top A_i X_i\lrp{\US X_0, B_j + t S_j}}\\
=& \US^{-1}\frac{d}{dt} \lrp{X_i\lrp{\US X_0, B_j + t S_j}}\\
&\qquad + \US^{-1} B_i \lrp{\frac{d}{dt}X_i\lrp{\US X_0, B_j + t S_j}} M X_i\lrp{\US X_0, B_j}^{\top} A_i X_i\lrp{\US X_0, B_j}\\
&\qquad + \US^{-1} B_i {X_i\lrp{\US X_0, B_j}} M \lrp{\frac{d}{dt}X_i\lrp{\US X_0, B_j + t S_j}}^{\top} A_i X_i\lrp{\US X_0, B_j}\\
&\qquad + \US^{-1} B_i {X_i\lrp{\US X_0, B_j}} M {X_i\lrp{\US X_0, B_j}}^{\top} A_i \lrp{\frac{d}{dt}X_i\lrp{\US X_0, B_j + t S_j}}\\
\overset{(i)}{=}&\US^{-1}\frac{d}{dt} {X_i\lrp{\US X_0, B_j + t S_j}}\\
&\qquad + B_i \lrp{\US^{-1}\frac{d}{dt}X_i\lrp{\US X_0, B_j + t S_j}} M X_i\lrp{X_0, B_j}^{\top} A_i X_i\lrp{X_0, B_j}\\
&\qquad + B_i {X_i\lrp{X_0, B_j}} M \lrp{\US^{-1} \frac{d}{dt}X_i\lrp{\US X_0, B_j + t S_j}}^{\top} A_i X_i\lrp{X_0, B_j}\\
&\qquad - B_i {X_i\lrp{X_0, B_j}} M {X_i\lrp{X_0, B_j}}^{\top} A_i \lrp{\US^{-1} \frac{d}{dt}X_i\lrp{\US X_0, B_j + t S_j}}\\
\overset{(ii)}{=}&\frac{d}{dt} {X_i\lrp{X_0, B_j + t \US^{-1} S_j \US}}\\
&\qquad + B_i \lrp{\frac{d}{dt}X_i\lrp{ X_0, B_j + t \US^{-1} S_j \US}} M X_i\lrp{X_0, B_j}^{\top} A_i X_i\lrp{X_0, B_j}\\
&\qquad + B_i {X_i\lrp{X_0, B_j}} M \lrp{\frac{d}{dt}X_i\lrp{ X_0, B_j + t \US^{-1} S_j \US}}^{\top} A_i X_i\lrp{X_0, B_j}\\
&\qquad + B_i {X_i\lrp{X_0, B_j}} M {X_i\lrp{X_0, B_j}}^{\top} A_i \lrp{\frac{d}{dt}X_i\lrp{ X_0, B_j + t \US^{-1} S_j \US}}
\numberthis \label{e:t:rmlsdm:1}
\end{align*}
}

In $(i)$ above, we crucially use the following facts: 1. $B_i = b_i I$ so that $\US^{-1} B_i = B_i \US^{-1}$, 2. $X_i(\US X_0, B_j) = \US X_i(X_0, B_j)$ from \eqref{e:t:UX}, 3. $A_i = a_i \Sigma^{-1}$, so that $\US^{\top} A_i \US = A_i$, 4. $\US\US^{-1} = \US^{-1} \US = I$. $(ii)$ follows from our inductive hypothesis. The inductive proof is complete by verifying that \eqref{e:t:rmlsdm:1} exactly matches the third equation of \eqref{e:t:rmlsdm:2} when $C = \US^{-1} S \US$.

\underline{\emphh{3. $G$ and $\frac{d}{dt} G$ under random transformation of $X_0$.}}
We now verify that $G(\US X_0, B_j) = \US G(X_0, B_j)$. This is a straightforward consequence of \eqref{e:t:UX} as
\begin{align*}
& G(\US X_0, B_j)\\
=& \US X_0 \prod_{i=0}^{k} \lrp{I + M X_i(\US X_0,B_j)^T A_i X_i(\US X_0,B_j)}\\
=& \US X_0 \prod_{i=0}^{k} \lrp{I + M X_i(X_0,B_j)^T A_i X_i(X_0,B_j)}\\
=& \US G(X_0, B_j),
\numberthis \label{e:t:UG}
\end{align*}
where the second equality uses \eqref{e:t:UX}, as well as the fact that $\US^\top A_i \US = A_i$. Next, we will show that
\begin{align*}
\US^{-1} \at{\frac{d}{dt} G(\US X_0, B_j + t S_j)}{t=0} = \at{\frac{d}{dt} G(X_0, B_j + t \US^{-1} S_j \US)}{t=0}.
\numberthis \label{e:t:rmlsdm:3}
\end{align*}
To see this, we can expand
{\allowdisplaybreaks
\begin{align*}
& \US^{-1} \frac{d}{dt} G(\US X_0, B_j + t S_j)\\
=& \US^{-1} \frac{d}{dt} \lrp{\US X_0 \prod_{i=0}^{k} \lrp{I + M X_i(\US X_0,B_j+tS_j)^T A_i X_i(\US X_0,B_j+tS_j)}}\\
=& X_0 \sum_{i=0}^k \lrp{\prod_{\ell=0}^{i-1} \lrp{I + M X_\ell(\US X_0,B_j)^T A_\ell X_i(\US X_0,B_\ell)}}\\
\cdot& M \frac{d}{dt} \lrp{X_i(\US X_0,B_j+tS_j)^T A_i X_i(\US X_0,B_j)}\\
\cdot& \lrp{\prod_{\ell=i+1}^{k} \lrp{I + M X_\ell(\US X_0,B_j)^T A_\ell X_i(\US X_0,B_\ell)}}\\
\overset{(i)}{=}& X_0 \sum_{i=0}^k \lrp{\prod_{\ell=0}^{i-1} \lrp{I + M X_\ell(X_0,B_j)^T A_\ell X_\ell(X_0,B_\ell)}}\\
\cdot& M \lrp{\lrp{\US^{-1} \frac{d}{dt}X_i(\US X_0,B_j+tS_j)}^T A_i X_i(X_0,B_j) + M {X_i(X_0,B_j)}^T A_i \lrp{\US^{-1} \frac{d}{dt}X_i(\US X_0,B_j+tS_j)}}\\
\cdot& \lrp{\prod_{\ell=i+1}^{k} \lrp{I + M X_\ell( X_0,B_j)^T A_\ell X_\ell( X_0,B_\ell)}}\\
\overset{(ii)}{=}& X_0 \sum_{i=0}^k \lrp{\prod_{\ell=0}^{i-1} \lrp{I + M X_\ell(X_0,B_j)^T A_\ell X_\ell(X_0,B_\ell)}}\\
\cdot& M \lrp{\lrp{\frac{d}{dt}X_i( X_0,B_j+t \US^{-1} S_j \US)}^T A_i X_i(X_0,B_j) + M {X_i(X_0,B_j)}^T A_i \lrp{\frac{d}{dt}X_i( X_0,B_j+t\US^{-1} S_j \US)}}\\
\cdot& \lrp{\prod_{\ell=i+1}^{k} \lrp{I + M X_\ell( X_0,B_j)^T A_\ell X_\ell( X_0,B_\ell)}}\\
\overset{(iii)}{=}& \frac{d}{dt} G(X_0, B_j + t \US^{-1} S_j \US)
\end{align*}
}
In $(i)$ above, we  the following facts: 1. $X_i(\US X_0, B_j) = \US X_i(X_0, B_j)$ from \eqref{e:t:UX}, 2. $A_i = a_i \Sigma^{-1}$, so that $\US^{\top} A_i \US = A_i$, 3. $\US\US^{-1} = \US^{-1} \US = I$. $(ii)$ follows from \eqref{e:t:key_induction}. $(iii)$ is by definition of $G$.

\underline{\emphh{4. Putting everything together.}}
Let us now continue from \eqref{e:t:rmlsdm:0}. We can now plug \eqref{e:t:UG} and \eqref{e:t:rmlsdm:3} into \eqref{e:t:rmlsdm:0}:
{\allowdisplaybreaks
\begin{align*}
& \at{\frac{d}{dt} f(A, B(tS_j,j))}{t=0}\\
=& 2\E_{X_0, U} \lrb{\tr\lrp{\lrp{I-M} G(\US X_0, B_j )^\top \Sigma^{-1} \at{\frac{d}{dt} G(\US X_0, B_j + t S_j)}{t=0} \lrp{I-M}}}\\
\overset{(i)}{=}& 2\E_{X_0, U} \lrb{\tr\lrp{\lrp{I-M} G( X_0, B_j )^\top \Sigma^{-1} \at{\frac{d}{dt} G(X_0, B_j + t \US^{-1} S_j \US)}{t=0} \lrp{I-M}}}\\
=& 2\E_{X_0} \lrb{\tr\lrp{\lrp{I-M} G( X_0, B_j )^\top \Sigma^{-1} \E_{U}\lrb{\at{\frac{d}{dt} G(X_0, B_j + t \US^{-1} S_j \US)}{t=0}} \lrp{I-M}}}\\
\overset{(ii)}{=}& 2\E_{X_0} \lrb{\tr\lrp{\lrp{I-M} G( X_0, B_j )^\top \Sigma^{-1} \at{\frac{d}{dt} G(X_0, B_j + t \E_U\lrb{\US^{-1} S_j \US} )}{t=0} \lrp{I-M}}}\\
=& 2\E_{X_0} \lrb{\tr\lrp{\lrp{I-M} G( X_0, B_j )^\top \Sigma^{-1} \at{\frac{d}{dt} G(X_0, B_j + t s_j I)}{t=0} \lrp{I-M}}}\\
=& \at{\frac{d}{dt} f(A, B(ts_j I,j))}{t=0}
\end{align*}
}
where $s_j := \frac{1}{d} \tr\lrp{\Sigma^{-1/2} S_j \Sigma^{1/2}}$. In the above, $(i)$ uses 1. \eqref{e:t:UG} and \eqref{e:t:rmlsdm:3}, as well as the fact that $\US^\top \Sigma^{-1} \US = \Sigma^{-1}$. $(ii)$ uses the fact that $\at{\frac{d}{dt} G(X_0, B_j + t C)}{t=0}$ is affine in $C$. To see this, one can verify from \eqref{e:t:rmlsdm:2}, using a simple induction argument, that $\frac{d}{dt} X_i(X_0, B_j + tC)$  is affine in $C$ for all $i$. We can then verify from the definition of $G$, e.g. using similar algebra as the proof of \eqref{e:t:rmlsdm:3}, that $\frac{d}{dt} G(X_0, B_j + C)$ is affine in $\frac{d}{dt} X_i(X_0, B_j + t C)$. Thus $\E_U\lrb{G(X_0, B_j + t \US^{-1} S_j \US)} = G(X_0, B_j + t \E_U\lrb{\US^{-1} S_j \US)}$.

With this, we conclude our proof of \eqref{e:t:wfoqmdksa:3}.

\underline{\emphh{5. Proof of \eqref{e:t:wfoqmdksa:2}.}}
We will now prove \eqref{e:t:wfoqmdksa:2} for fixed but arbitrary $j$, i.e. there is some $r_j$ such that
\begin{align*}
\frac{d}{dt} f(A(t\cdot r_j \Sigma^{-1},j), B) \leq \frac{d}{dt} f(A(tR_j,j), B).
\end{align*}
The proof is very similar to the proof of \eqref{e:t:wfoqmdksa:3} that we just saw, and we will essentially repeat the same steps from Step 2-4 above. 

Since we now consider perturbations to $A$ instead of to $B$, we will need to redefine some notation: let $X_i(X, C)$ (resp $Y_i(X,C)$) to denote the value of $X_i$ (resp $Y_i$) from \eqref{e:t:XY_dynamics}, with $X_0 = X$, and $A_j = C$ (previously it was with $B_j=C$). Let $G(X, A_j + C) := X \prod_{i=0}^{i} \lrp{I + M \lrp{X_i(X, A_j + C)^T A(C, j)_i X_i(X, A_j + C)}}$, where recall that $A(C,j) := A_j + C$, and $A(C,j)_\ell := A_\ell$ for all $\ell \in \lrbb{0...k} \backslash \lrbb{j}$.

We first verify that 
\begin{align*}
&X_i(\US X_0, A_j) = \US X_i(X_0, A_j)\\
&G(\US X_0, A_j) = \US G(X_0, A_j).
\numberthis \label{e:t:UXGA}
\end{align*}
The proofs are identical to the proofs of \eqref{e:t:UX} and \eqref{e:t:UG} so we omit them. Next, we show that for all $i$,
\begin{align*}
\at{\US^{-1} \frac{d}{dt} X_i(\US X_0, A_j + t R_j)}{t=0} = \at{\frac{d}{dt} X_i(X_0, A_j + t \US^{\top} R_j \US)}{t=0}.
\numberthis \label{e:t:reoqlk:0}
\end{align*}
We establish the dynamics for the right-hand-side of \eqref{e:t:reoqlk:0}:
{\allowdisplaybreaks
\begin{align*}
& \frac{d}{dt} X_{\ell} \lrp{X_0, A_j + t C} = 0\\
& \frac{d}{dt} X_{j+1} \lrp{X_0, A_j + t C} = B_j {X_{j}\lrp{X_0, A_j}} M {X_{j}\lrp{X_0, A_j}}^{\top} C  {X_{j}\lrp{X_0, A_j}}\\
& \frac{d}{dt} X_{i+1} \lrp{X_0, A_j + t C}
= \frac{d}{dt} X_{i} \lrp{X_0, A_j + t C}\\
&\qquad\qquad\qquad\quad+ B_i \lrp{\frac{d}{dt} X_{i}\lrp{X_0, A_j + t C}} M X_{i}\lrp{X_0, A_j}^{\top} A_i  X_{i}\lrp{X_0, A_j}\\
&\qquad\qquad\qquad\quad + B_i {X_{i}\lrp{X_0, A_j}} M \lrp{\frac{d}{dt} X_{i}\lrp{X_0, A_j + tC}}^{\top} A_i  X_{i}\lrp{X_0, A_j}\\
&\qquad\qquad\qquad\quad + B_i {X_{i}\lrp{X_0, A_j}} M {X_{i}\lrp{X_0, A_j}}^{\top} A_i  \lrp{\frac{d}{dt}X_{i}\lrp{X_0, A_j+tC}}
\numberthis \label{e:t:reoqlk:1}
\end{align*}
}

Similar to \eqref{e:t:rmlsdm:4}, we show that for $i\leq j$, 
\begin{align*}
\US^{-1} \frac{d}{dt} X_{i} \lrp{\US X_0, A_j + t R_j} 
=& 0 = \US^{-1} \frac{d}{dt} X_{i} \lrp{\US X_0, A_j + t \US R_j \US}
\end{align*}
and
\begin{align*}
&\US^{-1} \frac{d}{dt} X_{j+1} \lrp{\US X_0, A_j + t R_j} \\
=& \US^{-1} B_j \US X_j( X_0, A_j) M X_j(\US X_0, A_j)^\top A_j X_j(\US X_0, A_j)\\
=& \frac{d}{dt} X_{j+1} \lrp{X_0, A_j + t \US^{\top} R_j \US}.
\end{align*}
Finally, for the inductive step, we follow identical steps leading up to \eqref{e:t:rmlsdm:1} to show that
\begin{align*}
& \US^{-1} \frac{d}{dt} X_{i+1} \lrp{\US X_0, A_j + t R_j}\\
=&\frac{d}{dt} {X_i\lrp{X_0, A_j + t \US^{\top} R_j \US}}\\
&\qquad + B_i \lrp{\frac{d}{dt}X_i\lrp{ X_0, A_j + t \US^{\top} R_j \US}} M X_i\lrp{X_0, A_j}^{\top} A_i X_i\lrp{X_0, A_j}\\
&\qquad + B_i {X_i\lrp{X_0, A_j}} M \lrp{\frac{d}{dt}X_i\lrp{ X_0, A_j + t \US^{\top} R_j \US}}^{\top} A_i X_i\lrp{X_0, A_j}\\
&\qquad + B_i {X_i\lrp{X_0, A_j}} M {X_i\lrp{X_0, A_j}}^{\top} A_i \lrp{\frac{d}{dt}X_i\lrp{ X_0, A_j + t \US^{\top} R_j \US}}
\numberthis \label{e:t:reoqlk:2}
\end{align*}
The inductive proof is complete by verifying that \eqref{e:t:reoqlk:2} exactly matches the third equation of \eqref{e:t:reoqlk:1} when $C = \US^{-1} S \US$. This concludes the proof of \eqref{e:t:reoqlk:0}.

Next, we study the time derivative of $G(\US X_0, A_j + t R_j)$ and show that 
\begin{align*}
\US^{-1} \frac{d}{dt} G(\US X_0, A_j + t R_j)=& \frac{d}{dt} G(X_0, A_j + t \US^{\top} R_j \US).
\numberthis \label{e:t:reoqlk:3}
\end{align*}
This proof differs significantly from that of \eqref{e:t:rmlsdm:3} in a few places, so we provide the whole derivation below.  By chain-rule, we can write
\begin{align*}
\US^{-1} \frac{d}{dt} G(\US X_0, A_j + t R_j) = \spadesuit + \heartsuit
\end{align*}
where
\begin{align*}
\spadesuit := \US^{-1} \frac{d}{dt} \lrp{\US X_0 \prod_{i=0}^{k} \lrp{I + M X_i(\US X_0,A_j+tR_j)^T A_i X_i(\US X_0,A_j+tR_j)}}
\end{align*}
and 
\begin{align*}
\heartsuit :=& \US^{-1} \US X_0 \lrp{ \prod_{i=0}^{j-1} \lrp{I + M X_i(\US X_0,A_j)^T A_i X_i(\US X_0,A_j)}} \\
&\qquad \cdot M X_j(\US X_0,A_j)^T R_j X_j(\US X_0,A_j) \\
&\qquad \cdot \lrp{ \prod_{i=j+1}^{k} \lrp{I + M X_i(\US X_0,A_j)^T A_i X_i(\US X_0,A_j)}}.
\end{align*}

We will separately simplify $\spadesuit$ and $\heartsuit$, and verify at the end that summing them recovers the right-hand-side of \eqref{e:t:reoqlk:3}. We begin with $\spadesuit$, and the steps are almost identical to the proof of \eqref{e:t:rmlsdm:3}.

{\allowdisplaybreaks
\begin{align*}
& \spadesuit\\
=& \US^{-1} \frac{d}{dt} \lrp{\US X_0 \prod_{i=0}^{k} \lrp{I + M X_i(\US X_0,A_j+tR_j)^T A_i X_i(\US X_0,A_j+tR_j)}}\\
= & X_0 \sum_{i=0}^k \lrp{\prod_{\ell=0}^{i-1} \lrp{I + M X_\ell(\US X_0,A_j)^T A_\ell X_i(\US X_0,A_\ell)}}\\
\cdot& M \frac{d}{dt} \lrp{X_i(\US X_0,A_j+tR_j)^T A_i X_i(\US X_0,A_j + tR_j)} \\
\cdot& \lrp{\prod_{\ell=i+1}^{k} \lrp{I + M X_\ell(\US X_0,A_j)^T A_\ell X_i(\US X_0,A_\ell)}}\\
\overset{(i)}{=}& X_0 \sum_{i=0}^k \lrp{\prod_{\ell=0}^{i-1} \lrp{I + M X_\ell(X_0,A_j)^T A_\ell X_\ell(X_0,A_\ell)}}\\
\cdot& M \lrp{\lrp{\US^{-1} \frac{d}{dt}X_i(\US X_0,A_j+tR_j)}^T A_i X_i(X_0,A_j) + M {X_i(X_0,A_j)}^T A_i \lrp{\US^{-1} \frac{d}{dt}X_i(\US X_0,A_j+tR_j)}}\\
\cdot& \lrp{\prod_{\ell=i+1}^{k} \lrp{I + M X_\ell( X_0,A_j)^T A_\ell X_\ell( X_0,A_\ell)}}\\
\overset{(ii)}{=}& X_0 \sum_{i=0}^k \lrp{\prod_{\ell=0}^{i-1} \lrp{I + M X_\ell(X_0,A_j)^T A_\ell X_\ell(X_0,A_\ell)}}\\
\cdot& M \lrp{\lrp{\frac{d}{dt}X_i( X_0,A_j+t \US^{\top} R_j \US)}^T A_i X_i(X_0,A_j) + M {X_i(X_0,A_j)}^T A_i \lrp{\frac{d}{dt}X_i( X_0,A_j+t\US^{\top} R_j \US)}}\\
\cdot& \lrp{\prod_{\ell=i+1}^{k} \lrp{I + M X_\ell( X_0,A_j)^T A_\ell X_\ell( X_0,A_\ell)}}\\
=& X_0 \sum_{i=0}^k \lrp{\prod_{\ell=0}^{i-1} \lrp{I + M X_\ell(X_0,A_j)^T A_\ell X_\ell(X_0,A_\ell)}}\\
\cdot& M \frac{d}{dt} \lrp{X_i(X_0,A_j+t\US^{\top}R_j\US )^T A_i X_i(X_0,A_j + t \US^\top R_j \US)}\\
\cdot& \lrp{\prod_{\ell=i+1}^{k} \lrp{I + M X_\ell( X_0,A_j)^T A_\ell X_\ell( X_0,A_\ell)}}
\numberthis \label{e:t:spadesuit_end}
\end{align*}
}
In $(i)$ above, we  the following facts: 1. $X_i(\US X_0, B_j) = \US X_i(X_0, B_j)$ from \eqref{e:t:UXGA}, 2. $A_i = a_i \Sigma^{-1}$, so that $\US^{\top} A_i \US = A_i$, 3. $\US\US^{-1} = \US^{-1} \US = I$. $(ii)$ follows from \eqref{e:t:reoqlk:0}. 
%The main difference between the above and the proof of \eqref {e:t:rmlsdm:3} is in $(ii)$: we replace $\US^{-1} \frac{d}{dt}X_i(\US X_0,A_j+tS_j)$ by $\frac{d}{dt}X_i( X_0,A_j+t \US^{\top} R_j \US)$ using \eqref{e:t:reoqlk:3} (instead of replacing it by $\frac{d}{dt}X_i( X_0,B_j+t \US^{-1} S_j \US)$ as was done in \eqref{e:t:rmlsdm:3} using \eqref{e:t:key_induction}).

We will now simplify $\heartsuit$.
{\allowdisplaybreaks
\begin{align*}
& \heartsuit\\
=& \US^{-1} \US X_0 \lrp{ \prod_{i=0}^{j-1} \lrp{I + M X_i(\US X_0,A_j)^T A_i X_i(\US X_0,A_j)}} \\
&\qquad \cdot M X_j(\US X_0,A_j)^T R_j X_j(\US X_0,A_j) \\
&\qquad \cdot \lrp{ \prod_{i=j+1}^{k} \lrp{I + M X_i(\US X_0,A_j)^T A_i X_i(\US X_0,A_j)}}\\
\overset{(i)}{=}& X_0 \lrp{ \prod_{i=0}^{j-1} \lrp{I + M X_i(X_0,A_j)^T A_i X_i(X_0,A_j)}} M X_j(X_0,A_j)^\top \US^\top R_j \US X_j(X_0,A_j)\\
&\qquad \cdot \lrp{ \prod_{i=j+1}^{k} \lrp{I + M X_i(X_0,A_j)^T A_i X_i(X_0,A_j)}},
\numberthis \label{e:t:heartsuit_end}
\end{align*}
}
where $(i)$ uses the fact that $X_i(\US X_0, B_j) = \US X_i(X_0, B_j)$ from \eqref{e:t:UXGA} and the fact that $A_i = a_i \Sigma^{-1}$.

By expanding $\frac{d}{dt} G(X_0, A_j + t \US^{\top} R_j \US)$, we verify that 
\begin{align*}
\frac{d}{dt} G(X_0, A_j + t \US^{\top} R_j \US) = \eqref{e:t:spadesuit_end} + \eqref{e:t:heartsuit_end}  
= \spadesuit + \heartsuit
= \US^{-1} \frac{d}{dt} G(\US X_0, A_j + t R_j),
\end{align*}
this concludes the proof of \eqref{e:t:reoqlk:3}.

The remainder of the proof is similar to what was done in \eqref{e:t:rmlsdm:0} in Step 4:
\begin{align*}
& \at{\frac{d}{dt} f(A(tR_j, j), B}{t=0}\\
=& 2\E_{X_0, U} \lrb{\tr\lrp{\lrp{I-M} G(\US X_0, A_j )^\top \Sigma^{-1} \at{\frac{d}{dt} G(\US X_0, A_j + t R_j)}{t=0} \lrp{I-M}}}\\
\overset{(i)}{=}& 2\E_{X_0,U} \lrb{\tr\lrp{\lrp{I-M} G( X_0, A_j )^\top \Sigma^{-1} \at{\frac{d}{dt} G(X_0, A_j + t \US^{\top} R_j \US)}{t=0} \lrp{I-M}}}\\
\overset{(ii)}{=}& 2\E_{X_0} \lrb{\tr\lrp{\lrp{I-M} G( X_0, A_j )^\top \Sigma^{-1} \at{\frac{d}{dt} G(X_0, A_j + t \E_U\lrb{\US^{\top} R_j \US} )}{t=0} \lrp{I-M}}}\\
=& 2\E_{X_0} \lrb{\tr\lrp{\lrp{I-M} G( X_0, A_j )^\top \Sigma^{-1} \at{\frac{d}{dt} G(X_0, A_j + t \cdot r_j \Sigma^{-1})}{t=0} \lrp{I-M}}}\\
=& \at{\frac{d}{dt} f(A(t\cdot r_j \Sigma^{-1},j), B)}{t=0},
\end{align*}
where $r_j := \frac{1}{d} \tr\lrp{\Sigma^{1/2} R_j \Sigma^{1/2}}$. In the above, $(i)$ uses 1. \eqref{e:t:UXGA} and \eqref{e:t:reoqlk:3}, as well as the fact that $\US^\top \Sigma^{-1} \US = \Sigma^{-1}$. $(ii)$ uses the fact that $\at{\frac{d}{dt} G(X_0, A_j + t C)}{t=0}$ is affine in $C$. To see this, one can verify using a simple induction argument, that $\frac{d}{dt} X_i(X_0, A_j + tC)$  is affine in $C$ for all $i$. We can then verify from the definition of $G$, e.g. using similar algebra as the proof of \eqref{e:t:reoqlk:3}, that $\frac{d}{dt} G(X_0, A_j + C)$ is affine in $\frac{d}{dt} X_i(X_0, A_j + t C)$ and $C$. Thus $\E_U\lrb{G(X_0, A_j + t \US^{\top} R_j \US)} = G(X_0, A_j + t \E_U\lrb{\US^{\top} R_j \US})$.

This concludes the proof of \eqref{e:t:wfoqmdksa:2}, and hence of the whole theorem.

\subsection{Equivalence under permutation} 

\begin{lemma}
\label{l:bar}
Consider the same setup as \autoref{t:L_layer_P_0}. Let $A = \lrbb{A_i}_{i=0}^k$, with $A_i = a_i \Sigma^{-1}$. Let 
\begin{align}
    f(A) := f \lrp{ \left\{ Q_i = \begin{bmatrix}
A_i & 0 \\ 
0 & 0
\end{bmatrix}, P_i = \begin{bmatrix}
0_{d\times d} & 0 \\ 
0 & 1 
\end{bmatrix}\right\}_{i=0}^k}.
\end{align}
Let $i,j \in \lrbb{0, \dots, k}$ be any two arbitrary indices, and let $\tilde{A}_i = A_j$, $\tilde{A}_j = A_i$, and let $\tilde{A}_{\ell} = A_\ell$ for all $\ell \in \lrbb{0,\dots ,k} \backslash \lrbb{i,j}$. Then $f(A) = f(\tilde{A})$
\end{lemma}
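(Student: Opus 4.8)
The plan is to reduce the statement to a pointwise identity for the transformer's last hidden row, and then observe that in the parametrization $A_\ell = a_\ell\Sigma^{-1}$ the forward pass is a product of \emph{commuting} matrices, so reordering the layers changes nothing.

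First I would invoke the reformulation used in the proof of Theorem~\ref{t:L_layer_P_0}. Under the sparsity pattern \eqref{eq:sparse_attention}, the covariate block is frozen, $X_i = X_0$ for all $i$ (see \eqref{e:t:dynamic_Y_only}), the label row satisfies $Y_{k+1} = Y_0\prod_{\ell=0}^k\lrp{I + M X_0^\top A_\ell X_0}$, and the in-context loss has the form \eqref{e:t:oiemgrkwfdlw:0}, namely $f(A) = \E_{(X_0,\wstar)}\lrb{\Tr\lrp{\lrp{I-M}Y_{k+1}^\top Y_{k+1}\lrp{I-M}}}$. Since the only dependence of $f$ on the weights is through $Y_{k+1}$, and the swap $A_i\leftrightarrow A_j$ does not touch the random variables $(X_0,\wstar)$, it is enough to show that $Y_{k+1}$ is unchanged for every fixed realization of $(X_0,\wstar)$.

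Next I would substitute $A_\ell = a_\ell\Sigma^{-1}$ and set $N := X_0^\top\Sigma^{-1}X_0$, a matrix determined by $X_0$ alone. Then $M X_0^\top A_\ell X_0 = a_\ell\, MN$, so $Y_{k+1} = Y_0\prod_{\ell=0}^k\lrp{I + a_\ell\, MN}$. Each factor is an affine polynomial in the \emph{one} matrix $MN$; since any two polynomials in a common matrix commute, the ordered product is invariant under every permutation of $\{0,\dots,k\}$. Concretely it equals $\sum_{S\subseteq\{0,\dots,k\}}\lrp{\prod_{\ell\in S}a_\ell}(MN)^{|S|}$, which is a symmetric function of $(a_0,\dots,a_k)$, hence in particular invariant under interchanging $a_i$ and $a_j$ (equivalently $A_i$ and $A_j$). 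Thus $Y_{k+1}(A) = Y_{k+1}(\tilde A)$ pointwise, and therefore $f(A) = f(\tilde A)$.

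The only subtlety — and the step I would emphasize — is the commutativity: $MN$ is not symmetric, so $MN$ and $NM$ do not commute, and one must not accidentally create both. This never happens here because the residual recursion only right-multiplies $Y_i$ by factors of the form $I + M X_0^\top A_i X_0$, all of which are polynomials in the same $MN$; that single observation is essentially the whole proof. (The same argument in fact shows $f$ is invariant under \emph{any} reordering of the layers, not just a transposition, which is the ``interchange symmetry'' alluded to for the identity-covariance case.)
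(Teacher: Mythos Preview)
Your proposal is correct and follows essentially the same route as the paper: both reduce to the product representation $Y_{k+1}=Y_0\prod_{\ell}(I+M X_0^\top A_\ell X_0)$ and exploit that with $A_\ell=a_\ell\Sigma^{-1}$ every factor is $I+a_\ell\,MN$ for the single matrix $N=X_0^\top\Sigma^{-1}X_0$, so the factors commute. The paper verifies commutation of \emph{adjacent} factors by expanding the product and then chains transpositions, whereas you observe directly that polynomials in one matrix commute and write the product as a symmetric function of $(a_0,\dots,a_k)$; this is a cosmetic difference, not a different idea.
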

\begin{proof}
Following the same setup leading up to \eqref{e:t:oiemgrkwfdlw:0} in the proof of \autoref{t:L_layer_P_0}, we verify that the in-context loss is 
\begin{align*}
f(A) = \E \lrb{\tr\lrp{\lrp{I-M} G(X_0,A)^\top \Sigma^{-1} G(X_0,A)\lrp{I-M}}}
\end{align*}
where $G(X_0,A) := X_0 \prod_{\ell=0}^{k} \lrp{I + M X_0^T A_\ell X_0}$.

Consider any fixed index $\ell$. We will show that
\begin{align*}
\lrp{I + M X_0^T A_{\ell} X_0} \lrp{I + M X_0^T A_{\ell+1} X_0}
= \lrp{I + M X_0^T A_{\ell+1} X_0} \lrp{I + M X_0^T A_{\ell} X_0}.
\end{align*}
The lemma can then be proven by repeatedly applying the above, so that indices of $A_i$ and $A_j$ are swapped.

To prove the above equality, 
\begin{align*}
& \lrp{I + M X_0^T A_{\ell} X_0} \lrp{I + M X_0^T A_{\ell+1} X_0}\\
=& I + M X_0^T A_{\ell} X_0 + M X_0^T A_{\ell+1} X_0 + M X_0^T A_{\ell} X_0M X_0^T A_{\ell+1} X_0\\
=& I + M X_0^T A_{\ell} X_0 + M X_0^T A_{\ell+1} X_0 + M X_0^T a_{\ell} \Sigma^{-1} X_0M X_0^T a_{\ell+1} \Sigma^{-1} X_0\\
=& I + M X_0^T A_{\ell} X_0 + M X_0^T A_{\ell+1} X_0 + M X_0^T a_{\ell+1} \Sigma^{-1} X_0M X_0^T a_{\ell} \Sigma^{-1} X_0\\
=& \lrp{I + M X_0^T A_{\ell+1} X_0} \lrp{I + M X_0^T A_{\ell} X_0}.
\end{align*}
This concludes the proof. Notice that we crucially used the fact that $A_\ell$ and $A_{\ell+1}$ are the same matrix up to scaling.
\end{proof}

\section{Auxiliary Lemmas}

\subsection{Proof of \autoref{lem:express} (Equivalence to Preconditioned Gradient Descent)}
 \label{pf:express}
Consider fixed samples $\tx{1}, \dots, \tx{n}$, and fixed $\wstar$. Let $P=\lrbb{P_i}_{i=0}^k,Q=\lrbb{Q_i}_{i=0}^k$ denote fixed weights. Let $Z_i$ evolve as described in \eqref{eq:recursion}. Let $X_i$ denote the first $d$ rows of $Z_k$ (under \eqref{eq:sparse_attention}, $X_i=X_0$ for all $I$) and let $Y_i$ denote the $(d+1)^{th}$ row of $Z_i$. Let $g(x,y,k) : \R^d \times \R \times \mathbb{Z} \to \R$ be a function defined as follows: let $x^{n+1} = x$ and let $y^{n+1}_0 = y$, then $g(x,y,k) := y^{n+1}_k$. Note that $y^{n+1}_k = \lrb{Y_k}_{n+1}$.

We verify that, under \eqref{eq:sparse_attention}, the formula for updating $\ty{n+1}_k$ is given by
\begin{align*}
Y_{k+1}  = Y_{k}  - \frac{1}{n} Y_{k} M X_{0}^\top A_k X_{0}.
\end{align*}
where $M$ is a mask given by $\begin{bmatrix}
I & 0 \\0 & 0
\end{bmatrix}$. We can verify the following facts

\begin{enumerate}
\item $g(x,y,k) = g(x,0,k) + y$. To see this, notice first that for all $i \in \lrbb{1,\dots,n}$, 
$$\ty{i}_{k+1} = \ty{i}_{k} - \frac{1}{n} \sum_{j=1}^{n} {\tx{i}}^T A_k \tx{j} \ty{j}_k.$$ 
In other words, $\ty{i}_k$ does not depend on $\ty{n+1}_t$ for any $t$.  Next, for $\ty{n+1}_k$ itself, 
$$\ty{n+1}_{k+1} = \ty{n+1}_{k} - \frac{1}{n} \sum_{j=1}^{n} {\tx{n+1}}^T A_k \tx{j} \ty{j}_k,$$
which depends on $y^{n+1}_k$ only additively. We can verify under a simple induction that $g(x,y,k+1) - y = g(x,y,k) -y$.

\item $g(x,0,k)$ is linear in $x$. To see this, notice first that for $j \neq n+1$, $\ty{j}_k$ is does not depend on $\tx{n+1}_t$ for all $t,j,k$. Consequently, the update formula for $\ty{n+1}_{k+1}$ depends only linearly on $\tx{n+1}$ and $\ty{n+1}_k$. Finally, $\ty{n+1}_0 = 0$ is linear in $x$, so the conclusion follows by induction.
\end{enumerate}

With these two facts in mind, we verify that for each $k$, there exists a $\theta_k\in \R^d$, such that
\begin{align*}
g(x,y,k) = g(x,0,k) + y = \lin{\theta_k, x} + y
\end{align*}
for all $x,y$. It follows from definition that $g(x,y,0) = y$, so that $\lin{\theta_0,x} = g(x,y,0) - y = 0$, so that $\theta_0 = 0$. 

We now turn our attention to the third crucial fact: for all $i$, 
\begin{align*}
g(\tx{i},\ty{i},k) = \ty{i}_k = \lin{\theta_k, \tx{i}} + \ty{i}
\end{align*}
To see this, suppose that we let $\tx{n+1} := \tx{i}$ for some $i\in 1,\dots,n$. Then
\begin{align*}
& \ty{i}_{k+1} = \ty{i}_{k} - \frac{1}{n} \sum_{j=1}^{n} {\tx{i}}^T A_k \tx{j} \ty{j}_k\\
& \ty{n+1}_{k+1} = \ty{n+1}_{k} - \frac{1}{n} \sum_{j=1}^{n} {\tx{n+1}}^T A_k \tx{j} \ty{j}_k,
\end{align*}

thus $\ty{i}_{k+1} = \ty{n+1}_{k+1}$ if $\ty{i}_{k} = \ty{n+1}_{k}$, and the induction proof is completed by noting that $\ty{i}_{0} = \ty{n+1}_{0}$ by definition. Let $\bar{X} \in R^{d\times n}$ be the matrix whose columns are $\tx{1} ,\dots, \tx{n}$, leaving out $\tx{n+1}$. Let $\bar{Y}_k \in \R^{1\times n}$ denote the vector of $\ty{1}_k,\dots,\ty{n}_k$. Then it follows that
\begin{align*}
\bar{Y}_k = \bar{Y}_0 + \theta_k^T \bar{X}.
\end{align*}

Using the above fact, the update formula for $\ty{n+1}_k$ can be written as 
\begin{align*}
\ty{n+1}_{k+1} =& \ty{n+1}_{k} - \frac{1}{n} \lin{A_k X^\top Y_k, \tx{n+1}}\\
\Rightarrow \qquad 
\lin{\theta_{k+1}, \tx{n+1}} 
=& \lin{\theta_k, \tx{n+1}} - \frac{1}{n} \lin{A_k \bar{X} \lrp{\bar{X}^T \theta_k + \bar{Y}_0}, \tx{n+1}}\\
=& \lin{\theta_k, \tx{n+1}} - \frac{1}{n} \lin{A_k \bar{X} \lrp{\bar{X}^T \lrp{\theta_k + \wstar}}, \tx{n+1}}
\end{align*}

Since the choice of $\tx{n+1}$ is arbitrary, we get the more general update formula 
\begin{align*}
\theta_{k+1} = \theta_k - \frac{1}{n} A_k \bar{X} \bar{X}^T 
\lrp{\theta_k + \wstar}.
\end{align*}
We can treat $A_k$ as a preconditioner. Let $f(\theta):== \frac{1}{2n} \lrp{\theta+\wstar}^T \bar{X} \bar{X}^T (\theta+\wstar)$, then
\begin{align*}
\theta_{k+1} = \theta_k - \frac{1}{n} A_k \nabla f(\theta).
\end{align*}
Finally, let $\wgd_k := - \theta_k$. We verify that $f(-w) = R_{\wstar}(w)$, so that 
\begin{align*}
\wgd_{k+1} = \wgd_k - \frac{1}{n} A_k \nabla R_{\wstar}(\wgd_k).
\end{align*}

We also verify that for any $\tx{n+1}$, the prediction of $\ty{n+1}_k$ is 
\begin{align*}
g\lrp{\tx{n+1}, \ty{n+1}, k} = \ty{n+1} -\lin{\theta, \tx{n+1}} = \ty{n+1} + \lin{\wgd_k, \tx{n+1}}.
\end{align*}

This concludes the proof.

\subsection{Reformulating the in-context loss}
In this section, we will develop a re-formulation in-context loss, defined in \eqref{def:ICL linear}, in a more convenient form (see \autoref{l:icl_trace_form}).

For the entirety of this section, we assume that the transformer parameters $\lrbb{P_i,Q_i}_{i=0}^k$ are of the form defined in \eqref{eq:full_attention}, which we reproduce below for ease of reference:
\begin{align}
P_i = \begin{bmatrix}
B_i & 0 \\ 
0 & 1
\end{bmatrix}, \quad Q_i = \begin{bmatrix}
A_i & 0 \\ 
0 & 0
\end{bmatrix}.
\end{align}
Recall the update dynamics in \eqref{eq:recursion}, which we reproduce below:
\begin{align*}
Z_{i+1} = Z_{i} + \frac{1}{n}  P Z_i \aa Z_i^\top Q Z_i,
\numberthis \label{e:dynamics_Z}
\end{align*}
where $M$ is a mask matrix given by $M := \begin{bmatrix}I_{n\times n} & 0 \\ 0 & 0\end{bmatrix}$. Let $X_k \in \R^{d\times n+1}$ denote the first $d$ rows of $Z_k$ and let $Y_k \in \R^{1\times n+1}$ denote the $(d+1)^{th}$ (last) row of $Z_k$. Then the dynamics in \eqref{e:dynamics_Z} is equivalent to
\begin{align*}
& X_{i+1} = X_i + \frac{1}{n} B_i X_i M X_i^T A_i X_i\\
& Y_{i+1} = Y_i + \frac{1}{n}Y_i M X_i^T A_i X_i.
\numberthis \label{e:dynamics_XY}
\end{align*}

We present below an equivalent form for the in-context loss from \eqref{def:ICL linear}:

\begin{lemma}
\label{l:icl_trace_form}
Let $p_x$ and $p_w$ denote distributions over $\R^d$. Let $\tx{1},\dots,\tx{n+1} \overset{iid}{\sim} p_x$ and $\wstar \sim p_w$. Let $Z_0\in \R^{d+1\times n+1}$ be as defined in \eqref{d:Z_0}: 
\begin{align*}
Z_0 = \begin{bmatrix}
\tx{1} & \tx{2} & \cdots & \tx{n} &\tx{n+1} \\ 
\ty{1} & \ty{2} & \cdots &\ty{n}& 0
\end{bmatrix} \in \R^{(d+1) \times (n+1)}.
\end{align*}
Let $Z_k$ denote the output of the $(k-1)^{th}$ layer of the linear transformer (as defined in \eqref{e:dynamics_Z}, initialized at $Z_0$). Let $f\left(\{P_i, Q_i\}^{k}_{i=0}\right)$ denote the in-context loss defined in \eqref{def:ICL linear}, i.e.
\begin{align*}
f\left(\{P_i, Q_i\}^{k}_{i=0}\right) = \E_{(Z_0,\wstar)} \Bigl[ \left( [Z_{k}]_{(d+1),(n+1)} + \wstar^\top \tx{n+1}  \right)^2\Bigr].
\numberthis \label{e:old_icl}
\end{align*}

Let $\overline{Z}_0$ be defined as 
\begin{align*}
\overline{Z}_0 = \begin{bmatrix}
\tx{1} & \tx{2} & \cdots & \tx{n} &\tx{n+1} \\ 
\ty{1} & \ty{2} & \cdots &\ty{n}& \ty{n+1}
\end{bmatrix} \in \R^{(d+1) \times (n+1)},
\end{align*}
where $\ty{n+1} = \lin{\wstar, \tx{n+1}}$. Let $\overline{Z}_k$ denote the output of the $(k-1)^{th}$ layer of the linear transformer (as defined in \eqref{e:dynamics_Z}, initialized at $\overline{Z}_0$). Assume $\lrbb{P_i,Q_i}_{i=0}^k$ be of the form in \eqref{eq:full_attention}. Then the loss in \eqref{def:ICL linear} has the equivalent form
\begin{align*}
f\left(\{A_i, B_i\}^{k}_{i=0}\right) := f\left(\{P_i, Q_i\}^{k}_{i=0}\right) = \E_{(\overline{Z}_0,\wstar)} \lrb{\tr\lrp{\lrp{I-M}\overline{Y}_{k+1}^\top \overline{Y}_{k+1}\lrp{I-M}}},
\end{align*}
where $\overline{Y}_{k+1}\in\R^{1\times n+1}$ is the $(d+1)^{th}$ row of $\overline{Z}_k$.

\end{lemma}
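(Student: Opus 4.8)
The plan is to reduce both sides of the claimed identity to a single scalar quantity — the bottom-right entry of the final hidden state — and then track how that entry evolves along the layer recursion \eqref{e:dynamics_XY}.

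\emph{First, collapsing the trace.} Since $M=\begin{bmatrix} I_{n} & 0\\ 0 & 0\end{bmatrix}$, the complementary matrix $I-M$ has a single nonzero entry, a $1$ in position $(n+1,n+1)$. Hence for any row vector $\overline Y_k\in\R^{1\times(n+1)}$, the matrix $(I-M)\overline Y_k^\top\overline Y_k(I-M)$ has its only nonzero entry equal to $\bigl([\overline Y_k]_{n+1}\bigr)^2$ in the bottom-right corner, so $\Tr\lrp{(I-M)\overline Y_k^\top\overline Y_k(I-M)}=\bigl([\overline Z_k]_{(d+1),(n+1)}\bigr)^2$. Thus it remains to prove $\E\bigl[([\overline Z_k]_{(d+1),(n+1)})^2\bigr]=\E\bigl[([Z_k]_{(d+1),(n+1)}+\wstar^\top\tx{n+1})^2\bigr]$, and for this it suffices to establish the stronger pointwise identity $[\overline Z_k]_{(d+1),(n+1)}=[Z_k]_{(d+1),(n+1)}+\wstar^\top\tx{n+1}$ for every realization of $(\{\tx{i}\},\wstar)$.

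\emph{Next, comparing the two trajectories.} Run \eqref{e:dynamics_XY} from $Z_0$ (whose last-row, last entry is $0$) and from $\overline Z_0$ (whose last-row, last entry is $\ty{n+1}=\wstar^\top\tx{n+1}$), writing the iterates as $(X_i,Y_i)$ and $(\overline X_i,\overline Y_i)$; note the two initializations coincide in every other entry, in particular $X_0=\overline X_0$ and $[Y_0]_{1:n}=[\overline Y_0]_{1:n}$. I would then establish, by a routine induction on $i$ using the block structure \eqref{eq:full_attention} and the fact that $M$ zeroes out the $(n+1)$-th column: (i) the $X$-update $X_{i+1}=X_i+\tfrac1n B_i X_i M X_i^\top A_i X_i$ does not involve $Y_i$, so $X_i=\overline X_i$ for all $i$; (ii) the update of $[Y_{i+1}]_{1:n}$ depends only on $X_i$ and $[Y_i]_{1:n}$, so $[Y_i]_{1:n}=[\overline Y_i]_{1:n}$ for all $i$; (iii) the increment $[Y_{i+1}]_{n+1}-[Y_i]_{n+1}=\tfrac1n\,(Y_iMX_i^\top A_i)\,[X_i]_{:,n+1}$ depends only on $X_i$, $A_i$ and $[Y_i]_{1:n}$, hence is identical along the two trajectories.

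\emph{Finally, the conclusion.} From (iii), $[\overline Y_i]_{n+1}-[Y_i]_{n+1}$ is constant in $i$, so it equals its value at $i=0$, namely $\ty{n+1}-0=\wstar^\top\tx{n+1}$; evaluating at the final layer gives $[\overline Z_k]_{(d+1),(n+1)}=[Z_k]_{(d+1),(n+1)}+\wstar^\top\tx{n+1}$, which combined with the trace collapse above proves the lemma (the first equality $f(\{A_i,B_i\})=f(\{P_i,Q_i\})$ being just the observation, implicit in \eqref{e:dynamics_XY}, that the loss depends on $P_i,Q_i$ only through $A_i,B_i$). The only real work is the bookkeeping in the second step, but there is no genuine obstacle: everything follows from the elementary fact that $M$ and $I-M$ decouple the first $n$ columns of the hidden state from the $(n+1)$-th, so the last column of $X$ never feeds back into the first $n$, and the last entry of $Y$ never feeds back into the first $n$.
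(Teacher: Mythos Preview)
Your proof is correct and follows essentially the same approach as the paper: the paper factors your inductive step (points (i)--(iii)) into a separate auxiliary lemma (Lemma~\ref{l:additive_dependence_yn+1}) establishing that $F_Y(\{A_i,B_i\},X_0,Y_0+C)=F_Y(\{A_i,B_i\},X_0,Y_0)+C$ for $C=[0,\dots,0,c]$, and then combines it with the trace collapse exactly as you do. The organization differs slightly, but the argument is the same.
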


Before proving \autoref{l:icl_trace_form}, we first establish an intermediate result (\autoref{l:additive_dependence_yn+1} below). To facilitate discussion, let us define a function $F_{X}\lrp{\lrbb{A_i,B_i}_{i=0}^k, X_0,Y_0}$ and $F_{Y}\lrp{\lrbb{A_i,B_i}_{i=0}^k, X_0,Y_0}$ to be the outputs, after $k$ layers of linear transformers respectively. I.e.
\begin{align*}
& F_{X}\lrp{\lrbb{A_i,B_i}_{i=0}^k, X_0,Y_0} = X_{k+1}\\
& F_{Y}\lrp{\lrbb{A_i,B_i}_{i=0}^k, X_0,Y_0} = Y_{k+1},
\end{align*}
as defined in \eqref{e:dynamics_XY}, given initialization $X_0,Y_0$.

We now prove a useful lemma showing that $\lrb{Y_0}_{n+1}=\ty{n+1}$ influences $X_i,Y_i$ in a very simple manner:
\begin{lemma}
\label{l:additive_dependence_yn+1}
Let $X_i,Y_i$ follow the dynamics in \eqref{e:dynamics_XY}. Then
\begin{enumerate}
\item $\lrb{X_i}$ is are independent of $\lrb{Y_0}_{n+1}$.
\item For $j\neq n+1$, $\lrb{Y_i}_{j}$ is independent of $\lrb{Y_0}_{n+1}$. 
\item $\lrb{Y_i}_{n+1}$ depends additively on $\lrb{Y_0}_{n+1}$.
\end{enumerate}

In other words, for $C := \lrb{0,0,0,\dots,,0,c} \in \R^{1\times (n+1)}$,
\begin{align*}
1:\ & F_{X}\lrp{\lrbb{A_i,B_i}_{i=0}^k, X_0,Y_0 + C} = F_{X}\lrp{\lrbb{A_i,B_i}_{i=0}^k, X_0,Y_0} \\
2 + 3:\ & F_{Y}\lrp{\lrbb{A_i,B_i}_{i=0}^k, X_0,Y_0 + C} = F_{Y}\lrp{\lrbb{A_i,B_i}_{i=0}^k, X_0,Y_0} + C
\end{align*}
\end{lemma}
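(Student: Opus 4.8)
The plan is to induct on the layer index and exploit the block structure that the mask $M$ forces onto the $Y$-recursion.

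\textbf{Claim 1.} Inspecting the dynamics \eqref{e:dynamics_XY}, the update $X_{i+1} = X_i + \frac{1}{n} B_i X_i M X_i^{\top} A_i X_i$ depends only on $X_i$ and on the parameters $A_i,B_i$, and never on $Y_i$. Since $X_0$ is a function of $\tx{1},\dots,\tx{n+1}$ alone, a one-line induction shows that $X_i$ is independent of $\lrb{Y_0}_{n+1}$ for every $i$; in particular $F_X\lrp{\lrbb{A_i,B_i}_{i=0}^k, X_0, Y_0 + C} = F_X\lrp{\lrbb{A_i,B_i}_{i=0}^k, X_0, Y_0}$, which is statement $1$.

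\textbf{Claims 2 and 3.} The $Y$-recursion is \emph{linear} in $Y$: from \eqref{e:dynamics_XY}, $Y_{i+1} = Y_i\lrp{I + \frac{1}{n} M X_i^{\top} A_i X_i}$, hence $Y_{i} = Y_0 N_i$ where $N_i := \prod_{\ell=0}^{i-1}\lrp{I + \frac{1}{n} M X_\ell^{\top} A_\ell X_\ell}$, a matrix that by Claim 1 does not depend on $\lrb{Y_0}_{n+1}$. The key observation is that left-multiplication by $M = \mathrm{diag}(1,\dots,1,0)$ (with $n$ ones) zeroes out the last row, so each factor $I + \frac{1}{n} M X_\ell^{\top} A_\ell X_\ell$ has last row equal to $\e_{n+1}^{\top} = [0,\dots,0,1]$. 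If two matrices $N,N'$ both have last row $\e_{n+1}^{\top}$, then the last row of $NN'$ is $\e_{n+1}^{\top}N'$, i.e.\ the last row of $N'$, again $\e_{n+1}^{\top}$; by induction over the product, $N_i$ has last row $\e_{n+1}^{\top}$ as well. Thus $N_i = \begin{bmatrix} \widetilde{N}_i & w_i \\ 0 & 1\end{bmatrix}$ for some $\widetilde N_i\in\R^{n\times n}$ and $w_i\in\R^{n}$. Writing $Y_0 = \begin{bmatrix} \overline{Y}_0 & \lrb{Y_0}_{n+1}\end{bmatrix}$ with $\overline Y_0\in\R^{1\times n}$, we get $Y_i = \begin{bmatrix} \overline Y_0 \widetilde N_i & \overline Y_0 w_i + \lrb{Y_0}_{n+1}\end{bmatrix}$: the first $n$ coordinates depend only on $\overline Y_0$ and $\widetilde N_i$ (Claim 2), and the last coordinate depends additively on $\lrb{Y_0}_{n+1}$ (Claim 3).

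\textbf{The displayed identities.} Statement $2+3$ now follows from linearity of $Y_{k+1} = Y_0 N_{k+1}$ in $Y_0$: $F_Y\lrp{\lrbb{A_i,B_i}_{i=0}^k, X_0, Y_0 + C} = (Y_0 + C) N_{k+1} = Y_0 N_{k+1} + C N_{k+1}$, and since $C = \begin{bmatrix} 0 & \cdots & 0 & c\end{bmatrix}$, we have $C N_{k+1} = c\cdot(\text{last row of }N_{k+1}) = c\,\e_{n+1}^{\top} = C$, so the right-hand side equals $F_Y\lrp{\lrbb{A_i,B_i}_{i=0}^k, X_0, Y_0} + C$. There is essentially no obstacle here: the whole argument is a routine induction, and the only point requiring a moment's care is verifying that every attention factor has exactly $\e_{n+1}^{\top}$ as its last row — which is precisely what the mask $M$ guarantees.
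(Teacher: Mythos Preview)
Your proof is correct and follows essentially the same idea as the paper's: both exploit that the mask $M$ kills the contribution of the $(n{+}1)$-th coordinate in the $Y$-update, so that $[Y_i]_{n+1}$ never feeds back into the dynamics. Your argument is somewhat more explicit than the paper's terse version, writing out the product $Y_i = Y_0 N_i$ and analyzing the block structure of $N_i$, but the underlying mechanism is identical.
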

\begin{proof}[Proof of \autoref{l:additive_dependence_yn+1}]
The first and second items follows directly from observing that the dynamics for $X_i$ and $Y_i$ in \eqref{e:dynamics_XY} do not involve $\lrb{Y_i}_{n+1}$, due to the effect of $M$.

The third item again uses the fact that $\lrb{Y_{i+1} - Y_i}_{n+1}$ does not depend on $\lrb{Y_{i}}_{n+1}$.
\end{proof}

We are now ready to prove \autoref{l:icl_trace_form}

\begin{proof}[Proof of \autoref{l:icl_trace_form}]
Let $Z_0$, $Z_k$, $\overline{Z}_0$, $\overline{Z}_k$ be as defined in the lemma statement. Let $\overline{X}_k$ and $\overline{Y}_k$ denote first $d$ rows and last row of $\overline{Z}_k$. Then by \autoref{l:additive_dependence_yn+1}, $\overline{X}_{k+1} = X_{k+1}$ and $\overline{Y}_{k+1} = Y_{k+1} + \begin{bmatrix}0 & 0 & \cdots & 0 & \lin{\wstar, \tx{n+1}}\end{bmatrix}$. Therefore, \eqref{e:old_icl} is equivalent to
\begin{align*}
& \E_{(\overline{Z}_0,\wstar)} \Bigl[ \left( [\overline{Z}_{k+1}]_{(d+1),(n+1)}  \right)^2\Bigr]\\
=& \E_{(\overline{Z}_0,\wstar)} \Bigl[ \left( [\overline{Y}_{k+1}]_{(n+1)}  \right)^2\Bigr]\\
=& \E_{(\overline{Z}_0,\wstar)} \lrb{\lrn{\lrp{I-M}\overline{Y}_{k+1}^\top}^2}\\
=& \E_{(\overline{Z}_0,\wstar)} \lrb{\tr\lrp{\lrp{I-M}\overline{Y}_{k+1}^\top \overline{Y}_{k+1}\lrp{I-M}}}.
\end{align*}
This concludes the proof. 
\end{proof}

\section{Additional experimental results}
\label{s:additional_plots}

In this section, we present a few addition experimental results. We first present in \autoref{fig:appendix_full} a visualization of learned weights $A_0,A_1,A_2$ for the setting of \autoref{t:L_layer_P_identity}.
One can see that the weight pattern matches the stationary point analyzed in \autoref{t:L_layer_P_identity}; hence, combining \autoref{fig:B0_B1} and \autoref{fig:appendix_full}, we corroborate our  results from \autoref{t:L_layer_P_identity}. Interestingly, it appears that the transformer implements a tiny gradient step using $X_0$ (as $A_0$ is small), and a large gradient step using $X_2$ (as $A_2$ is large). We believe that this is due to $X_2$ being better-conditioned than $X_1$, due to the effects of $B_0,B_1$. 
\begin{figure}[H] 
\centering
 \begin{subfigure}{0.3\textwidth}
\centering
\includegraphics[width=\textwidth]{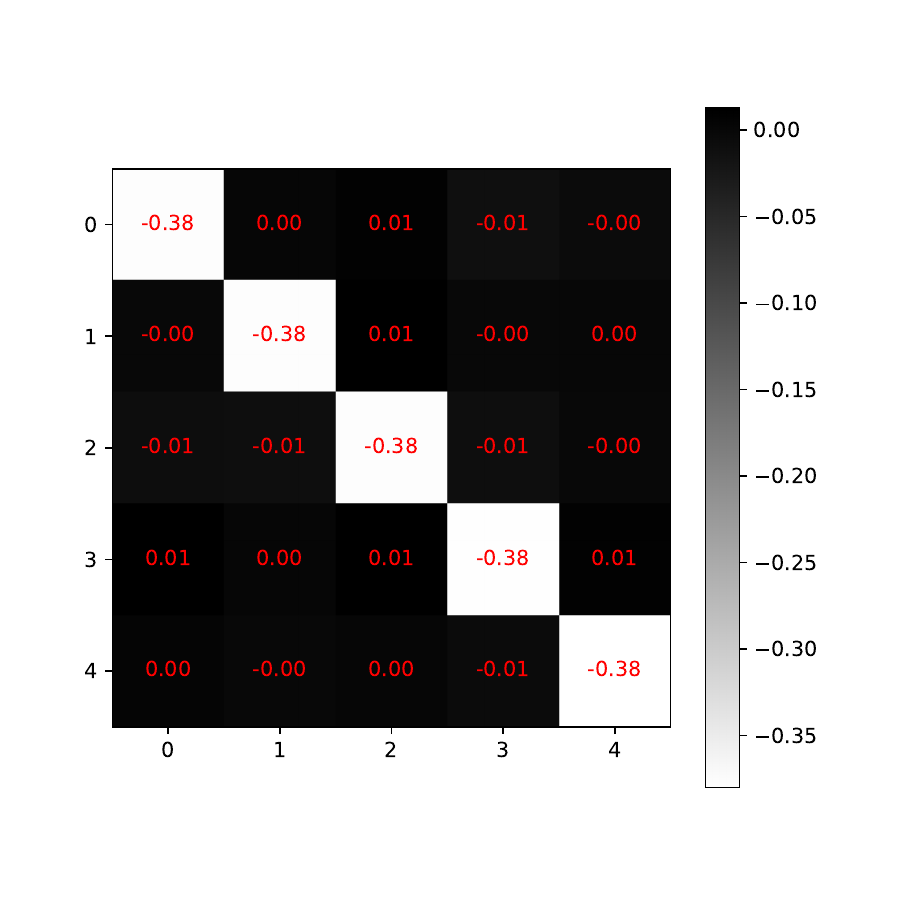}
\caption{Visualization of $\Sigma^{1/2} A_0 \Sigma^{1/2}$} 
\label{fig:A0_PQ_app}
\end{subfigure}  
\begin{subfigure}{0.3\textwidth}
\centering
\includegraphics[width=\textwidth]{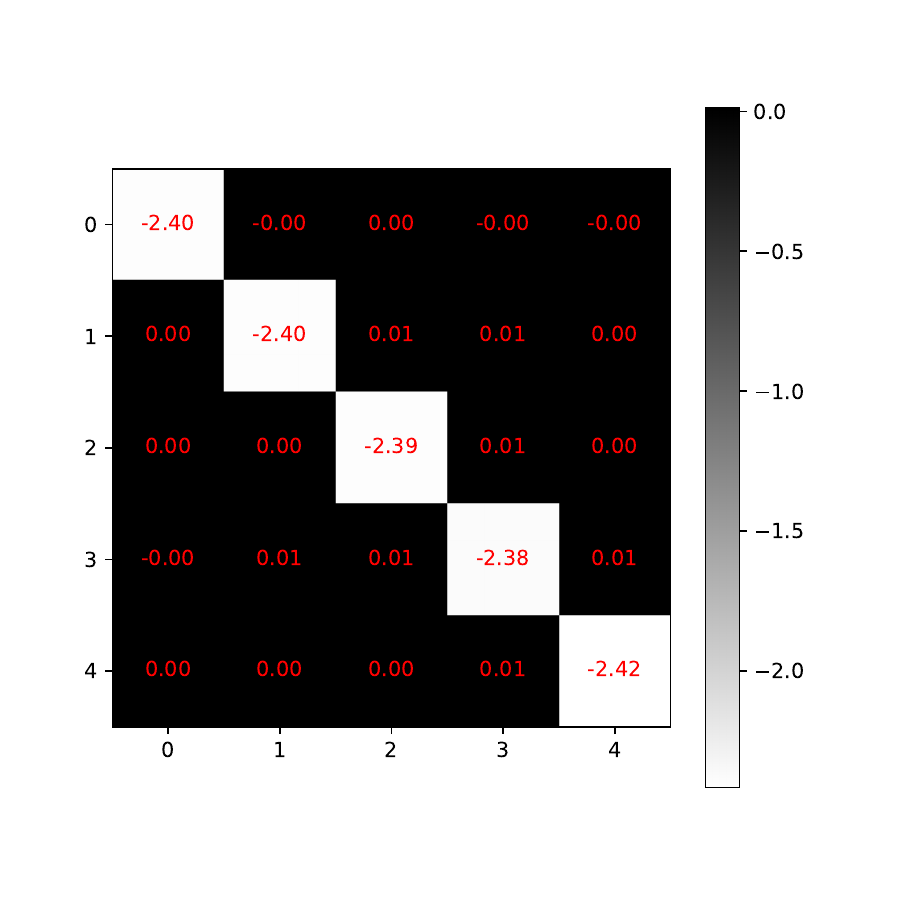}
\caption{Visualization of $\Sigma^{1/2} A_1 \Sigma^{1/2}$} 
\label{fig:A1_PQ_app}
\end{subfigure}  
\begin{subfigure}{0.3\textwidth}
\centering
\includegraphics[width=\textwidth]{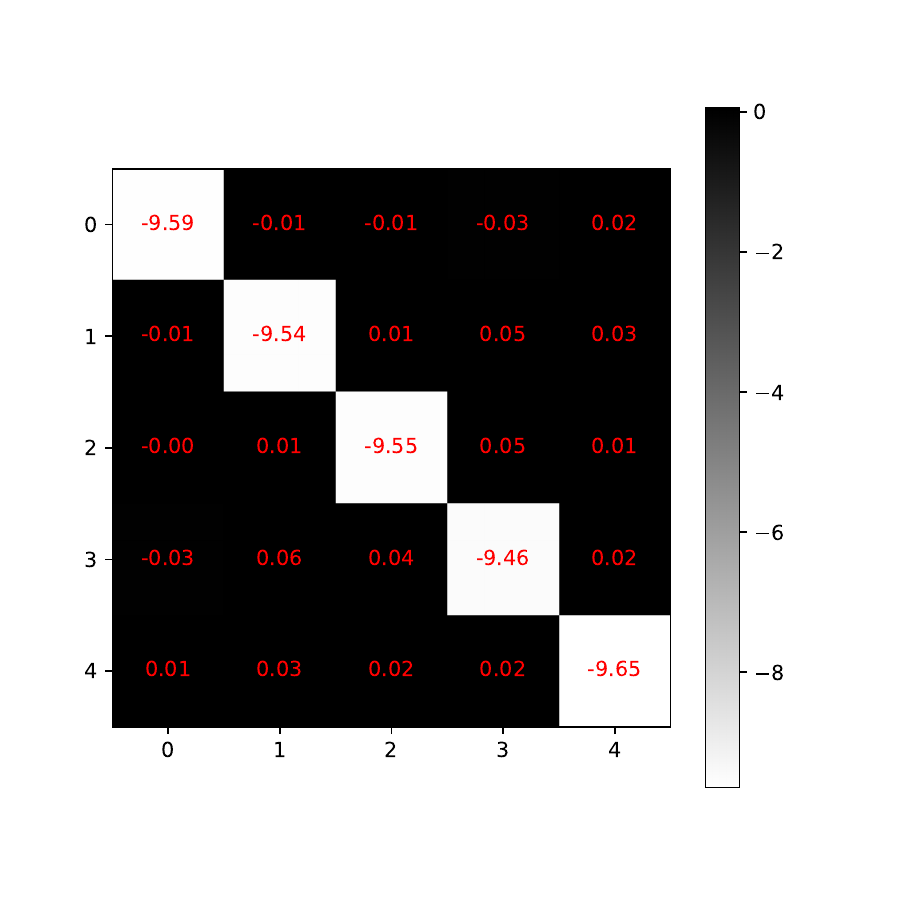}
\caption{Visualization of $\Sigma^{1/2} A_2 \Sigma^{1/2}$} 
\label{fig:A2_PQ_app}
\end{subfigure}  
\caption{Visualization of learned weights  for the setting of \autoref{t:L_layer_P_identity}. One can see that the weight pattern matches the stationary point analyzed in \autoref{t:L_layer_P_identity}.   }
\label{fig:appendix_full}
\end{figure}

We next present some additional experiments that investigates the properties of the learned predictors of various algorithms.  First, we plot  the \textbf{test losses against the number of examples provided in the prompt} (``the number of ICL examples''). We compare four different algorithms: (i) the predictor learned by a three-layered of linear transformer, (ii) three steps of GD, (iii)  three steps of preconditioned GD, and (iv) the ordinary least-squared solution (OLS). For GD and preconditioned GD, the optimal stepsizes are found by gridsearch. For preconditioned GD, preconditioner is fixed to be $\Sigma^{-1}$ for comparison. In all cases, the dimension $d=5$, and for each $N$, the linear Transformer is trained using Adam. The result is presented in \autoref{fig:variable-N-plot}.
\begin{figure}[H]
\centering 
\includegraphics[width=0.35\textwidth]{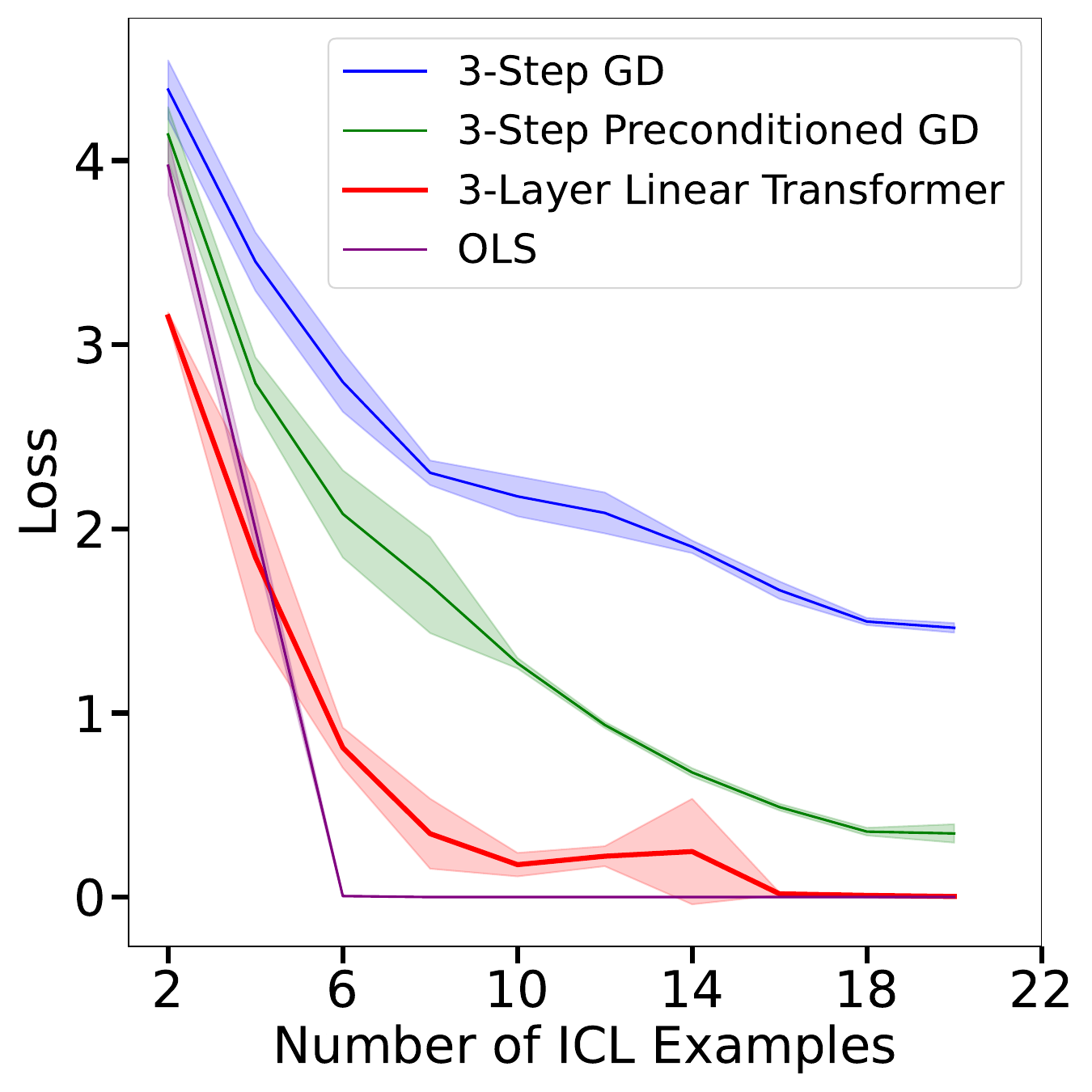}
\caption{Test loss comparison between (i) the predictor learned by a three-layered of linear transformer, (ii) three steps of GD, (iii)  three steps of preconditioned GD, and (iv) the ordinary least-squared solution (OLS).} 
\label{fig:variable-N-plot} 
\end{figure}
Lastly, in \autoref{fig:variable-L-plot},  we plot the \textbf{test losses against the number of layer $L$} (or the number of steps in the case of gradient-based algorithms).
For $L=1,2,3,4$, we compare between (i) the predictor learned by $L$-linear transformer and (i) $L$-steps of GD, (ii) $L$-steps of preconditioned GD. Again, the optimal stepsize is found by gridsearch, and for preconditioned GD, the preconditioner is fixed to be $\Sigma^{-1}$. In all cases, the dimension $d=5$, and context length $N=20$. The linear transformer is trained with Adam.
\begin{figure}[H]
\centering 
\includegraphics[width=0.4\textwidth]{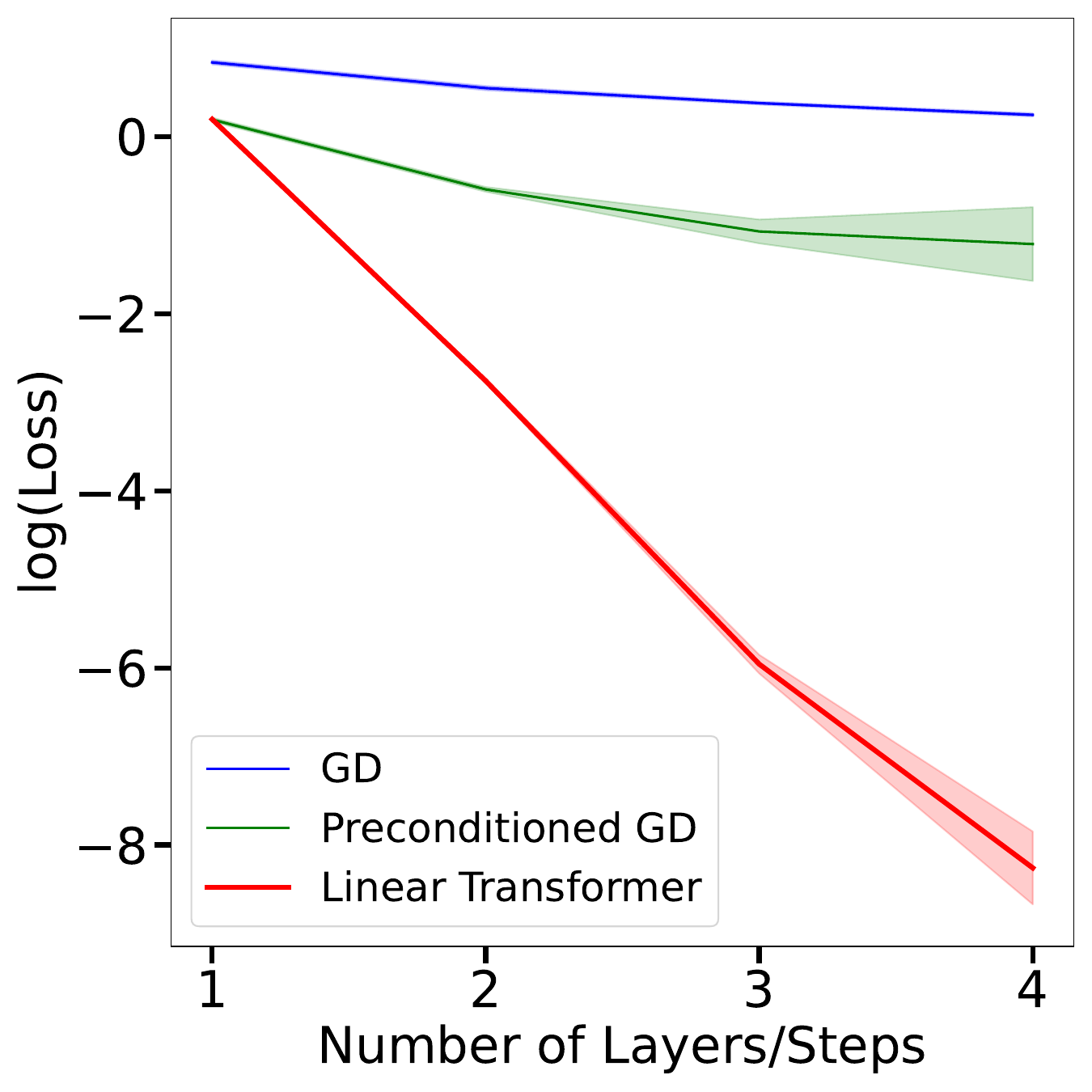}
\caption{Test loss comparison between (i) the predictor learned by a $L$-layered linear transformer and (i) $L$-steps of GD, (ii) $L$-steps of preconditioned GD, for $L=1,2,3,4$. } 
\label{fig:variable-L-plot} 
\end{figure}

\newpage 

\end{document}